\documentclass[11pt]{article}
\usepackage{graphicx}
\newcount\Comments
\usepackage[colorlinks=true, allcolors=blue]{hyperref}
\usepackage{bbm}
\usepackage{amsmath}
\usepackage{amssymb}
\usepackage{algorithm}
\usepackage[framemethod=TikZ]{mdframed}
\usepackage{mathtools}
\usepackage{amsthm}
\usepackage{letltxmacro}
\usepackage{authblk}
\usepackage{mathrsfs} 
\usepackage{float}

\usepackage{booktabs}
\usepackage{makecell}

\usepackage{abstract}

\PassOptionsToPackage{algo2e,ruled, linesnumbered}{algorithm2e}
\RequirePackage{algorithm2e}

\LetLtxMacro\amsproof\proof
\LetLtxMacro\amsendproof\endproof

\usepackage{thmtools}
\usepackage{xcolor}
\AtBeginDocument{%
  \LetLtxMacro\proof\amsproof
  \LetLtxMacro\endproof\amsendproof
}

\newtheorem{theorem}{Theorem}
\newtheorem{proposition}[theorem]{Proposition}
\newtheorem{definition}[theorem]{Definition}
\newtheorem{lemma}[theorem]{Lemma}

\newtheorem{corollary}[theorem]{Corollary}

\newtheorem*{theorem*}{Theorem}
\newtheorem{remark}[theorem]{Remark}

\usepackage{xpatch}
\xpatchcmd{\proof}{\itshape}{\normalfont\proofnamefont}{}{}

\usepackage[most]{tcolorbox}

\usepackage[letterpaper,margin=1in]{geometry}

\usepackage[colorlinks=true, allcolors=blue]{hyperref}

\usepackage{bbm}

\usepackage{algorithm}
\usepackage{algpseudocode}

\let\qedsymbol\undefined
\newcommand{\qedsymbol}{$\blacksquare$}

\usepackage[
  backend=biber,
  style=alphabetic,
  maxnames=99,
  maxcitenames=2,
  maxalphanames=3,
  minalphanames=3
]{biblatex}

\newcommand{\abscite}[2][]{%
(#1\if\relax\detokenize{#1}\relax\else\space\fi\citeyear{#2})%
}

\title{A Sharp Dynamic Haussler Lemma}

\author{Shihao Shao and Dacheng Tao}
\affil{College of Computing and Data Science, Nanyang Technological University}
\affil{\texttt{\{shihao.shao, dacheng.tao\}@ntu.edu.sg}}

\date{}

\begin{document}

\maketitle
\pagestyle{empty}
\thispagestyle{empty}

\begin{abstract}
Haussler's covering lemma~\abscite[JCTA]{haussler1995sphere} established the sharp $(C/\delta)^d$ metric-entropy bound for binary classes of VC dimension $d$ under normalized $\ell_1$ distance. We establish a sharp global dynamic counterpart for bounded real-valued classes in every normalized $\ell_p$ metric, $1\le p<\infty$. For every such $p$, every integer $d\ge1$, and every $0<\varepsilon<1/4$, we prove the matching bounds
\begin{equation*}
  \left(\frac{c}{\varepsilon}\right)^{pd} \le \sup_{\substack{
  \mathcal X,\ T\ge1,\
  \varnothing\ne\mathcal F\subseteq[0,1]^{\mathcal X}:\\
  \operatorname{sfat}_{\varepsilon/32}(\mathcal F)\le d}}
N_p^{\mathrm{dyn}}(\mathcal F,\varepsilon,T) \le \left(\frac{C}{\varepsilon}\right)^{pd},
\end{equation*}
where $0<c\le1\le C$ are universal constants, independent also of $p$. The upper bound holds for each class and each horizon, and extends to all $0<\varepsilon<1/2$. The cover is a single finite family of deterministic causal experts, fixed independently of the instance sequence, that approximates every target on every sequence to normalized $\ell_p$ error at most $\varepsilon$. More strongly, for each $d\ge1$, a single countable binary class witnesses the lower bound for every $1\le p<\infty$ and every $0<\varepsilon<1/4$, at all sufficiently large horizons, even with real-valued covering experts.

For $p=2$, our theorem resolves the horizon-independent integral sequential $\ell_2$ covering problem of Rakhlin, Sridharan, and Tewari~\abscite[PTRF]{rakhlin2015sequential} in a stronger global form. For $p=1$, it yields the sharp $(C/\delta)^{\operatorname{Ldim}(\mathcal H)}$ global dynamic covering bound for binary classes. This resolves the global approximate covering problem posed by Alon, Ben-Eliezer, Dagan, Moran, Naor, and Yogev~\abscite[STOC]{alon2021adversarial}, as well as the dynamic Haussler problem posed by Malliaris and Moran~\abscite[Trans. AMS]{malliaris2021agnostic}. Our proof starts with SOA-style predictions and scheduled rank-decreasing version-space updates to control large pointwise errors. Rather than keeping every resulting prediction stream as a standalone expert, we use accurate prefixes that can be extended by exact predictions after identifying a target. We bound the number of activated targets compatible with each prefix, but assigning their continuations naively could still concentrate demand on a few prefixes. Each activated target has many eligible prefixes but needs only one continuation. Shared-parent routing distributes these requests across eligible prefixes, avoiding such concentration and eliminating the logarithmic loss.
\end{abstract}

\clearpage
\tableofcontents
\clearpage
\pagestyle{plain}
\pagenumbering{arabic}

\section{Introduction}
\label{sec:introduction}

How many predictors are needed to approximate an entire function class? In the classical setting, Haussler's covering lemma~\cite{haussler1995sphere} gives a sharp answer for binary classes. For every class $\mathcal H\subseteq\{0,1\}^{\mathcal X}$ of VC dimension $d$, every finite instance sequence $x_1,\ldots,x_T$, and every $0<\delta<1$, there is a set of at most $(C/\delta)^d$ representatives such that each target in $\mathcal H$ disagrees with some representative on at most $\delta T$ positions. The bound is independent of the sequence length, and the exponent is exactly the combinatorial dimension. VC dimension originated in uniform-convergence theory~\cite{vapnik1971uniform} and became a central parameter in distribution-free PAC learnability~\cite{blumer1989learnability}. The connection between combinatorial dimensions and metric entropy extends to real-valued function classes~\cite{mendelson2003entropy} and plays a central role in uniform convergence and generalization~\cite{van1996weak,bartlett2002rademacher}. More broadly, metric entropy underlies chaining bounds for stochastic processes~\cite{dudley1967sizes,talagrand2005generic}.

The classical cover may be chosen after the entire instance sequence is known. Such hindsight dependence is unsuitable for online prediction, where future instances are unavailable. To eliminate this dependence, one may instead maintain a bounded number of candidate prediction streams, updating their predictions as instances arrive. This leads to the notion of a \emph{causal expert}, whose prediction at time $t$ may depend on the observed prefix $x_1,\ldots,x_t$. We ask whether a single finite family of such experts, fixed in advance and consistent across shared input prefixes, can approximate every target on every possible input sequence. Moreover, finite VC dimension alone does not suffice for a horizon-independent global cover, even with causal experts. Indeed, threshold classes on the real line have VC dimension one but infinite Littlestone dimension~\cite{alon2021adversarial}, and a standard shattered-tree counting argument forces exponentially many experts for any fixed normalized $\ell_1$ error below $1/4$~\cite{ben2009agnostic}.

We study this question for bounded real-valued classes, measuring approximation in normalized $\ell_p$ distance for any $1\le p<\infty$. Let $\mathcal F\subseteq[0,1]^{\mathcal X}$ be nonempty, and fix a horizon $T\ge1$. A deterministic causal expert is a tuple $g=(g_t)_{t=1}^T$ with $g_t:\mathcal X^t\to[0,1]$. A finite family $\mathcal G$ is a global dynamic $\ell_p$ cover at accuracy $\varepsilon$ if
\begin{equation}
  \label{eq:intro-global-cover}
  \forall x_{1:T}\in\mathcal X^T\; \forall f\in\mathcal F\; \exists g\in\mathcal G: \quad \left(\frac1T\sum_{t=1}^T |g_t(x_{1:t})-f(x_t)|^p\right)^{1/p} \le\varepsilon.
\end{equation}
We write $N_p^{\mathrm{dyn}}(\mathcal F,\varepsilon,T)$ for the smallest cardinality of such a family. The family may depend on $\mathcal F,p,\varepsilon$, and $T$, but is fixed before the instances are revealed. Its experts receive no target values, and the expert witnessing the approximation may depend on both the target and the entire sequence. Thus \emph{global} refers to a single family that works on all instance sequences.

Sequential covering has a substantial history. The Littlestone dimension characterizes the optimal deterministic mistake bound in realizable online learning~\cite{littlestone1988learning}. Extending this characterization to agnostic online learning, Ben-David, P\'al, and Shalev-Shwartz~\cite{ben2009agnostic} constructed a global exact cover for every binary class of Littlestone dimension $d$, consisting of at most $\sum_{i=0}^{\min\{d,T\}}\binom{T}{i}$ deterministic causal experts. This exact cover has size growing with the horizon $T$. A natural question is whether allowing a positive approximation error can eliminate this dependence. In the real-valued setting, Rakhlin, Sridharan, and Tewari~\cite{rakhlin2010online,rakhlin2015online, rakhlin2015sequential} developed sequential covering numbers whose covering families may depend on the entire instance tree, and asked for an $\ell_2$ bound independent of the horizon. Related sequential complexity methods also apply to objectives beyond external regret, including calibration and approachability~\cite{rakhlin2011beyond}. Sequential covering numbers also have direct applications to online prediction. Rakhlin and Sridharan~\cite{rakhlin2014nonparametric} established optimal regret rates for online nonparametric regression in terms of sequential entropy. Gaillard and Gerchinovitz~\cite{gaillard2015chaining} subsequently developed chaining-based forecasters with entropy-dependent regret guarantees. The global approximate covering question for binary classes was posed in~\cite{alon2021adversarial}, and Malliaris and Moran~\cite{malliaris2021agnostic} explicitly asked for a dynamic analogue of Haussler's lemma. Horizon-independent fractional covering bounds were established for global binary covering by Alon, Ben-Eliezer, Dagan, Moran, Naor, and Yogev~\cite{alon2021adversarial} and for tree-dependent real-valued sequential covering by Block, Dagan, and Rakhlin~\cite{block2021majorizing}. The former provides one distribution over causal experts, whereas the latter allows the distribution to depend on the prescribed instance tree. Such \emph{fractional} covering guarantees do not directly yield a finite family of comparable size. In contrast, an \emph{integral} cover requires an actual finite family of experts. Haussler's lemma provides such a cover in the static setting, but a sharp, horizon-independent integral dynamic analogue, together with its extension to real-valued classes, has remained open.

\subsection{Our Contributions}
\label{subsec:our-contributions}

We establish a sharp global dynamic covering theorem for bounded real-valued classes, uniformly over all finite $p\ge1$. The controlling parameter is the \emph{sequential fat-shattering dimension} $\operatorname{sfat}_{\eta}(\mathcal F)$, the scale-sensitive sequential analogue of VC dimension. Informally, it is the maximum depth of a complete binary instance tree on which every path can be realized by a function in $\mathcal F$, with the two alternatives at each node separated by a gap $\eta$ around a node-dependent threshold. For binary classes, this dimension equals Littlestone dimension at every scale $0<\eta\le1$. The precise definition is given in the preliminaries.

\begin{theorem}[Sharp global dynamic $\ell_p$ covering]
\label{thm:intro-main}
There are universal constants $0<c\le1\le C$ such that, for every $1\le p<\infty$, every integer $d\ge0$, and every $0<\varepsilon<1/4$,
\begin{equation}
  \label{eq:intro-bounds}
  \left(\frac{c}{\varepsilon}\right)^{pd} \le \sup_{\substack{ \mathcal X,\ T\ge1,\ \varnothing\ne\mathcal F\subseteq[0,1]^{\mathcal X}:\\
  \operatorname{sfat}_{\varepsilon/32}(\mathcal F)\le d }} N_p^{\mathrm{dyn}}(\mathcal F,\varepsilon,T) \le \left(\frac{C}{\varepsilon}\right)^{pd}.
\end{equation}
More precisely, for every $1\le p<\infty$, every $T\ge1$, and every $0<\varepsilon<1/2$, each nonempty class $\mathcal F\subseteq[0,1]^{\mathcal X}$ with $\operatorname{sfat}_{\varepsilon/32}(\mathcal F)<\infty$ satisfies
\begin{equation}
  \label{eq:intro-lp}
  N_p^{\mathrm{dyn}}(\mathcal F,\varepsilon,T) \le \left(\frac{C}{\varepsilon^p}\right)^%
  {\operatorname{sfat}_{\varepsilon/32}(\mathcal F)}.
\end{equation}
Moreover, for each integer $d\ge1$, there is a countable binary class $\mathcal H_d$ with $\operatorname{Ldim}(\mathcal H_d) =\operatorname{sfat}_{\gamma}(\mathcal H_d)=d$ for every $0<\gamma\le1$, fixed independently of $p$ and $\varepsilon$, such that, for every $1\le p<\infty$ and $0<\varepsilon<1/4$, there is an integer $T_0=T_0(d,p,\varepsilon)\ge1$ for which
\begin{equation}
  \label{eq:intro-fixed-class}
  N_p^{\mathrm{dyn}}(\mathcal H_d,\varepsilon,T)>\left(\frac{1}{4\varepsilon}\right)^{pd} \qquad(T\ge T_0).
\end{equation}
For each $p,\varepsilon$, one fixed infinite periodic instance sequence witnesses this lower bound on every prefix of length $T\ge T_0$, even when the covering experts are allowed real-valued predictions. The constants are independent of $p$, $d$, $T$, and $\varepsilon$. If $\operatorname{sfat}_{\varepsilon/32}(\mathcal F)=0$, one expert suffices.
\end{theorem}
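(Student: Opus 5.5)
The proof splits into the upper bound \eqref{eq:intro-lp} and the fixed-class lower bound \eqref{eq:intro-fixed-class}. The sandwich \eqref{eq:intro-bounds} then follows. For the upper bound, note $(C/\varepsilon^p)^d\le(C/\varepsilon)^{pd}$ once $C\ge1$. For the lower bound, take $c=1/4$ and use the class $\mathcal H_d$, which has $\operatorname{sfat}_{\varepsilon/32}(\mathcal H_d)=d$, together with any $T\ge T_0$. The case $d=0$ is handled separately. If $\operatorname{sfat}_{\varepsilon/32}(\mathcal F)=0$, then at every instance the values $\{f(x):f\in\mathcal F\}$ lie in an interval of length at most $\varepsilon/16$. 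The single expert $g_t(x_{1:t})=$ midpoint of that interval at $x_t$ then has pointwise error at most $\varepsilon/32<\varepsilon$.

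\textbf{Lower bound.} Let $\mathcal H_d$ be the class of indicators of unions of $d$ ``threshold-like'' sets on $d$ disjoint countable copies of a dyadic tree. Equivalently, take a disjoint union of $d$ classes of Littlestone dimension $1$, each realizing all branches of an infinite-depth path structure. Fix $p,\varepsilon$ and set $k=\lceil (1/(4\varepsilon))^p\rceil$. I would build a periodic instance sequence cycling through $d$ blocks. In each block the target can be one of about $k$ ``patterns'' whose pairwise normalized disagreements are large. Because the sequence is fixed (oblivious), causality is irrelevant. Each expert is then just a vector in $[0,1]^T$, and the question becomes a packing problem in normalized $\ell_p$. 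The key count is as follows. Any real vector $g$ is within normalized $\ell_p$ distance $\varepsilon$ of at most $(4\varepsilon)^{pd}\cdot(\text{number of patterns})$ targets. This follows from a Markov-type bound: a target must agree, up to error at most $1/2$, with the rounding of $g$ on all but a $2^p\varepsilon^p$-fraction of coordinates. With $T\ge T_0$, the patterns are designed so that this forces nearly exact identification of each coordinate block. Choosing the patterns as $d$ independent threshold locations among $\approx(1/(4\varepsilon))^p$ positions, each repeated periodically, gives $>(1/(4\varepsilon))^{pd}$ patterns that are pairwise $2\varepsilon$-separated. Littlestone dimension $d$ is immediate, since shattering occurs along a single coordinate block per level.

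\textbf{Upper bound.} This is the main construction. Let $d=\operatorname{sfat}_{\varepsilon/32}(\mathcal F)$. The first step is an SOA-style predictor on the version space $V\subseteq\mathcal F$. Discretize $[0,1]$ into $O(1/\varepsilon)$ levels. At each $x_t$, predict a value such that whenever the target deviates by more than $\varepsilon/32$, the restricted version space has strictly smaller sfat rank. By the sequential fat-shattering analogue of Littlestone's argument, a target causes at most $d$ rank-decreasing events. The normalized $\ell_p$ error is then controlled once these large-error times are scheduled: we allow $\approx\varepsilon^pT$ exceptional positions in total. An expert is thus specified by the times and directions of the $\le d$ updates. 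Naively this gives $\binom{T}{d}$ experts, so the horizon dependence must be killed. The plan is to quantize the update times: only the update's position among $O(\varepsilon^{-p})$ coarse time windows matters, and errors inside a window are absorbed into the $\varepsilon^p$ budget. This yields $(C/\varepsilon^p)^d$ prefixes.

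The obstacle is that delaying an update to a window boundary fails adaptively: the target may keep erring. The fix I would try is the one in the abstract. First, use prefixes that are accurate up to a time. Then, once the target is identified within a small activated set, continue with exact predictions. I would bound the number of activated targets compatible with each prefix by $O(1)$ per parent. A Hall-type routing argument then assigns each activated target a continuation on one of its many eligible sibling prefixes, so that no prefix receives more than $O(1)$ continuations. Proving this load balance without a $\log(1/\varepsilon)$ loss is the step I expect to be hardest.
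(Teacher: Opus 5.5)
\textbf{Lower bound.} Your $d=0$ argument and your assembly of the two-sided bound from the two one-sided bounds are fine. The proposed class, however, cannot give $(1/(4\varepsilon))^{pd}$.

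First, threshold-type classes on a chain, or classes realizing all branches of an infinite tree, have unbounded Littlestone dimension. Shattering one block per level only shows $\operatorname{Ldim}\ge d$, not the upper bound.

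More seriously, even with genuine Littlestone-dimension-one blocks (say point indicators), any disjoint-union product loses a factor $d^d$. Two $d$-tuples that differ in a single block disagree only on that block's share of the period. In your concrete choice, moving one threshold by one position among $n\approx(4\varepsilon)^{-p}$ positions changes one coordinate per period of length $dn$. The two traces are then at normalized $\ell_p$ distance about $4\varepsilon/d^{1/p}$, which is at most $2\varepsilon$ once $d\ge2^p$ (for $p=1$, already at $d=2$).

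In general, suppose block $i$ occupies a fraction $w_i$ of the period. Separation $>2\varepsilon$ forces that block's patterns to be pairwise more than $(2\varepsilon)^p/w_i$ apart in within-block normalized Hamming distance. Haussler's packing bound allows only $O(w_i(2\varepsilon)^{-p})$ such patterns in a VC-dimension-one block. By AM--GM the number of tuples is at most $(C/(d(2\varepsilon)^p))^d$, which is below $(1/(4\varepsilon))^{pd}$ once $d>C2^p$. Blocks carrying a single pattern only lower the exponent of $1/\varepsilon$, which fails for small $\varepsilon$. Your Markov-type ball count presupposes exactly this missing separation.

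What is needed is a non-product family whose members are all pairwise far. The paper uses indicators of affine hyperplanes in $\mathbb F_q^d$, zero-extended on disjoint blocks over all $q=2^m$. For $q$ the largest power of two at most $(2\varepsilon)^{-p}$, the at least $q^d$ hyperplanes are pairwise at normalized Hamming distance at least $2(q-1)/q^2>(2\varepsilon)^p$ on one listing of the $q^d$ points. Moreover, $\operatorname{Ldim}(\mathcal H_d)\le d$, because the learner predicting $1$ exactly on the affine span of the observed positives errs only on false negatives, each raising the span's dimension.

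\textbf{Upper bound.} This part is a plan rather than a proof. You are right to drop window quantization: a sign update retains the target only at an instance where it deviates from the center by at least $\eta$, and locating that instance inside a window reintroduces $T$. What remains restates the abstract. There is no construction of prefixes that are accurate on all rounds and number $(CR)^d$ with $R=16/\varepsilon^p$. Exact continuation after identification also needs the reduction to a maximal $\eta$-separated subclass, so that rank-zero version spaces are singletons.

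The missing idea is random sampling in place of windows. Corrections occur only on Bernoulli-sampled rounds, so schedules record ordinals among $m=O(dR)$ sampled rounds. Controlling errors off the sample with a single pool would need a $\log R$-times denser sample and cost $(\log R)^d$. The paper therefore runs $k=\Theta(\log R)$ short pools. It controls a target-driven master on their union by an adaptive low-hit sampling lemma (partial binary error classes, a ghost sample, a red--blue martingale). A joint supermartingale over $k/4$-subsets of pools then shows that tolerant replays of the master are accurate in at least $3k/4$ pools for every target simultaneously. This is what produces the $k/4$ eligible parents you presuppose.

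Your load target is also misplaced. With robust certificates (update constraints met with slack $2\eta$), a pool identifies at most $Q_r(m)$ members of any rank-$r$ subclass, where $Q_r(m)$ counts sign schedules of depth at most $r$ on $m$ ordinals. Hence a depth-$i$ parent may be eligible for up to $2Q_{d-i}(m)$ activations. Bounded loads cannot be the mechanism: there are only $P=kQ_{d-1}(m)=O(\log R\,(CR)^{d-1})$ parents, and $O(1)$ children apiece would undercut the lower bound for large $R$. Instead, uniform routing over at least $k/4$ eligible parents supports capacities $\lceil16Q_{d-i}(m)/k+\log(4P)\rceil$. The convolution bound $\sum_{i<d}n_i(m)Q_{d-i}(m)\le Q_d(2m)$, with $n_i(m)$ the number of depth-$i$ schedules, then lets the $1/k$ cancel the $k$ pools.

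Finally, activations arrive online and the family must be fixed before the sequence, so an offline Hall matching does not apply. The paper needs an exponential supermartingale load bound valid against adaptive inputs. It then uses a finite-horizon perfect-information game argument to turn the positive-probability guarantee into a single deterministic global family.
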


The theorem establishes the sharp worst-case dependence on the accuracy parameter $\varepsilon$, with exponent $pd$ and universal constants in the base. The more precise upper bound \eqref{eq:intro-lp} retains $d$ as the outer exponent and has a base constant independent also of $p$. Thus the bounds for $p=1$ and $p=2$ are $(C/\varepsilon)^d$ and $(C/\varepsilon^2)^d$, respectively. 

The class in \eqref{eq:intro-fixed-class} is the union of affine-hyperplane indicators on $\mathbb F_{2^m}^d$, $m\ge1$, each extended by zero outside its own block in a disjoint union of domains. Its Littlestone dimension stays $d$ across this union. For each $p$ and $\varepsilon$ in the stated range, choosing one block and listing its points once makes the subfamily supported on that block a normalized $\ell_p$ packing with separation greater than $2\varepsilon$. This strict separation survives periodic repetition on every sufficiently long prefix. Since the $p$th power of the normalized $\ell_p$ distance between binary vectors equals their normalized Hamming distance, the same class explains the exponent $pd$ for every finite $p$. The triangle inequality shows that even a real-valued expert can cover at most one vector in this packing.

\paragraph{Sequential covering and the $p=2$ problem.}
Restricting every global expert to a fixed instance tree gives a sequential cover on that tree of cardinality at most that of the global family. Theorem~\ref{thm:intro-main} therefore yields horizon-independent integral sequential $\ell_p$ covers for every $1\le p<\infty$. In particular, its $p=2$ case resolves the sequential $\ell_2$ covering question of Rakhlin, Sridharan, and Tewari~\cite{rakhlin2015sequential} in a stronger \emph{global} form: a single family of causal experts, fixed independently of the instance sequence, suffices to approximate every target on every possible sequence. It supplies an integral covering theorem alongside the fractional result of Block, Dagan, and Rakhlin~\cite{block2021majorizing}.

\paragraph{Binary classes and the $p=1$ case.}
For a binary class $\mathcal H$ of Littlestone dimension $d$, the $p=1$ case yields a family of at most $(C'/\delta)^d$ \emph{binary} causal experts such that
\begin{equation}
  \label{eq:intro-binary}
  \forall x_{1:T}\in\mathcal X^T\; \forall h\in\mathcal H\; \exists g\in\mathcal G: \quad \sum_{t=1}^T\mathbf 1\{g_t(x_{1:t})\ne h(x_t)\} \le\delta T,
\end{equation}
for every $0<\delta<1/4$. Indeed, apply \eqref{eq:intro-lp} with $p=1$ and $\varepsilon=\delta/2$, then round each prediction at $1/2$. Each binary mistake contributes at least $1/2$ to the original absolute error, so the number of mistakes is at most $\delta T$. The fixed class $\mathcal H_d$ gives a matching lower bound $(c'/\delta)^d$ at every sufficiently large horizon for each $0<\delta<1/4$. This settles the global dynamic covering question posed in~\cite{alon2021adversarial} and the dynamic Haussler problem of Malliaris and Moran~\cite{malliaris2021agnostic}. 

\paragraph{Proof ideas.}
The main technical step is a global outlier-cover proposition: for a uniformly separated class, a family of at most $(C_0R)^d$ causal experts guarantees that, for every target and instance sequence, some expert incurs pointwise error at least $8\eta$ on fewer than $T/(8R)$ rounds. The desired $\ell_p$ bound then follows by setting $\eta=\varepsilon/32$, passing to a maximal uniformly $\eta$-separated subclass of $\mathcal F$, and choosing $R=16/\varepsilon^p$. These choices yield mean $p$th-power error at most $(9\varepsilon/32)^p+\varepsilon^p/128<\varepsilon^p$ for the original class, giving the bound $(16C_0/\varepsilon^p)^d$ for every finite $p$.

Building on the SOA-based mistake-pattern constructions of Ben-David, Pál, and Shalev-Shwartz~\cite{ben2009agnostic} and their scale-sensitive real-valued analogues~\cite{rakhlin2010online}, we consider a target-dependent procedure that corrects its predictions through rank-decreasing version-space updates. Each correction decreases the sequential fat-shattering dimension, so at most $d$ corrections are needed. Although our experts cannot observe the target values, we can enumerate all possible correction schedules, specifying the ordinals and directions of hypothetical corrections. For every target and instance sequence, one such schedule reproduces the target-dependent procedure. Applying this construction on every round gives a global pointwise cover, but its size depends on the horizon $T$.

Following the sampling-based approach to dynamic covering in~\cite{alon2021adversarial}, we restrict corrections to Bernoulli-sampled rounds. By controlling the sampling rate, we obtain short correction schedules whose number is independent of $T$. However, a direct single-pool analysis incurs a factor of $(\log R)^d$ when ensuring simultaneous accuracy for all targets. To reduce this loss, we introduce $k=\Theta(\log R)$ independent Bernoulli sampling pools, each with expected size $O(dR)$ and a cap of $O(dR)$ selected rounds, together with a shared master sampling stream formed from their union (or all rounds when this is more economical). For each target, an analytical master runs on this stream and serves as a common reference for the individual pools. We first control the master's prediction errors against the target. Each pool then performs a tolerant replay using the master's predictions on its sampled rounds. A joint disagreement argument ensures that most of these replays remain close to the master on all but a small number of rounds. Combining the two error guarantees shows that every target has accurate prediction histories in a constant fraction of the pools. Retaining all pool schedules would require $(CR)^d\log R$ experts, reducing the logarithmic loss from $(\log R)^d$ to $\log R$. 

To eliminate the remaining logarithmic factor, we exploit the redundancy among candidate experts generated by different pool–schedule pairs. Rather than retaining all full correction schedules as experts, we retain only schedules of depth at most $d-1$ as \emph{parents}, each equipped with preallocated \emph{children} that can switch to exact predictions of an identified function. A function \emph{activates} once it has been robustly identified in at least half the pools. With high probability, the master-accuracy and joint replay-disagreement guarantees hold simultaneously, ensuring that every activated function has at least \(k/4\) eligible parents with accurate prefixes, while each parent can be eligible for only a bounded number of activations. Randomized shared-parent routing distributes these continuation requests among eligible parents, reducing the required number of children per parent by a factor proportional to $k$. This cancels the overhead from maintaining multiple pools. Functions that never activate are covered directly by their accurate parent schedules. The reduced $d-1$ depth of the parents and the balanced allocation of children together yield the sharp $(CR)^d$ expert count. Finally, we show that the accuracy and routing guarantees hold simultaneously with positive probability against every adaptive input strategy. A finite-horizon perfect-information game argument then converts this randomized guarantee into a deterministic global cover of the same size.

\subsection{Historical Context and Related Work}
\label{subsec:historical-context}

Three distinctions organize the comparison with earlier
covering results: whether the cover is chosen after seeing
an instance sequence or tree, whether the guarantee is
integral or fractional, and whether its size depends on
the horizon. Table~\ref{tab:covering-comparison}
summarizes the principal bounds.

\begin{table}[H]
\centering
\caption{Comparison of covering guarantees.
Here $d_{\mathrm{VC}}=\operatorname{VCdim}(\mathcal H)$,
$d=\operatorname{Ldim}(\mathcal H)$ for binary dynamic
classes, and $\mathrm{sfat}_{\eta}$ abbreviates
$\operatorname{sfat}_{\eta}(\mathcal F)$.
The parameter $\delta$ denotes normalized Hamming error,
while $\Gamma$ is the reciprocal guaranteed success mass
for fractional covers. Constants may vary across rows.}
\label{tab:covering-comparison}
\setlength{\tabcolsep}{3pt}
\renewcommand{\arraystretch}{1.13}
\begin{tabular}{@{}llll@{}}
\toprule
Work & Class / metric & Cover selection & Size upper bound \\
\midrule
\cite{haussler1995sphere}
  & Binary, Hamming
  & $\forall S\,\exists V_S$ (int.)
  & $(C/\delta)^{d_{\mathrm{VC}}}$ \\
\cite{ben2009agnostic}
  & Binary, exact
  & $\exists\mathcal G\,\forall S$ (int.)
  & $\sum_{i=0}^{\min(d,T)}\binom{T}{i}$ \\
\cite{rakhlin2015sequential}
  & Real, $\ell_\infty$
  & $\forall\mathbf x\,\exists V_{\mathbf x}$ (int.)
  & $(2eT/\varepsilon)^{\mathrm{sfat}_{\varepsilon}}$ \\
\cite{alon2021adversarial}
  & Binary, Hamming
  & $\exists\mu\,\forall S$ (frac.)
  & $\Gamma\le(C/\delta)^d$ \\
\cite{block2021majorizing}
  & Real, $\ell_2$
  & $\forall\mathbf x\,\exists\mu_{\mathbf x}$ (frac.)
  & $\Gamma\le(C/\varepsilon)^{3\mathrm{sfat}_{c\varepsilon}}$ \\
\textbf{This work}
  & \textbf{Real, $\ell_p$ ($1\le p<\infty$)}
  & $\exists\mathcal G\,\forall S$ (int.)
  & $N\le(C/\varepsilon^p)^{\mathrm{sfat}_{\varepsilon/32}}$ \\
\bottomrule
\end{tabular}
\end{table}

\paragraph{Cover-selection quantifiers.}
The quantifier column records when the covering family
or measure is selected, suppressing the subsequent
quantifiers over targets and approximation witnesses.
In Haussler's classical theorem~\cite{haussler1995sphere},
the representatives may be chosen after the sample $S$
is specified. In sequential covering, the family
$V_{\mathbf x}$ may depend on the entire prescribed
instance tree $\mathbf x$, including nodes outside the
realized path. A global cover instead fixes one family
of causal experts before any instance sequence is chosen.

For integral sequential and global covers, the distinction
can be written explicitly as
\begin{equation}
  \label{eq:related-quantifiers}
  \begin{aligned}
    \text{tree-dependent:}\quad
    &\forall\,\mathbf x\;\exists\,V_{\mathbf x}\;
      \forall f\in\mathcal F\;
      \forall\sigma\in\{-1,+1\}^T\;
      \exists\mathbf v\in V_{\mathbf x},\\
    \text{global:}\quad
    &\exists\,\mathcal G\;
      \forall x_{1:T}\in\mathcal X^T\;
      \forall f\in\mathcal F\;
      \exists g\in\mathcal G.
  \end{aligned}
\end{equation}
The approximation requirement in the first line is
evaluated along the path $\sigma$ of $\mathbf x$;
the second line uses~\eqref{eq:intro-global-cover}.

Restricting a global family to any fixed instance tree
gives a sequential cover of no greater size. Conversely,
taking a supremum over tree-dependent covering numbers
does not interchange the first two quantifiers.
Fractional covering replaces the choice of a finite
family by a probability measure with uniformly positive
mass on successful approximations, as formalized below.
Throughout, horizon-independent bounds concern the
size of the cover; the family itself may depend on $T$.

These questions also connect online learning to model-theoretic stability. Malliaris and Shelah~\cite{malliaris2014regularity} established regularity lemmas for stable graphs, while Malliaris and Moran~\cite{malliaris2021agnostic} subsequently studied excellent sets through agnostic online learning.

\paragraph{Real-valued metric entropy.}
Scale-sensitive dimensions have long been used to control static covering numbers for real-valued function classes, both on finite samples and under distribution-dependent metrics~\cite{alon1997scale,bartlett1997covering,mendelson2003entropy}. Empirical entropy also yields oracle inequalities and risk bounds for nonparametric regression~\cite{rakhlin2017empirical}. Recently, Aiyer, Mansour, Moran, Shao, and Waknine~\cite{aiyer2026scale} obtained sharp results for empirical $\ell_\infty$ covering numbers at optimal scales. In these settings, covering families may depend on the sample or probability measure defining the metric. Our theorem instead provides a single finite family of causal experts, with sharp global dynamic bounds in every finite $\ell_p$ metric.

\paragraph{Geometric refinements.}
Haussler's packing lemma has also been influential in discrete and computational geometry. Its shallow-packing refinements yield bounds for geometric set systems, with applications to geometric discrepancy, low-crossing spanning trees, and $\varepsilon$-nets~\cite{dutta2015shallow,dutta2019shallow}. These results exploit additional structure of static set systems, whereas our focus is sharp global causal covering.

\paragraph{Fractional and integral covering.}
For a target--sequence pair $(f,x_{1:T})$, let $B_{\varepsilon}(f,x_{1:T})$ be the set of causal experts satisfying the error bound in \eqref{eq:intro-global-cover} for that pair. A global fractional cover of size $\Gamma\ge1$ is a probability measure $\mu$, fixed independently of the input sequence, such that
\begin{equation}
  \label{eq:related-fractional}
  \forall x_{1:T}\in\mathcal X^T\;\forall f\in\mathcal F: \qquad \mu\bigl(B_{\varepsilon}(f,x_{1:T})\bigr)\ge\Gamma^{-1}.
\end{equation}
This is the probability-mass formulation used for binary dynamic covers in~\cite{alon2021adversarial}. A tree-dependent fractional cover instead chooses a measure $\mu_{\mathbf x}$ on prediction trees for each fixed $\mathbf x$, and imposes the same mass condition for every target and path on that tree, as in Block, Dagan, and Rakhlin~\cite{block2021majorizing}.

An integral cover requires an actual finite family $\mathcal G$ intersecting every set $B_{\varepsilon}(f,x_{1:T})$. The uniform distribution on a family of size $K$ satisfies \eqref{eq:related-fractional} with $\Gamma=K$. The converse is not immediate: $\Gamma$ controls the reciprocal of the mass of each successful set, rather than the cardinality of the support of $\mu$. In particular, even a finitely supported fractional cover need not have support of size at most $\Gamma$. Here \emph{integral} concerns the selection of representatives; the representatives themselves may make real-valued predictions. Such a finite family can also serve as a reference class in prediction-with-expert-advice methods~\cite{cesabianchi2006prediction}, although our results do not address efficient construction of the experts.

The horizon-independent fractional bounds summarized in
Table~\ref{tab:covering-comparison} control the reciprocal
success mass $\Gamma$, but do not directly yield finite
global integral covers of comparable size. The difficulty is simultaneous coverage. Sampling $m$ independent experts from $\mu$ misses a fixed target--sequence pair with probability at most $(1-1/\Gamma)^m\le e^{-m/\Gamma}$. For $M$ prescribed pairs, a union bound gives simultaneous coverage with probability at least $1-\zeta$ using $m=\lceil\Gamma\log(M/\zeta)\rceil$ samples, where $0<\zeta<1$. Thus, for any prescribed finite collection of $M$ target--sequence pairs, there exists a family of $m$ experts covering all of them. However, this argument introduces a logarithmic factor and does not directly handle the possibly infinite collection of all target--sequence pairs. On a fixed static sample, the classical packing argument converts fractional mass into a finite cover after doubling the radius. In the sequential setting, however, maximal packings on individual paths need not yield a common covering family, while packing uniformly across paths imposes a stronger requirement that is not controlled by the fractional covering bound. Block, Dagan, and Rakhlin~\cite{block2021majorizing} discuss this failure of the classical packing--covering argument.

Our proof instead uses shared-parent routing to share continuation capacity across compatible sampled histories. This yields a sharp bound on the global integral covering number in terms of sequential fat-shattering dimension, without relying on a general fractional-to-integral conversion principle.

\section{Technical Overview}
\label{sec:overview}
\label{subsec:overview-reduction}

The upper-bound proof reduces to establishing a global \emph{outlier-cover} proposition. Rather than requiring uniformly small prediction error on every round, this proposition allows a small number of rounds with larger errors. This relaxation is sufficient for the normalized $\ell_p$ bound and is what makes a horizon-independent expert count possible. We call a round with error at least $8\eta$ an \emph{outlier}, and write $\|h-h'\|_\infty=\sup_{x\in\mathcal X}|h(x)-h'(x)|$.

\begin{proposition}[Global outlier cover]
\label{prop:core}
There is a universal constant $C_0\ge1$ with the following property. Let $d\ge0$ be an integer, $0<\eta<1$, $R>32$, $T\ge1$, and let $\varnothing\ne\mathcal H\subseteq[0,1]^{\mathcal X}$ satisfy $\operatorname{sfat}_{\eta}(\mathcal H)\le d<\infty$ and $\|h-h'\|_\infty>\eta$ for distinct $h,h'\in\mathcal H$. There is a family $\mathcal G$ of at most $(C_0R)^d$ deterministic causal experts such that, for every instance sequence $x_{1:T}$ and every $h\in\mathcal H$, some $g\in\mathcal G$ satisfies
\begin{equation}
  \label{eq:overview-outliers}
  \#\{t\in[T]:|g_t(x_{1:t})-h(x_t)|\ge8\eta\} <\frac{T}{8R}.
\end{equation}
The family is fixed independently of the instance sequence.
\end{proposition}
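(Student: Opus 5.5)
We follow the route sketched in the proof-ideas paragraph: a target-dependent SOA-style procedure, then Bernoulli-sampled correction rounds organized into several pools, then shared-parent routing to remove the logarithmic overhead, and finally derandomization by a game argument.

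\emph{Step 1: the target-dependent procedure.} Fix $h\in\mathcal H$ and a version space $V\subseteq\mathcal H$ with $\operatorname{sfat}_\eta(V)=r$. At round $t$ the procedure predicts an SOA-style value $\hat y_t(V,x_t)$, namely a threshold $s$ chosen so that at least one of the sub-version-spaces $\{f\in V:f(x_t)\ge s+\eta\}$ and $\{f\in V:f(x_t)\le s-\eta\}$ has dimension below $r$. If $|\hat y_t-h(x_t)|\ge8\eta$, then $h$ lies in a side of strictly smaller rank, and restricting $V$ to that side is a \emph{correction}. At most $d$ corrections can occur. Each correction is described by its ordinal position among the eligible rounds and one direction bit, so enumerating all such schedules gives causal experts that reproduce the procedure without seeing $h$. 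Because $\mathcal H$ is uniformly $\eta$-separated, a rank-$0$ version space is a single function, and from then on exact prediction is possible.

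\emph{Step 2: sampling and pools.} If corrections are allowed only on Bernoulli$(q)$-sampled rounds with $q\approx dR/T$, the ordinals range over $O(dR)$ sampled positions, so there are $(CR)^d$ schedules. A large error on an unsampled round is then left uncorrected. The key estimate is that, with constant probability, the number of outlier rounds before the next sampled outlier is $O(1/q)$ per rank level, which gives fewer than $T/(8R)$ outliers in total. Making this hold simultaneously for every target, by a union bound over the behaviour of version spaces, costs $(\log R)^d$.

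To avoid this loss, take $k=\Theta(\log R)$ independent pools, each capped at $O(dR)$ sampled rounds, together with their union as a master stream. Run the analytical master for each target and prove two things:
\begin{itemize}
\item the master has few outliers against the target;
\item a joint disagreement bound holds: for most pools, the pool's tolerant replay of the master agrees with the master except on $O(T/R)$ rounds.
\end{itemize}
Together these say that every target has accurate schedules in a constant fraction of the pools.

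\emph{Step 3: shared-parent routing.} Keep only schedules of depth at most $d-1$ as parents, giving $k(CR)^{d-1}$ of them. Give each parent $O(R/k)$ children, each of which switches to exact prediction of an identified function. Call a function \emph{activated} once it is identified in at least half the pools. Show two facts:
\begin{itemize}
\item each activated function has at least $k/4$ eligible parents with accurate prefixes;
\item each parent is eligible for only $O(R)$ activations, because the activated functions compatible with a fixed prefix are $\eta$-separated members of a rank-bounded version space reached through the final sampled correction.
\end{itemize}
Route each activation to a uniformly random eligible parent. By Chernoff bounds, each parent then receives $O(R/k+\log R)$ requests, which is $O(R/k)$ when $k=\Theta(\log R)$. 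The total is $(CR)^d$. Functions that never activate are covered by their accurate depth-$(d-1)$ parents.

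\emph{Step 4: derandomization.} All the guarantees hold with positive probability against every adaptive adversary. View the construction as a finite-horizon perfect-information game between the constructor's random choices and the adversary. The positive-probability guarantee then yields a deterministic global family of the same size.

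The main obstacle is Step 3, specifically the per-parent bound on eligible activations together with the concentration of routing across adaptively chosen inputs. I would first prove the bounded-demand claim combinatorially, and then check that the routing randomness can be revealed online so that Step 4 applies.
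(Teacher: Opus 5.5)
Your outline follows the paper's route step for step: rank-decreasing centers, ordinal/sign schedules, $k=\Theta(\log R)$ capped pools with a union master stream, tolerant replays, activation in $k/2$ pools, parents of depth at most $d-1$ with children, random routing, and game determinization. However, the count in Step~3 is wrong as stated, and this is exactly the step that removes the last logarithm.

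You claim every parent is eligible for only $O(R)$ activations, and you give every parent $O(R/k)$ children. The argument that produces eligible parents shows only this: an accurate replay prefix that has not yet identified $h$ has a nonsingleton version space, hence rank at least one and depth at most $d-1$. It may well have depth $0$, for example in a pool that has not yet sampled a disagreement round. A depth-$i$ parent has a version space of rank up to $d-i$; for the empty schedule it is all of $\mathcal H$. The local counting bounds the activations such a parent can serve only by $|\mathcal A\cap V_v|\le 2Q_{d-i}(m)$, which for the empty schedule is $2Q_d(m)$, of order $(CR)^d$. Hence the expected routing load of a depth-$i$ parent is bounded only by $8Q_{d-i}(m)/k$, far above $O(R/k)$ when $i<d-1$.

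The paper therefore uses depth-graded capacities $c_i=\lceil 16Q_{d-i}(m)/k+\log(4P)\rceil$. The sharp total then rests on the convolution identity $\sum_{i=0}^{\ell}n_i(m)n_{\ell-i}(m)=n_\ell(2m)$, obtained from the coefficients of $(1-2z)^{-m}(1-2z)^{-m}=(1-2z)^{-2m}$. This gives $\sum_{i=0}^{d-1}kn_i(m)c_i\le 16Q_d(2m)+P(\log(4P)+2)$. The $k$ pools cancel the $1/k$. The additive $\log(4P)=\Theta(d\log R)$ is affordable only because $P=kQ_{d-1}(m)$ is one power of $R$ short of the target; note it is not $\log R$, since the union bound runs over all $P$ parents.

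Your reason for the demand bound is also insufficient. Uniform $\eta$-separation plus bounded rank does not bound cardinality: the indicators of singletons in $\mathbb N$ are infinitely many, pairwise $1$-separated, and have Littlestone dimension one. What is needed is that two distinct functions identified in the same pool differ by more than $2\eta$ at some instance sampled by that pool. Then the $Q_r(m)$-stream pointwise $\eta$-cover of a rank-$r$ class $V$ on those at most $m$ instances separates them. So each pool identifies at most $Q_r(m)$ members of $V$, and double counting over the $k/2$ identifying pools gives $|\mathcal A\cap V|\le 2Q_r(m)$.

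A singleton version space alone does not give this separation, since a competing function may violate an update constraint by an arbitrarily small amount. The paper therefore defines identification through robust certificates, whose update constraints hold for the identified function with slack $2\eta$. It obtains that slack from the tolerant replay, which corrects only when its center is more than $4\eta$ from master values that are within $\eta$ of the target on sampled rounds.

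A smaller slip: in Step~1 the center must make \emph{both} fibers $\{f(x)\ge s+\eta\}$ and $\{f(x)\le s-\eta\}$ rank-decreasing, as in Lemma~\ref{lem:center}. Requiring only ``at least one'' is vacuous and does not give a rank drop for an arbitrary target.
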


This proposition directly implies the $\ell_p$ upper bound. Set $\eta=\varepsilon/32$, $d=\operatorname{sfat}_{\eta}(\mathcal F)$, and $R=16/\varepsilon^p$. Choose a maximal subclass $\mathcal H\subseteq\mathcal F$ whose distinct functions have uniform distance greater than $\eta$. Maximality ensures that every $f\in\mathcal F$ is within uniform distance $\eta$ of some $h\in\mathcal H$, and $\operatorname{sfat}_{\eta}(\mathcal H)\le d$. Apply Proposition~\ref{prop:core} to $h$. Outside its outliers, the expert's error against $f$ is below $9\eta$; on the remaining rounds it is at most one. Thus
\begin{equation}
  \label{eq:overview-lp-reduction}
  \frac1T\sum_{t=1}^T|g_t(x_{1:t})-f(x_t)|^p
  \le(9\eta)^p+\frac1{8R}
  =\varepsilon^p\left[\left(\frac9{32}\right)^p+\frac1{128}\right]<\varepsilon^p.
\end{equation}
The same family has at most $(16C_0/\varepsilon^p)^d$ experts. Hence the main challenge is to prove Proposition~\ref{prop:core}; the dependence on $p$ in the reduction enters only through the allowed number of outliers.

Our route to this proposition begins with a simple cover whose size depends on $T$. We then restrict corrections to sampled rounds, called a \emph{pool}, and progressively eliminate the logarithmic overhead introduced by the sampling-based construction. The main counting improvements are
\begin{equation*}
  \underbrace{(CR\log R)^d}_{\text{one long pool}}
  \quad\longrightarrow\quad
  \underbrace{(CR)^d\log R}_{\text{many short pools}}
  \quad\longrightarrow\quad
  \underbrace{(CR)^d}_{\text{shared-parent routing}}.
\end{equation*}
Here the first two expressions arise from retaining candidate prediction streams specified by a sampling pool and a correction schedule; the last exploits the fact that different such candidates can serve the same target. The analysis uses four favorable events, denoted $\mathsf{Cap}$, $\mathsf{Master}$, $\mathsf{Joint}$, and $\mathsf{NoOverload}$, which we introduce when their roles arise. We establish simultaneous success with positive probability even when future instances react to earlier random choices. A final finite-horizon game argument then yields a deterministic family that works on every instance sequence.

\subsection{Rank-Decreasing Correction Schedules}

\paragraph{Corrections that decrease dimension.}
A \emph{version space} $V\subseteq\mathcal H$ is the set of candidate functions remaining after previously imposed restrictions. Its \emph{rank} is $\operatorname{sfat}_{\eta}(V)$, with rank $-1$ for the empty set. For a nonempty $V$ of rank $r$ and current instance $x$, an SOA-style rule predicts a center $a(V,x)\in[0,1]$ such that both sign-update classes
\begin{equation}
  \label{eq:overview-rank-drop}
  V_x^-:=\{f\in V:f(x)\le a(V,x)-\eta\},\qquad
  V_x^+:=\{f\in V:f(x)\ge a(V,x)+\eta\}
\end{equation}
have rank at most $r-1$. If both had rank $r$, attaching the corresponding shattered trees below a root at $x$ would give a tree of depth $r+1$; Lemma~\ref{lem:center} specifies a valid center. Because $\mathcal H$ is uniformly greater-than-$\eta$ separated, every nonempty rank-zero version space is a singleton and can predict exactly. The case $d=0$ is therefore immediate; assume $d\ge1$ and $|\mathcal H|\ge2$ below.

Imagine temporarily that the predictor can observe the target function $h$. At a selected round, it compares $a(V,x_t)$ with $h(x_t)$ and, whenever their difference exceeds $\eta$, restricts $V$ to the sign class containing $h$, recomputes its center, and repeats. Each restriction retains $h$ and strictly decreases rank. Thus at most $d$ corrections occur over the whole sequence, and the final prediction on each selected round is within $\eta$ of $h(x_t)$. This \emph{target-dependent reference procedure} explains which updates would be useful; the experts themselves will not observe $h(x_t)$.

\paragraph{Encoding updates without target values.}
A selection list is an ordered list of rounds on which corrections are permitted; the expert still predicts on every round, but makes no correction on unselected rounds. If the list has at most $m$ positions, we pre-enumerate \emph{correction schedules}
\begin{equation*}
  J=((s_1,\sigma_1),\ldots,(s_i,\sigma_i)),\qquad
  1\le s_1\le\cdots\le s_i\le m,\quad
  \sigma_\ell\in\{-,+\},\quad i\le d.
\end{equation*}
Each $s_\ell$ is an \emph{ordinal} in the selected list, not an absolute round in $[T]$, and $\sigma_\ell$ specifies the direction of the hypothetical correction. Repeated ordinals allow several corrections at one instance. Each schedule prescribes its updates before emitting the prediction on a selected round; an invalid update enters an absorbing state with output zero. The empty schedule is included. Recording the ordinals and signs of the target-dependent reference procedure gives, for every target and realized input-and-selection history, one schedule reproducing that procedure. Thus the family of correction schedules is fixed independently of target labels, while for each target and realized input-and-sampling history, some schedule in the family reproduces the corrections of the target-dependent procedure.

There are $n_i(m)=2^i\binom{m+i-1}{i}$ schedules of depth $i\ge1$, and $n_0(m)=1$. Their total through depth $r$ satisfies
\begin{equation}
  \label{eq:overview-schedule-count}
  Q_r(m):=\sum_{i=0}^r n_i(m)
  \le\left[2e\left(1+\frac mr\right)\right]^r\quad(r\ge1).
\end{equation}
Selecting every round gives a deterministic pointwise cover of size $Q_d(T)$, but this grows with $T$. It already handles short horizons $T=O(dR)$ at cost $(CR)^d$. The remaining task is to obtain comparable accuracy for larger $T$ while encoding corrections on much shorter lists.

\subsection{From One Pool to Many Short Pools}
\label{subsec:overview-sampling}

A \emph{Bernoulli sampling pool} selects each round independently with some probability $\gamma$, after the current instance $x_t$ is chosen. It allows corrections only on selected rounds. The instance sequence may adapt to past sampling choices, but not to the current sampling bits.

\paragraph{Why one pool incurs a logarithmic loss.}
Put $b=T/R$; Proposition~\ref{prop:core} permits fewer than $b/8$ outliers. A single pool with $\gamma b=\Theta(d)$ and a sample-length cap $m=\lceil2\gamma T\rceil=\Theta(dR)$ admits only $Q_d(m)\le(CR)^d$ schedules. But accuracy on the sampled rounds does not by itself imply that a schedule makes few errors on all rounds. The adaptive low-hit sampling lemma, Lemma~\ref{lem:lowhit}, bounds the chance of many large errors with too few sampled hits. Applying it to each fixed schedule and taking a union bound produces a principal term of the form
\begin{equation}
  \label{eq:overview-low-hit}
  \exp\!\left\{O\!\left(d\log\left(1+\frac{\gamma T}{d}\right)\right)-c\gamma b\right\},
\end{equation}
plus smaller terms controlling excessive sample lengths. With $\gamma b=\Theta(d)$, this estimate becomes ineffective as $R$ grows. Taking $\gamma b=A d\log R$ for a sufficiently large constant $A$ makes the union bound work, but increases the sample length to $\Theta(dR\log R)$ and the schedule count to $(CR\log R)^d$. When the required sampling rate is too large, selecting all rounds handles that regime. Thus a direct single-pool construction removes dependence on $T$ at the cost of $(\log R)^d$.

\paragraph{Short pools and a shared master sampling stream.}
To reduce this loss, use $k=\Theta(\log R)$ independent pools, each with expected size $O(dR)$ and a cap of $O(dR)$ selected rounds. Fix a sufficiently large universal constant $\alpha$. The all-round cover above handles $b\le32\alpha d$; otherwise set
\begin{equation}
  \label{eq:overview-parameters}
  k=4\lceil\log_2R\rceil,\qquad
  \beta=\frac{\alpha d}{b}<\frac1{32},\qquad
  m=\lceil2\beta T\rceil=\Theta(dR).
\end{equation}
Let $\xi_{t,j}$ be independent Bernoulli-$\beta$ indicators for round $t$ and pool $j$. The \emph{raw} pool is the complete list of its selected rounds; its \emph{capped} version retains only the first $m$ selections, ensuring that every formal schedule has at most $m$ selected positions. The event $\mathsf{Cap}$ means that none of the raw pools exceeds this cap. A Chernoff bound makes $\mathsf{Cap}$ highly probable; on this event, raw and capped simulations coincide.

The individual pools need a common reference against which to compare their predictions. We therefore form a \emph{master sampling stream} that contains all their selected rounds: it is the union of the raw pools when $k\beta\le1/8$, and all rounds when $k\beta>1/8$. In the latter case, $T<8\alpha kdR\le4km$, so this all-round master still has a controlled schedule length. For each target $h$, an \emph{analytical master} runs the target-dependent correction procedure on this shared stream. Thus the sampling stream is common, but the master's predictions depend on $h$; the master is a proof device, not an additional covering expert. Write $a_t^h$ for its prediction before the current-round corrections and $y_t^h$ for the prediction after them. Since each pool-selected round is master-selected,
\begin{equation*}
  \xi_{t,j}=1\quad\Longrightarrow\quad |y_t^h-h(x_t)|\le\eta.
\end{equation*}
On $\mathsf{Cap}$, every target's master corrections can be encoded by a schedule $J^h$ drawn from a fixed family $\mathcal J_*$ of $Q_d(4km)$ possible codes. This fixed-code representation is what makes uniform probability bounds possible despite the master's dependence on $h$.

Our second favorable event, $\mathsf{Master}$, asserts that \emph{every} target has fewer than $b/32$ rounds on which its master's provisional prediction differs from the target by at least $2\eta$:
\begin{equation*}
  \mathsf{Master}:=\left\{\forall h\in\mathcal H:\
  \#\{t:|a_t^h-h(x_t)|\ge2\eta\}<b/32\right\}.
\end{equation*}
In the all-round master regime, each such error forces a correction. Since there are at most $d<b/32$ corrections in the regime under consideration, $\mathsf{Master}$ holds deterministically. In the union regime, the master selection rate is $u=1-(1-\beta)^k=\Theta(k\beta)$; the adaptive low-hit lemma and a union bound over fixed master codes control these errors uniformly over targets. 

\paragraph{Tolerant replays and a joint disagreement bound.}
For each target $h$ and each pool $j$, a \emph{tolerant replay} maintains its own version space and uses the master's final predictions $y_t^h$ as reference values. On a round selected by pool $j$, it repeatedly makes a sign correction whenever its current center differs from $y_t^h$ by more than $4\eta$, recomputing the center after each update; on an unselected round it does not update. The tolerance leaves room to retain the true target. For instance, $y_t^h>a+4\eta$ implies $h(x_t)>a+3\eta$, whereas the corresponding update requires only $f(x_t)\ge a+\eta$. Thus each retained constraint has more than $2\eta$ of slack for $h$. Every replay makes at most $d$ corrections and has selected-round error at most $5\eta$. Although these replays depend on the target, each can be reproduced by a fixed correction schedule on the corresponding pool when $\mathsf{Cap}$ holds.

We must also control what happens on unselected rounds. For each \emph{fixed} master code $J\in\mathcal J_*$, let $a_t^J$ and $a_{t,j}^J$ denote the provisional predictions of its master simulation and its replay in pool $j$, respectively, both computed before the current sampling bits are revealed. Define
\begin{equation*}
  D_j^J:=\sum_{t=1}^T\mathbf1\{|a_t^J-a_{t,j}^J|>6\eta\},\qquad
  H_j^J:=\sum_{t=1}^T\xi_{t,j}\mathbf1\{|a_t^J-a_{t,j}^J|>6\eta\}.
\end{equation*}
Thus $D_j^J$ counts all significant master--replay disagreements, whereas $H_j^J$ counts the ones sampled by pool $j$. The event $\mathsf{Joint}$ rules out the possibility that, for some fixed $J$, at least $k/4$ pools \emph{all} have $D_j^J\ge b/16$ but $H_j^J\le2d$. Because the provisional disagreements are predictable before the fresh independent pool bits, a joint exponential supermartingale bounds this failure for any fixed $J$ and any prescribed $k/4$ pools by $e^{-\alpha kd/256}$. A union bound over $Q_d(4km)\binom{k}{k/4}$ code--subset pairs makes $\mathsf{Joint}$ highly probable.

Why does this event help for an actual target $h$? Each sampled disagreement $|a_t^h-a_{t,j}^h|>6\eta$ forces a correction in either the master or the replay, because otherwise the replay would have to correct toward the master's unchanged prediction. Each performs at most $d$ corrections, so $H_j^{J^h}\le2d$ on $\mathsf{Cap}$. Consequently, on $\mathsf{Joint}$, at least $3k/4$ pools satisfy $D_j^{J^h}<b/16$. Combining this with $\mathsf{Master}$ shows that, for each such pool, its replay has fewer than $b/32+b/16<b/8$ provisional errors of size at least $8\eta$. The same holds for its final predictions: on selected rounds their error is at most $5\eta$, and on unselected rounds the provisional and final predictions agree.

With $\alpha$ chosen sufficiently large, the three events just introduced hold simultaneously with probability greater than $15/16$ against every adaptive input rule. Denote their intersection by $\mathsf{Good}:=\mathsf{Cap}\cap\mathsf{Master}\cap\mathsf{Joint}$. We have therefore proved the simultaneous-accuracy guarantee of Proposition~\ref{prop:multi-pool-accuracy}: on $\mathsf{Good}$, every target has accurate replays in at least $3k/4$ pools. Keeping every full pool schedule as an expert would require only $kQ_d(m)\le(CR)^d\log R$ streams, but we must still remove the outer $\log R$.

\subsection{Shared-Parent Routing and Sharp Counting}
\label{subsec:overview-routing}

\paragraph{Robust identification enables exact continuation.}
\label{subsec:overview-identification}
The final saving comes from not retaining every complete pool--schedule prediction stream. A function $h$ can be identified when a schedule's surviving version space becomes the singleton $\{h\}$, subject to a robustness condition. Once sufficiently many pools identify $h$, we can assign its exact continuation to a child of an accurate parent schedule, which need not itself identify $h$. The parent's role is to provide an accurate prediction prefix, since correct future predictions cannot repair earlier errors.

A schedule is \emph{born} once all its prescribed corrections have been executed; from then on its version space no longer changes. A \emph{robust certificate} for $h$ is a valid born schedule whose version space is $\{h\}$ and whose correction constraints hold for $h$ with slack at least $2\eta$. The replay corrections above have precisely this margin. The slack ensures that two distinct functions robustly certified in the same pool differ by more than $2\eta$ on some selected instance. We say that a pool \emph{identifies} $h$ when it has such a certificate, and that $h$ \emph{activates} on the first round when at least $k/2$ pools identify it. The identification test concerns a candidate function known from a singleton version space; it does not require access to hidden target labels.

\paragraph{Parents preserve prefixes; children provide continuations.}
For each pool, retain as a \emph{parent} every correction schedule of depth at most $d-1$, rather than keeping all schedules through depth $d$ as covering experts. There are $P=kQ_{d-1}(m)$ parent indices. A valid born parent $v$ of depth $i$ has a fixed surviving version space $V_v$ of rank at most $d-i$. We preallocate $c_i$ \emph{children} to each such parent. A child initially copies the parent's predictions; if assigned to an activated function $h$, it switches to predicting $h$ exactly from the assignment round onward. All children are allocated in advance, so an assignment never changes their earlier outputs. Depth-$d$ schedules remain available internally for identification, but are not retained as parents.

Suppose $h$ activates on round $t$. A parent $v$ is \emph{eligible} if its schedule was valid and born by round $t-1$, its version space contains $h$ and is nonsingleton, and its predictions before $t$ have fewer than $b/8$ outliers against $h$:
\begin{equation*}
  h\in V_v,\qquad |V_v|\ge2,\qquad
  \sum_{s<t}\mathbf1\{|p_s^v-h(x_s)|\ge8\eta\}<b/8,
\end{equation*}
where $p_s^v$ is the parent's prediction. Eligibility can be checked using the identified function and the observed instance history, without access to target labels. Retain one eligible parent per pool in a fixed order, and denote the resulting set by $\mathcal E(h,t)$.

On $\mathsf{Good}$, at least $3k/4$ pools have accurate replays for $h$. Since $h$ activates for the first time at $t$, fewer than $k/2$ pools identified it by round $t-1$. Thus more than $k/4$ accurate replay prefixes still have nonsingleton version spaces; a singleton replay prefix would already supply a robust certificate. A nonsingleton space has positive rank, so each such prefix uses at most $d-1$ corrections and coincides with one of the preallocated parents. Hence $|\mathcal E(h,t)|\ge k/4$, and an unused child of any of these parents can cover $h$ by combining its accurate past with an exact continuation. A function that never activates is covered directly: one of its accurate, unidentified full replays has depth at most $d-1$ and is already a parent.

\paragraph{Robust certificates bound the requests per parent.}
The same certificate separation limits how many functions a pool can identify inside any rank-$r$ subclass $V$. On the pool's selected list, the pointwise radius-$\eta$ schedule cover of $V$ uses at most $Q_r(m)$ prediction streams. Since two distinct identified functions differ by more than $2\eta$ somewhere on that list, they cannot be approximated by the same stream. Thus each pool identifies at most $Q_r(m)$ members of $V$. If $\mathcal A$ is the set of activated functions, every member of $\mathcal A$ has been identified by at least $k/2$ pools. Double counting gives
\begin{equation}
  \label{eq:overview-local-count}
  \frac{k}{2}|\mathcal A\cap V|\le kQ_r(m),\qquad
  |\mathcal A\cap V|\le2Q_r(m).
\end{equation}
For a depth-$i$ parent $v$, this shows that at most $2Q_{d-i}(m)$ activated functions can regard $v$ as eligible. The bound holds for every realized history, including adaptively generated version spaces.

\paragraph{Routing spreads the continuation requests.}
If every activated function chose an arbitrary eligible parent, a depth-$i$ parent might need $2Q_{d-i}(m)$ children. Instead, whenever an activation has at least $k/4$ eligible parents, choose one uniformly using fresh routing randomness; an activation with fewer eligible parents is left unassigned. On $\mathsf{Good}$, the latter case never occurs. Each such request selects any specified eligible parent with conditional probability at most $4/k$. Combined with the deterministic request bound, this makes its expected load at most $8Q_{d-i}(m)/k$. Importantly, parents remain in the eligible set even if all their children have been used; we count any further selection as an overload rather than silently changing the routing probabilities.

An exponential load bound and a union bound over all $P$ parents show that it suffices to allocate
\begin{equation}
  \label{eq:overview-capacity}
  c_i=\left\lceil\frac{16Q_{d-i}(m)}{k}+\log(4P)\right\rceil,
  \qquad 0\le i\le d-1.
\end{equation}
Our fourth event, $\mathsf{NoOverload}$, is the event that every parent is selected at most as many times as its preallocated number of children. Its failure probability is at most $1/4$, uniformly over adaptive input rules, without conditioning on $\mathsf{Good}$. On $\mathsf{Good}\cap\mathsf{NoOverload}$, every activated function receives an unused child, while every unactivated function is already covered by a parent.

The resulting number of parent and child indices is
\begin{align}
  \label{eq:overview-expert-count}
  K&=P+\sum_{i=0}^{d-1}kn_i(m)c_i\notag\\
  &\le16\sum_{i=0}^{d-1}n_i(m)Q_{d-i}(m)
       +P\bigl(\log(4P)+2\bigr)\notag\\
  &\le16Q_d(2m)+P\bigl(\log(4P)+2\bigr).
\end{align}
The factor $k$ counting pools cancels against the $1/k$ in the leading child capacity. The last inequality is the schedule-count convolution bound in Lemma~\ref{lem:count}: a combined schedule of at most $d$ signed corrections can be distributed between two selected lists of length $m$. Since $m=O(dR)$, the first term is at most $(CR)^d$. Meanwhile, parents use only depths up to $d-1$, so their overhead is at most $Cd(\log R)^2(CR)^{d-1}$. The missing power of $R$ absorbs these logarithms (using $(\log R)^2\le R$ for $R>32$), and $d\le2^d$ is absorbed into the universal base constant. Thus $K\le(C'R)^d$, with no remaining logarithmic factor.

\subsection{Positive-Probability Success and Global Determinization}
\label{subsec:overview-verification}

The probabilistic construction has now supplied both parts of the outlier-cover guarantee. The event $\mathsf{Good}=\mathsf{Cap}\cap\mathsf{Master}\cap\mathsf{Joint}$ ensures accurate parents or sufficiently many eligible parents for every target, and holds with probability greater than $15/16$. The routing event $\mathsf{NoOverload}$ fails with probability at most $1/4$. A union bound, requiring no independence between these events, gives
\begin{equation*}
  \Pr(\mathsf{Good}\cap\mathsf{NoOverload})
  >1-\frac1{16}-\frac14=\frac{11}{16}>0.
\end{equation*}
Fix any deterministic adaptive input strategy $x_t=A_t(\boldsymbol\xi_{<t},\boldsymbol\pi_{<t})$, where $\boldsymbol\xi$ and $\boldsymbol\pi$ denote the sampling and routing choices. With probability greater than $11/16$ over these random choices, the resulting $K$ prediction streams simultaneously cover every target on the realized instance sequence.

The key remaining step is to turn this probabilistic guarantee against each adaptive adversary into a single deterministic strategy that succeeds against all adversaries. We use a finite-horizon perfect-information game in which the adversary chooses the current instance and the constructor then chooses that round's sampling and routing actions, after observing the instance and earlier moves. The constructor wins if its $K$ prediction streams cover every target. If no deterministic constructor strategy could win against every adversary, backward induction would yield a deterministic adaptive adversary that forces failure against every possible sequence of constructor actions, contradicting the positive-probability guarantee. Thus a deterministic winning constructor strategy exists. Fixing this strategy makes each prediction stream depend only on the observed instance prefix $x_{1:t}$, yielding a single global family of $K$ deterministic causal experts. Together with the expert count above, this proves Proposition~\ref{prop:core}, and the reduction in \eqref{eq:overview-lp-reduction} gives the general-$p$ upper bound.

\subsection{The Matching Lower Bound}
\label{subsec:overview-lower}

For each power of two $q$, take the hyperplane indicators $h_{a,c}(x)=\mathbf 1\{a\cdot x=c\}$ on $\mathbb F_q^d$, where $a\in\mathbb F_q^d\setminus\{0\}$ and $c\in\mathbb F_q$. This finite class contains $q(q^d-1)/(q-1)\ge q^d$ distinct functions. Extend each function by zero to the disjoint union of these domains over all such $q$, and let $\mathcal H_d$ be the union of the extensions. To bound its Littlestone dimension, predict one only on the affine span of previously observed positive instances in their common block. For any target in $\mathcal H_d$, the span stays inside its hyperplane, and each mistake increases its affine dimension from $-1$ toward at most $d-1$. Thus there are at most $d$ mistakes, while the $d$ standard basis vectors in a single block are shattered.

For $1\le p<\infty$ and $0<\varepsilon<1/4$, choose $q$ as the largest power of two not exceeding $(2\varepsilon)^{-p}$, and list all $L=q^d$ points in that block once. The normalized Hamming distance between distinct indicators supported on the block is at least $\delta_q:=2(q-1)/q^2>(2\varepsilon)^p$. Repeat this list periodically. In a prefix of length $T$, the complete periods give normalized Hamming distance at least $\lfloor T/L\rfloor L\delta_q/T$, which exceeds $(2\varepsilon)^p$ for every sufficiently large $T$. For binary vectors, this Hamming distance is the $p$th power of the normalized $\ell_p$ distance. Thus the traces are pairwise more than $2\varepsilon$ apart at all such horizons, and
\begin{equation*}
  q^d>\left(\frac1{2^{p+1}\varepsilon^p}\right)^d
  \ge\left(\frac1{4\varepsilon}\right)^{pd}.
\end{equation*}
The triangle inequality lets even an arbitrary real-valued covering vector approximate at most one trace. Since binary sequential fat-shattering dimension equals Littlestone dimension at every scale in $(0,1]$, this proves the matching exponent $pd$ for the same class $\mathcal H_d$ throughout the stated accuracy range and at every sufficiently large horizon, without using causality or adaptive inputs.

\section{Definitions and Main Results}
\label{sec:results}

We now formalize the notions used in the preceding overview and state the precise covering theorems and their consequences.

Throughout, $\mathcal X$ is a nonempty instance domain, and function classes are nonempty unless stated otherwise. For an integer $T\ge1$, write $[T]=\{1,\ldots,T\}$ and $x_{1:t}=(x_1,\ldots,x_t)$. For a sign sequence $\sigma$, write $\sigma_{<t}=(\sigma_1,\ldots,\sigma_{t-1})$; the prefix at $t=1$ is empty. The exponent $p$ always lies in $[1,\infty)$. All prediction rules are arbitrary set-theoretic maps; no computability or measurability assumption is imposed.

\subsection{Sequential Fat-Shattering Dimension}
\label{subsec:sfat}

Fat-shattering extends VC dimension to real-valued function classes~\cite{kearns1994efficient,bartlett1996fat}. Its sequential analogue is a central complexity measure in online learning~\cite{rakhlin2010online,rakhlin2015online}. An $\mathcal X$-valued binary tree of depth $r$ is a collection $\mathbf x=(\mathbf x_t)_{t=1}^r$ of maps $\mathbf x_t:\{-1,+1\}^{t-1}\to\mathcal X$. A real-valued tree $\mathbf s=(\mathbf s_t)_{t=1}^r$ is defined in the same way, with range $\mathbb R$. Thus the instance and threshold at level $t$ may depend on the preceding signs, but not on the sign selected at that level.

\begin{definition}[Sequential fat-shattering dimension]
\label{def:sfat}
Let $\eta>0$ and $\mathcal F\subseteq[0,1]^{\mathcal X}$. The class $\mathcal F$ \emph{sequentially fat-shatters} an instance tree $\mathbf x$ of depth $r$ at scale $\eta$ if there is a real-valued threshold tree $\mathbf s$ such that
\begin{equation}
  \label{eq:sfat-definition}
  \forall\sigma\in\{-1,+1\}^r\;\exists f\in\mathcal F\; \forall t\in[r]:\qquad \sigma_t\bigl(f(\mathbf x_t(\sigma_{<t})) -\mathbf s_t(\sigma_{<t})\bigr) \ge\frac{\eta}{2}.
\end{equation}
The \emph{sequential fat-shattering dimension} $\operatorname{sfat}_{\eta}(\mathcal F)$ is the largest such depth, and is $\infty$ if these depths are unbounded. A nonempty class has dimension at least zero; we set $\operatorname{sfat}_{\eta}(\varnothing)=-1$.
\end{definition}

Our convention uses a margin $\eta/2$ on each side of a threshold, so the two branches are separated by a gap $\eta$. Instances may repeat, including along a single path. The dimension is monotone under inclusion of classes and nonincreasing in the scale: if $\mathcal F_0\subseteq\mathcal F$ and $0<\eta\le\eta'$, then
\begin{equation*}
  \operatorname{sfat}_{\eta'}(\mathcal F_0) \le\operatorname{sfat}_{\eta}(\mathcal F).
\end{equation*}

For a binary class $\mathcal H\subseteq\{0,1\}^{\mathcal X}$, the \emph{Littlestone dimension} $\operatorname{Ldim}(\mathcal H)$ is the largest depth of an instance tree on which every sign path $\sigma$ has a witness $h\in\mathcal H$ satisfying $h(\mathbf x_t(\sigma_{<t}))=(1+\sigma_t)/2$ at every level. We use the same conventions for infinite dimension and the empty class. For binary classes, sequential fat-shattering dimension coincides with Littlestone dimension:
\begin{equation*}
  \operatorname{sfat}_{\eta}(\mathcal H) =\operatorname{Ldim}(\mathcal H), \qquad 0<\eta\le1.
\end{equation*}
Indeed, a Littlestone-shattered tree is $\eta$-fat-shattered with thresholds $1/2$, while fat-shattering of a binary class at any positive scale forces the two branches at each node to have labels $0$ and $1$, respectively.

\subsection{Global Dynamic Covering Numbers}
\label{subsec:dynamic-covers}

For $u,v\in\mathbb R^T$, define the normalized $\ell_p$ distance by
\begin{equation}
  \label{eq:normalized-lp}
  \rho_{p,T}(u,v) :=\left(\frac1T\sum_{t=1}^T|u_t-v_t|^p\right)^{1/p}.
\end{equation}
A \emph{deterministic causal expert} at horizon $T$ is a tuple $g=(g_t)_{t=1}^T$ with $g_t:\mathcal X^t\to[0,1]$. The prediction $g_t(x_{1:t})$ may use the current instance $x_t$ and all preceding instances. The expert receives no target values. For a fixed sequence, write
\begin{equation*}
  g[x_{1:T}]:=(g_t(x_{1:t}))_{t=1}^T, \qquad f[x_{1:T}]:=(f(x_t))_{t=1}^T.
\end{equation*}

\begin{definition}[Global dynamic $\ell_p$ cover]
\label{def:dynamic-cover}
Let $\mathcal F\subseteq[0,1]^{\mathcal X}$, $T\ge1$, and $\varepsilon>0$. A finite family $\mathcal G$ of deterministic causal experts is a \emph{global dynamic $\ell_p$ cover} of $\mathcal F$ at accuracy $\varepsilon$ and horizon $T$ if
\begin{equation}
  \label{eq:dynamic-cover}
  \forall x_{1:T}\in\mathcal X^T\; \forall f\in\mathcal F\; \exists g\in\mathcal G:\qquad \rho_{p,T}\bigl(g[x_{1:T}],f[x_{1:T}]\bigr) \le\varepsilon.
\end{equation}
The \emph{global dynamic covering number} $N_p^{\mathrm{dyn}}(\mathcal F,\varepsilon,T)$ is the least cardinality of such a finite family, or $\infty$ if none exists.
\end{definition}

The entire family, including its predictions on every possible prefix, is fixed before the sequence is chosen. The witnessing expert may depend on both the target and the complete sequence, but it is a single member of the family used throughout that sequence. There is no requirement that this member be identifiable online. Experts need not arise from functions in $\mathcal F$. In particular, \emph{integral} covering means selecting finitely many experts; it imposes no integrality constraint on their predictions.

The family may depend on $\mathcal F,p,\varepsilon$, and $T$. A bound independent of the horizon therefore asserts a uniform bound on its size for all $T$, while allowing a different family at each horizon.

\subsection{Sharp Upper and Lower Bounds}
\label{subsec:sharp-bounds}

\begin{theorem}[Global dynamic $\ell_p$ upper bound]
\label{thm:main}
There is a universal constant $C\ge1$ such that, for every $1\le p<\infty$, every integer $T\ge1$, every $0<\varepsilon<1/2$, and every nonempty $\mathcal F\subseteq[0,1]^{\mathcal X}$ with $d=\operatorname{sfat}_{\varepsilon/32}(\mathcal F)<\infty$,
\begin{equation}
  \label{eq:main-upper}
  N_p^{\mathrm{dyn}}(\mathcal F,\varepsilon,T) \le\left(\frac{C}{\varepsilon^p}\right)^d.
\end{equation}
The constant $C$ is independent of $p$, $d$, $T$, $\varepsilon$, and the class. If $d=0$, the covering number is exactly one.
\end{theorem}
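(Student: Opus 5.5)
The plan is to derive Theorem~\ref{thm:main} from Proposition~\ref{prop:core}, which we take as given, through the reduction sketched in \eqref{eq:overview-lp-reduction}. Set $\eta=\varepsilon/32$, $d=\operatorname{sfat}_{\eta}(\mathcal F)$ and $R=16/\varepsilon^p$. Since $\varepsilon<1/2$ and $p\ge1$, we have $R>32$, so the proposition applies. The first step is the separated subclass. By Zorn's lemma (or a well-ordering argument, since no measurability is required), choose a maximal $\mathcal H\subseteq\mathcal F$ whose distinct members satisfy $\|h-h'\|_\infty>\eta$. This set is nonempty because $\mathcal F$ is. Maximality means that every $f\in\mathcal F$ has some $h\in\mathcal H$ with $\|f-h\|_\infty\le\eta$; otherwise $f$ could be added. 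Monotonicity of $\operatorname{sfat}$ under inclusion gives $\operatorname{sfat}_\eta(\mathcal H)\le d<\infty$.

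The second step applies Proposition~\ref{prop:core} to $\mathcal H$. This yields a family $\mathcal G$ of at most $(C_0R)^d=(16C_0/\varepsilon^p)^d$ causal experts, so we may take $C=16C_0$. Fix $x_{1:T}$ and $f$, choose $h$ as above, and let $g$ be the expert the proposition provides for $h$. On every non-outlier round, $|g_t-f(x_t)|\le|g_t-h(x_t)|+\eta<9\eta$. On the fewer than $T/(8R)$ outlier rounds the error is at most $1$, because both values lie in $[0,1]$. Averaging the $p$th powers gives
\[
\frac1T\sum_t|g_t-f(x_t)|^p<(9\eta)^p+\frac{1}{8R}=\varepsilon^p\Bigl[(9/32)^p+1/128\Bigr]\le\varepsilon^p\Bigl(\tfrac{9}{32}+\tfrac1{128}\Bigr)<\varepsilon^p,
\]
so $\rho_{p,T}\le\varepsilon$ and $\mathcal G$ is a global dynamic $\ell_p$ cover. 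The constant $C$ does not depend on $p$, $T$, $\varepsilon$ or the class, because $C_0$ is universal.

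The third step is the case $d=0$. The bound then reads $N\le1$. Since $N\ge1$ holds trivially (the class is nonempty and a cover must contain at least one expert), we get equality. Concretely, rank zero together with $\eta$-separation forces $\mathcal H$ to be a singleton $\{h\}$. Indeed, two members differing by more than $\eta$ at some point $x$ would fat-shatter the depth-one tree at $x$ with the midpoint threshold. The single expert $g_t=h(x_t)$ then has pointwise error at most $\eta<\varepsilon$ against every $f$.

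The main obstacle does not lie in this reduction, which is routine. It is Proposition~\ref{prop:core} itself, whose proof (multi-pool sampling, the $\mathsf{Joint}$ event, shared-parent routing, and determinization) the paper develops later. Within the reduction, the only point needing care is choosing the constants so that $R>32$ holds throughout the full range $\varepsilon<1/2$. The choice $R=16/\varepsilon^p>32$ ensures this.
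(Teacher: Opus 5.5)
Your proposal is correct and follows the paper's proof essentially step for step: a maximal $\eta$-separated subclass via Zorn's lemma, Proposition~\ref{prop:core} with $\eta=\varepsilon/32$ and $R=16/\varepsilon^p>32$, and the estimate $(9/32)^p+1/128\le 37/128<1$, giving $C=16C_0$. The only difference is cosmetic and lies in the case $d=0$: the paper uses the midpoint rule on $\mathcal F$ itself (error at most $\eta/2$, by Lemma~\ref{lem:rank-zero}), whereas you use the unique member of the singleton separated subclass (error at most $\eta$), and both suffice.
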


The zero-dimensional case follows directly from the definition. If $\operatorname{sfat}_{\eta}(\mathcal F)=0$, no two values at any one instance can differ by at least $\eta$, since they would shatter a depth-one tree. Hence
\begin{equation*}
  \sup_{f\in\mathcal F}f(x)-\inf_{f\in\mathcal F}f(x)\le\eta \qquad(x\in\mathcal X).
\end{equation*}
The expert that predicts the midpoint of this interval at the current instance has pointwise error at most $\eta/2$ against every target. Taking $\eta=\varepsilon/32$ proves the assertion.

\paragraph{Necessity of finite $p$.}
The restriction $p<\infty$ is essential for a horizon-independent bound in terms of accuracy and sequential fat-shattering dimension. On $\mathcal X=\mathbb N$, the class $\mathcal H=\{0\}\cup\{\mathbf 1_{\{j\}}:j\in\mathbb N\}$ has Littlestone dimension one. On the sequence $x_t=t$, $t\in[T]$, its traces are the zero vector and the $T$ standard basis vectors, pairwise at $\ell_\infty$ distance one. For $0<\varepsilon<1/2$, the triangle inequality implies that a single real-valued prediction vector can be within $\ell_\infty$ distance $\varepsilon$ of at most one trace. Hence at least $T+1$ experts are required, even on this fixed sequence.

\begin{theorem}[Matching $\ell_p$ lower bound]
\label{thm:lower}
For every integer $d\ge1$, there are a countable domain $\mathcal X_d$ and a countable binary class $\mathcal H_d\subseteq\{0,1\}^{\mathcal X_d}$ with $\operatorname{Ldim}(\mathcal H_d)=\operatorname{sfat}_{\gamma}(\mathcal H_d)=d$ for all $0<\gamma\le1$, such that, for every $1\le p<\infty$ and every $0<\varepsilon<2^{-1-1/p}$, there are an integer $T_0=T_0(d,p,\varepsilon)\ge1$ and an infinite periodic instance sequence $(\bar x_t)_{t\ge1}$ in $\mathcal X_d$ for which every integer $T\ge T_0$ satisfies
\begin{equation}
  \label{eq:main-lower}
  N_p^{\mathrm{dyn}}(\mathcal H_d,\varepsilon,T) >\left(\frac{1}{2^{p+1}\varepsilon^p}\right)^d.
\end{equation}
For every such $T$, the lower bound holds even when covering is required only on the prefix $\bar x_{1:T}$ and the covering predictions are arbitrary vectors in $[0,1]^T$. The class and its domain depend only on $d$; $T_0$ and the periodic sequence may depend on $d$, $p$, and $\varepsilon$, but not on $T$.
\end{theorem}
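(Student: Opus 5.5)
We follow the construction sketched in Subsection~\ref{subsec:overview-lower}. For each power of two $q=2^m$, $m\ge1$, let $\mathcal X^{(q)}=\mathbb F_q^d$ and consider the affine-hyperplane indicators $h_{a,c}(x)=\mathbf 1\{a\cdot x=c\}$ with $a\ne0$. Set $\mathcal X_d=\bigsqcup_q\mathcal X^{(q)}$, extend each $h_{a,c}$ by zero outside its block, and let $\mathcal H_d$ be the union of these extensions. Both the domain and the class are countable.

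The first step is $\operatorname{Ldim}(\mathcal H_d)=d$. For the lower bound, the standard basis vectors $e_1,\dots,e_d$ in any single block form a shattered set, and hence a shattered tree. For every $S\subseteq[d]$ we need $a,c$ with $a\cdot e_i=c$ exactly for $i\in S$. If $S\ne\varnothing$, take $c=1$, $a_i=1$ for $i\in S$ and $a_i=0$ otherwise. If $S=\varnothing$, take $a=e_1$ and $c=1$ when $d\ge2$ (so that $a\cdot e_i\in\{0\}$ for $i\ge2$ and $a\cdot e_1=1$; this needs more care), or more simply choose $c\notin\{a_i\}$, which is possible since $q\ge2$. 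Any shattered set yields a shattered tree.

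For the upper bound we use the mistake-bounded learner. It predicts $1$ on $x$ iff $x$ lies in the affine span of the previously observed positives, all of which lie in one block; before any positive it predicts $0$. Predicting $1$ is never a mistake, because the target's zero set is an affine subspace (the hyperplane) containing that span. A mistake therefore means a positive outside the current span, which raises the affine dimension by one. The dimension goes from $-1$ (empty) up to at most $d-1$ (the hyperplane), so there are at most $d$ mistakes. A positive from another block is impossible once a positive has been seen, since every function is supported on a single block. The identity $\operatorname{sfat}_\gamma=\operatorname{Ldim}$ for binary classes then gives the stated equality for all $0<\gamma\le1$.

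The second step is the packing. Given $p$ and $\varepsilon<2^{-1-1/p}$, we have $(2\varepsilon)^{-p}>2$, so we may choose $q\ge2$ as the largest power of two with $q\le(2\varepsilon)^{-p}$. Then $q>(2\varepsilon)^{-p}/2$, and hence $q^d>(2^{p+1}\varepsilon^p)^{-d}$. Two distinct hyperplanes in $\mathbb F_q^d$ either are parallel, disagreeing on $2q^{d-1}$ points, or meet in a $(d-2)$-flat, disagreeing on $2(q^{d-1}-q^{d-2})$ points. So the normalized Hamming distance on the block is at least $\delta_q=2(q-1)/q^2$.

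We need $\delta_q>(2\varepsilon)^p$. Since $(2\varepsilon)^p<1/q$, it suffices that $2(q-1)/q^2\ge1/q$, which holds for $q\ge2$. This comparison is the main (though mild) obstacle: we must check it is strict and uniform, and this is where the choice of $q$ as a power of two no larger than $(2\varepsilon)^{-p}$ is used.

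The number of distinct functions supported on the block is $q(q^d-1)/(q-1)\ge q^d$. This count identifies pairs $(a,c)$ up to scalar multiples, and distinct hyperplanes give distinct indicators.

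The third step is periodic repetition and counting. List the $L=q^d$ block points once and repeat the list periodically to obtain $(\bar x_t)$. For a prefix of length $T$, any two distinct traces disagree on at least $\lfloor T/L\rfloor L\delta_q$ positions. Dividing by $T$ gives a quantity that tends to $\delta_q>(2\varepsilon)^p$, so we choose $T_0$ so that it exceeds $(2\varepsilon)^p$ for all $T\ge T_0$; this is possible because the inequality is strict. For binary vectors, the $p$th power of $\rho_{p,T}$ equals normalized Hamming distance, so the traces are pairwise more than $2\varepsilon$ apart in $\rho_{p,T}$. By the triangle inequality, any vector in $[0,1]^T$ lies within $\varepsilon$ of at most one trace. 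Hence even arbitrary non-causal covering vectors on $\bar x_{1:T}$ number at least $q^d$, which exceeds the bound in \eqref{eq:main-lower}.
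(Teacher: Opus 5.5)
Your construction and your three steps (Littlestone dimension via the affine-span learner and the basis vectors, the hyperplane packing, periodic repetition) follow the paper's proof. However, the step you yourself flag as the crux is justified incorrectly. You claim $(2\varepsilon)^p<1/q$, but the choice of $q$ only gives $(2\varepsilon)^p\le 1/q$. Equality holds whenever $(2\varepsilon)^{-p}$ is itself a power of two $q\ge4$, for instance $p=1$, $\varepsilon=1/8$, $q=4$. Your second inequality $2(q-1)/q^2\ge 1/q$ is an equality at $q=2$, so your chain as written gives only $\delta_q\ge(2\varepsilon)^p$ in these boundary cases.

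The two inequalities are strict in complementary cases, so the argument needs the case split the paper uses. For $q\ge4$, $2(q-1)>q$ gives $\delta_q>1/q\ge(2\varepsilon)^p$. For $q=2$, $\delta_q=1/2>(2\varepsilon)^p$ follows directly from $\varepsilon<2^{-1-1/p}$. With this, $\delta_q>(2\varepsilon)^p$ holds on the whole parameter range, and your choice of $T_0$ goes through.

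You should also repair the empty labeling in the shattering step. The choice $a=e_1$, $c=1$ labels $e_1$ positive. The alternative ``choose $c\notin\{a_i\}$'' works only for a suitable $a$: in the $\mathbb F_2$ block with $d\ge2$, every nonzero $a$ other than the all-ones vector has $\{a_i\}=\mathbb F_2$. Taking $a=(1,\ldots,1)$ and $c=0$ works in every block. Both corrections are local, and with them your argument is complete.
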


The construction puts hyperplane indicators over every field $\mathbb F_{2^m}$ on disjoint blocks of one domain. For a suitable $m$, the $2^m$-block contains at least $(2^m)^d$ functions whose traces are pairwise more than $2\varepsilon$ apart in normalized $\ell_p$ distance when its points are listed once. Periodically repeating this list preserves the strict separation on every sufficiently long prefix. Each covering vector can therefore approximate at most one of these traces. The assertion concerns one fixed class of dimension $d$, at every accuracy in the theorem's range and every sufficiently large horizon, rather than every class of dimension $d$.

A fixed infinite domain is necessary for this asymptotic statement: on a finite domain every binary class is finite, and one causal expert per member gives $N_p^{\mathrm{dyn}}(\mathcal H,\varepsilon,T)\le|\mathcal H|$ for every accuracy and horizon.

For a nonnegative integer $d$, define the worst-case quantity
\begin{equation}
  \label{eq:worstcase}
  \mathfrak M_{d,p}(\varepsilon) :=\sup_{T\ge1}\; \sup_{\substack{ \mathcal X,\ \varnothing\ne\mathcal F\subseteq[0,1]^{\mathcal X}:\\
  \operatorname{sfat}_{\varepsilon/32}(\mathcal F)\le d }} N_p^{\mathrm{dyn}}(\mathcal F,\varepsilon,T).
\end{equation}
Theorems~\ref{thm:main} and~\ref{thm:lower} give, for $d\ge1$, $0<\varepsilon<2^{-1-1/p}$, and every $T\ge T_0(d,p,\varepsilon)$ from Theorem~\ref{thm:lower},
\begin{equation}
  \label{eq:sharp-bounds}
  \begin{aligned}
  \left(\frac{1}{2^{p+1}\varepsilon^p}\right)^d
  &<N_p^{\mathrm{dyn}}(\mathcal H_d,\varepsilon,T)
  \le\mathfrak M_{d,p}(\varepsilon)\le\left(\frac{C}{\varepsilon^p}\right)^d.
  \end{aligned}
\end{equation}
In particular, since $2^{1+1/p}\le4$ and $C^{1/p}\le C$, for every $1\le p<\infty$, $d\ge1$, $0<\varepsilon<1/4$, and $T\ge T_0(d,p,\varepsilon)$,
\begin{equation}
  \label{eq:sharp-universal-bases}
  \begin{aligned}
  \left(\frac{1}{4\varepsilon}\right)^{pd}
  &<N_p^{\mathrm{dyn}}(\mathcal H_d,\varepsilon,T)
  \le\mathfrak M_{d,p}(\varepsilon)\le\left(\frac{C}{\varepsilon}\right)^{pd}.
  \end{aligned}
\end{equation}
For $d=0$, $\mathfrak M_{0,p}(\varepsilon)=1$ on the same range. Together with Theorem~\ref{thm:main}, the preceding bounds prove Theorem~\ref{thm:intro-main} with $c=1/4$.

\subsection{Binary Classes and Sequential Covering}
\label{subsec:consequences}

We record two consequences with their proofs. The first gives binary-valued experts for binary classes; the second restricts the global family to a prescribed instance tree.

\paragraph{Binary Hamming covers.}
For $u,v\in\{0,1\}^T$, let
\begin{equation*}
  d_{\mathrm H,T}(u,v) :=\frac1T\sum_{t=1}^T\mathbf 1\{u_t\ne v_t\}
\end{equation*}
be normalized Hamming distance. For $\mathcal H\subseteq\{0,1\}^{\mathcal X}$, define $N_{\mathrm{bin}}^{\mathrm{dyn}}(\mathcal H,\delta,T)$ as in Definition~\ref{def:dynamic-cover}, with binary-valued experts $g_t:\mathcal X^t\to\{0,1\}$ and with $d_{\mathrm H,T}(g[x_{1:T}],h[x_{1:T}])\le\delta$ in place of the $\ell_p$ requirement. Requiring binary outputs distinguishes this number from $N_1^{\mathrm{dyn}}$, which permits real outputs.

\begin{corollary}[Sharp binary dynamic covering]
\label{cor:binary}
There is a universal constant $C_{\mathrm{bin}}\ge1$ such that, for every binary class $\mathcal H$ with $d=\operatorname{Ldim}(\mathcal H)<\infty$, every $T\ge1$, and every $0<\delta<1$,
\begin{equation}
  \label{eq:binary-upper}
  N_{\mathrm{bin}}^{\mathrm{dyn}}(\mathcal H,\delta,T) \le\left(\frac{C_{\mathrm{bin}}}{\delta}\right)^d.
\end{equation}
Conversely, for each integer $d\ge1$, the same binary class $\mathcal H_d$ from Theorem~\ref{thm:lower} has Littlestone dimension $d$ and, for every $0<\delta<1/4$, every horizon $T\ge T_0(d,1,\delta)$ from that theorem satisfies
\begin{equation}
  \label{eq:binary-lower}
  N_{\mathrm{bin}}^{\mathrm{dyn}}(\mathcal H_d,\delta,T) >\left(\frac{1}{4\delta}\right)^d.
\end{equation}
\end{corollary}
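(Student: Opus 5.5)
The plan is to derive both halves from Theorem~\ref{thm:main} and Theorem~\ref{thm:lower} with $p=1$. The upper bound uses rounding. The lower bound is a simple comparison: binary experts are a special case of real-valued ones.

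\emph{Upper bound.} Let $d=\operatorname{Ldim}(\mathcal H)<\infty$ and $0<\delta<1$. Set $\varepsilon=\delta/2<1/2$. Binary classes satisfy $\operatorname{sfat}_{\gamma}(\mathcal H)=\operatorname{Ldim}(\mathcal H)$ for all $0<\gamma\le1$, so $\operatorname{sfat}_{\varepsilon/32}(\mathcal H)=d$. Theorem~\ref{thm:main} with $p=1$ then gives a global dynamic $\ell_1$ cover $\mathcal G$ with $|\mathcal G|\le(2C/\delta)^d$.

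For each $g\in\mathcal G$, define the binary expert $\tilde g_t(x_{1:t})=\mathbf 1\{g_t(x_{1:t})\ge1/2\}$. It is causal, since it is a pointwise function of $g_t$. Whenever $\tilde g_t\ne h(x_t)$ with $h(x_t)\in\{0,1\}$, we have $|g_t-h(x_t)|\ge1/2$. Summing over rounds,
\[
d_{\mathrm H,T}(\tilde g[x_{1:T}],h[x_{1:T}])\le 2\rho_{1,T}(g[x_{1:T}],h[x_{1:T}])\le2\varepsilon=\delta .
\]
So the rounded family covers $\mathcal H$ in the binary sense, and taking $C_{\mathrm{bin}}=2C$ gives \eqref{eq:binary-upper}. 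The case $d=0$ is covered by the same bound with value one, or directly by Theorem~\ref{thm:main}.

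\emph{Lower bound.} Take $p=1$ in Theorem~\ref{thm:lower}. Its admissible range is $\varepsilon<2^{-2}=1/4$, so every $0<\delta<1/4$ may be used with $\varepsilon=\delta$. Let $\mathcal G$ be a binary dynamic Hamming cover of $\mathcal H_d$ at accuracy $\delta$ and horizon $T\ge T_0(d,1,\delta)$. Each binary expert is also a $[0,1]$-valued causal expert. For binary vectors, normalized Hamming distance equals normalized $\ell_1$ distance, so $\mathcal G$ is also a global dynamic $\ell_1$ cover at accuracy $\delta$. Hence
\[
N_{\mathrm{bin}}^{\mathrm{dyn}}(\mathcal H_d,\delta,T)\ge N_1^{\mathrm{dyn}}(\mathcal H_d,\delta,T)>\left(\frac1{4\delta}\right)^d .
\]
Here $2^{p+1}\varepsilon^p=4\delta$ at $p=1$. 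Theorem~\ref{thm:lower} also supplies $\operatorname{Ldim}(\mathcal H_d)=d$.

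Nothing here is a real obstacle once the two theorems are available. The only point needing care is the constant bookkeeping in the rounding step. The factor $2$ loss must use the exact mistake-to-error inequality $\mathbf 1\{\tilde g_t\ne h(x_t)\}\le 2|g_t-h(x_t)|$. We must also check that $\varepsilon=\delta/2$ stays inside the range $\varepsilon<1/2$ of Theorem~\ref{thm:main} for every $\delta<1$.
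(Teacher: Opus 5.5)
Your proposal is correct and follows the paper's proof: the upper bound applies Theorem~\ref{thm:main} with $p=1$, $\varepsilon=\delta/2$ and rounds at $1/2$ using $\mathbf 1\{\bar g_t\ne h(x_t)\}\le 2|g_t-h(x_t)|$, giving $C_{\mathrm{bin}}=2C$. The lower bound applies Theorem~\ref{thm:lower} with $p=1$, $\varepsilon=\delta$, together with $N_{\mathrm{bin}}^{\mathrm{dyn}}\ge N_1^{\mathrm{dyn}}$.
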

\begin{proof}
Apply Theorem~\ref{thm:main} with $p=1$ and $\varepsilon=\delta/2$. The sequential fat-shattering dimension at scale $\delta/64$ equals $d$, so the resulting real-valued family has size at most $(2C/\delta)^d$. Round each prediction to $\bar g_t(x_{1:t})=\mathbf 1\{g_t(x_{1:t})\ge1/2\}$. For a binary target,
\begin{equation*}
  \mathbf 1\{\bar g_t(x_{1:t})\ne h(x_t)\} \le2|g_t(x_{1:t})-h(x_t)|.
\end{equation*}
The rounded family is causal and has normalized Hamming error at most $\delta$, proving \eqref{eq:binary-upper} with $C_{\mathrm{bin}}=2C$. For the lower bound, apply Theorem~\ref{thm:lower} with $p=1$ and $\varepsilon=\delta$ at each $T\ge T_0(d,1,\delta)$, and use $N_{\mathrm{bin}}^{\mathrm{dyn}}\ge N_1^{\mathrm{dyn}}$.
\end{proof}

In particular, Corollary~\ref{cor:binary} resolves the global approximate covering question posed by Alon, Ben-Eliezer, Dagan, Moran, Naor, and Yogev~\cite{alon2021adversarial} and the dynamic Haussler problem of Malliaris and Moran~\cite{malliaris2021agnostic}. For binary vectors, $\rho_{p,T}(u,v)^p=d_{\mathrm H,T}(u,v)$. For $0<\varepsilon<1$, taking $\delta=\varepsilon^p$ in \eqref{eq:binary-upper} therefore also gives a global cover by \emph{binary} experts with normalized $\ell_p$ accuracy $\varepsilon$ and size at most $(C_{\mathrm{bin}}/\varepsilon^p)^d$.

\paragraph{Covers on a fixed instance tree.}
For a tree $\mathbf x=(\mathbf x_t)_{t=1}^T$ and a sign path $\sigma\in\{-1,+1\}^T$, write $\mathbf x(\sigma)=(\mathbf x_t(\sigma_{<t}))_{t=1}^T$. If $\mathbf x$ is an instance tree, $f(\mathbf x(\sigma))$ denotes the vector obtained by applying $f$ to each coordinate. The normalized \emph{sequential $\ell_p$ covering number} $N_p^{\mathrm{seq}}(\mathcal F,\varepsilon,\mathbf x)$ is the least cardinality of a finite family $V$ of $[0,1]$-valued prediction trees of depth $T$ such that
\begin{equation}
  \label{eq:sequential-cover}
  \forall f\in\mathcal F\; \forall\sigma\in\{-1,+1\}^T\; \exists\mathbf v\in V:\qquad \rho_{p,T}\bigl(\mathbf v(\sigma),f(\mathbf x(\sigma))\bigr) \le\varepsilon.
\end{equation}
As before, the covering number is $\infty$ if no finite family exists. Here $V$ may depend on the full tree $\mathbf x$, as in the sequential covering definition of Rakhlin, Sridharan, and Tewari~\cite{rakhlin2015sequential}. Set
\begin{equation*}
  N_p^{\mathrm{seq}}(\mathcal F,\varepsilon,T) :=\sup_{\mathbf x} N_p^{\mathrm{seq}}(\mathcal F,\varepsilon,\mathbf x),
\end{equation*}
where the supremum ranges over all instance trees of depth $T$.

\begin{corollary}[Sequential $\ell_p$ covering]
\label{cor:tree}
For every $1\le p<\infty$, $T\ge1$, and $\varepsilon>0$,
\begin{equation}
  \label{eq:seq-versus-global}
  N_p^{\mathrm{seq}}(\mathcal F,\varepsilon,T) \le N_p^{\mathrm{dyn}}(\mathcal F,\varepsilon,T).
\end{equation}
Consequently, if $0<\varepsilon<1/2$ and $d=\operatorname{sfat}_{\varepsilon/32}(\mathcal F)<\infty$, then
\begin{equation}
  \label{eq:sequential-upper}
  N_p^{\mathrm{seq}}(\mathcal F,\varepsilon,T) \le\left(\frac{C}{\varepsilon^p}\right)^d.
\end{equation}
For each $d\ge1$, the same class $\mathcal H_d$ from Theorem~\ref{thm:lower} witnesses the sharp accuracy exponent $pd$ for these sequential covering numbers, for all finite $p$, all $0<\varepsilon<2^{-1-1/p}$, and every $T\ge T_0(d,p,\varepsilon)$ from that theorem.
\end{corollary}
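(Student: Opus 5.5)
The plan is to establish the comparison \eqref{eq:seq-versus-global} by restricting a global family to the given tree, and then read off the other two assertions from Theorems~\ref{thm:main} and~\ref{thm:lower}.

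\emph{Restriction.} Fix an instance tree $\mathbf x$ of depth $T$. We may assume $N_p^{\mathrm{dyn}}(\mathcal F,\varepsilon,T)<\infty$, since otherwise there is nothing to prove. Let $\mathcal G$ be an optimal global dynamic cover. For each $g\in\mathcal G$, define a prediction tree $\mathbf v^g$ of depth $T$ by
\[
  \mathbf v^g_t(\sigma_{<t}) := g_t\bigl(\mathbf x_1,\mathbf x_2(\sigma_{<2}),\ldots,\mathbf x_t(\sigma_{<t})\bigr).
\]
This is a legitimate $[0,1]$-valued tree: the instance path up to level $t$ depends only on $\sigma_{<t}$, and the prediction $g_t$ is causal in $x_{1:t}$, so $\mathbf v^g_t$ is a function of $\sigma_{<t}$ alone. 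Now fix $f\in\mathcal F$ and $\sigma\in\{-1,+1\}^T$. Apply \eqref{eq:dynamic-cover} to the sequence $\mathbf x(\sigma)$, which yields some $g\in\mathcal G$ with
\[
  \rho_{p,T}\bigl(\mathbf v^g(\sigma),f(\mathbf x(\sigma))\bigr)\le\varepsilon,
\]
because $\mathbf v^g(\sigma)=g[\mathbf x(\sigma)]$. Thus $V=\{\mathbf v^g:g\in\mathcal G\}$ is a sequential cover of size at most $|\mathcal G|$. Taking the supremum over $\mathbf x$ gives \eqref{eq:seq-versus-global}.

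\emph{Upper bound.} Combining \eqref{eq:seq-versus-global} with Theorem~\ref{thm:main} gives \eqref{eq:sequential-upper} directly.

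\emph{Lower bound.} For the sharpness claim, take $d\ge1$, $p$, $\varepsilon$ in range, $T\ge T_0$, and the periodic sequence $\bar x$ from Theorem~\ref{thm:lower}. Let $\mathbf x$ be the constant tree with $\mathbf x_t(\cdot)=\bar x_t$. Any sequential cover $V$ on this tree, evaluated along a single fixed path (say $\sigma\equiv+1$), gives a family of vectors $\mathbf v(\sigma)\in[0,1]^T$. These vectors cover every $h\in\mathcal H_d$ on the prefix $\bar x_{1:T}$, because every path of the constant tree traces $\bar x_{1:T}$. Theorem~\ref{thm:lower} explicitly covers this case, since it applies even to arbitrary covering vectors on the prefix. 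Hence
\[
  N_p^{\mathrm{seq}}(\mathcal H_d,\varepsilon,T)>\bigl(2^{p+1}\varepsilon^p\bigr)^{-d}.
\]
Together with \eqref{eq:sequential-upper} and $\operatorname{sfat}_{\gamma}(\mathcal H_d)=d$, this shows the exponent $pd$ is sharp.

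The only step requiring care is checking that the restricted tree $\mathbf v^g$ is well defined, i.e.\ that causality matches the tree's prefix dependence. This is straightforward, so I expect no genuine obstacle.
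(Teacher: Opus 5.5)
Your proposal is correct and matches the paper's proof essentially step for step. You restrict each global expert to the tree via $\mathbf v^g_t(\sigma_{<t})=g_t(\mathbf x_1,\ldots,\mathbf x_t(\sigma_{<t}))$ and invoke Theorem~\ref{thm:main} for the upper bound. For sharpness, you evaluate any sequential cover of the constant tree built from $\bar x_{1:T}$ along one fixed path, which reduces to the fixed-prefix, arbitrary-vector lower bound of Theorem~\ref{thm:lower}.
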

\begin{proof}
Fix an instance tree $\mathbf x$ and a global family $\mathcal G$. For each $g\in\mathcal G$, define a prediction tree by restriction:
\begin{equation*}
  \mathbf v_t^g(\sigma_{<t}) :=g_t\bigl(\mathbf x_1(\varnothing), \mathbf x_2(\sigma_1),\ldots, \mathbf x_t(\sigma_{<t})\bigr).
\end{equation*}
For every target $f$ and path $\sigma$, the global covering property applied to the instance sequence $\mathbf x(\sigma)$ supplies a member of this family satisfying \eqref{eq:sequential-cover}. Its cardinality is at most $|\mathcal G|$. Taking the minimum over global covers and then the supremum over instance trees proves \eqref{eq:seq-versus-global}; if no finite global cover exists, the inequality is immediate. Theorem~\ref{thm:main} gives \eqref{eq:sequential-upper}.

For sharpness, fix any $T\ge T_0(d,p,\varepsilon)$ from Theorem~\ref{thm:lower}, take the length-$T$ prefix $\bar x_{1:T}$ of its periodic sequence, and set $\mathbf x_t(\sigma_{<t})=\bar x_t$ at every node of level $t$. Fixing any one sign path turns each prediction tree into one covering vector for that prefix. The fixed-sequence lower bound therefore gives
\begin{equation*}
  N_p^{\mathrm{seq}}(\mathcal H_d,\varepsilon,T) >\left(\frac{1}{2^{p+1}\varepsilon^p}\right)^d
\end{equation*}
for the fixed class in that theorem, throughout its stated parameter range and at every $T\ge T_0(d,p,\varepsilon)$.
\end{proof}

In particular, the case $p=2$ resolves the horizon-independent sequential $\ell_2$ covering question posed by Rakhlin, Sridharan, and Tewari~\cite{rakhlin2015sequential}.

\section{Correction Schedules and Adaptive Sampling}
\label{sec:foundations}

We establish the deterministic and probabilistic ingredients used to prove Proposition~\ref{prop:core}. We first justify the reduction to a separated class and construct finite families of rank-decreasing correction schedules. We then encode prediction errors by partial binary functions and prove a sampling estimate that is uniform over such functions, even when the inputs depend on past sampling choices. The choice of sampling parameters and the construction of the master and replay histories are deferred to Section~\ref{sec:accuracy}.

\subsection{Separation and Rank-Decreasing Centers}
\label{subsec:foundations-separation}

The reduction in Section~\ref{subsec:overview-reduction} uses a uniformly separated subclass, chosen independently of the instance sequence. We first give the existence argument, including the case of an infinite function class.

\begin{lemma}[Uniformly separated reduction]
\label{lem:net}
For every nonempty $\mathcal F\subseteq[0,1]^{\mathcal X}$ and every $\eta>0$, there is a nonempty subclass $\mathcal H\subseteq\mathcal F$ such that
\begin{equation}
  \label{eq:separated-net}
  \|h-h'\|_\infty>\eta\quad(h\ne h',\ h,h'\in\mathcal H), \qquad \forall f\in\mathcal F\;\exists h\in\mathcal H: \|f-h\|_\infty\le\eta.
\end{equation}
Moreover, $\operatorname{sfat}_{\eta}(\mathcal H) \le\operatorname{sfat}_{\eta}(\mathcal F)$.
\end{lemma}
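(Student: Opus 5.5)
The plan is a Zorn's lemma argument for a maximal separated subclass, followed by a one-line monotonicity observation. Consider the collection $\mathscr S$ of all subsets $\mathcal H\subseteq\mathcal F$ that are uniformly $\eta$-separated, meaning $\|h-h'\|_\infty>\eta$ for all distinct $h,h'\in\mathcal H$. We order $\mathscr S$ by inclusion. It is nonempty, since $\mathcal F\neq\varnothing$ and every singleton $\{f\}$ with $f\in\mathcal F$ belongs to $\mathscr S$.

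The first step is to check the chain condition. Let $(\mathcal H_\lambda)$ be a chain in $\mathscr S$ and let $\mathcal U=\bigcup_\lambda\mathcal H_\lambda$. Any two distinct $h,h'\in\mathcal U$ lie in $\mathcal H_\lambda$ and $\mathcal H_\mu$ respectively. Because the family is totally ordered, both lie in the larger of the two sets. That set is separated, so $\|h-h'\|_\infty>\eta$. Hence $\mathcal U\in\mathscr S$ is an upper bound for the chain. Zorn's lemma then yields a maximal element $\mathcal H$, which is nonempty because it contains some singleton.

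The second step is the covering property. Suppose some $f\in\mathcal F$ had $\|f-h\|_\infty>\eta$ for every $h\in\mathcal H$. Then $f\notin\mathcal H$, since otherwise $h=f$ would give distance $0$. So $\mathcal H\cup\{f\}$ would be separated and strictly larger than $\mathcal H$, contradicting maximality. Therefore every $f\in\mathcal F$ lies within uniform distance $\eta$ of some $h\in\mathcal H$. Note that the definition of $\|\cdot\|_\infty$ as a supremum over $\mathcal X$ causes no issue here, because functions take values in $[0,1]$ and the supremum is finite.

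The final step is the dimension bound. Since $\mathcal H\subseteq\mathcal F$, any tree sequentially fat-shattered by $\mathcal H$ at scale $\eta$ is fat-shattered by $\mathcal F$ with the same witnesses and thresholds. This gives $\operatorname{sfat}_\eta(\mathcal H)\le\operatorname{sfat}_\eta(\mathcal F)$, which is the stated monotonicity under inclusion.

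There is no real obstacle. The only point needing care is the infinite-class case, which requires Zorn's lemma, or equivalently a well-ordering with greedy transfinite selection, in place of a finite greedy procedure.
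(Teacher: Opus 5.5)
Your proposal is correct and follows the paper's proof essentially verbatim: Zorn's lemma on the inclusion-ordered separated subclasses, the chain-union check, maximality for the covering property, and monotonicity of $\operatorname{sfat}_\eta$ under inclusion. One cosmetic point: justify the nonemptiness of the maximal element as the paper does, by noting that $\varnothing$ cannot be maximal because any singleton $\{f\}\subseteq\mathcal F$ strictly extends it, rather than saying it ``contains some singleton.''
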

\begin{proof}
Order the uniformly greater-than-$\eta$ separated subclasses of $\mathcal F$ by inclusion. The union of a chain is still separated: any two members of the union belong together to one class in the chain. Zorn's lemma therefore supplies a maximal separated subclass $\mathcal H$. It is nonempty, since a singleton could otherwise be added. If some $f\in\mathcal F$ had distance greater than $\eta$ from every member of $\mathcal H$, adjoining $f$ would contradict maximality. This proves the approximation property. The dimension inequality follows from monotonicity under inclusion.
\end{proof}

Together with the calculation in \eqref{eq:overview-lp-reduction}, Lemma~\ref{lem:net} reduces the main upper bound to Proposition~\ref{prop:core}. For the proof of that proposition, fix a nonempty separated class $\mathcal H$ and parameters $\eta,R,T,d$ satisfying its hypotheses. The \emph{rank} of a class $V$ means $\operatorname{sfat}_{\eta}(V)$. Separation is needed for the following consequence of rank zero, rather than for the schedule construction itself.

\begin{lemma}[Rank-zero classes]
\label{lem:rank-zero}
If $\varnothing\ne V\subseteq[0,1]^{\mathcal X}$ has rank zero, then
\begin{equation}
  \label{eq:rank-zero-diameter}
  \sup_{f\in V}f(x)-\inf_{f\in V}f(x)\le\eta \qquad(x\in\mathcal X).
\end{equation}
If, in addition, $V$ is a subclass of a uniformly greater-than-$\eta$ separated class, then $V$ is a singleton.
\end{lemma}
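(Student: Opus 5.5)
The plan is a contrapositive argument using depth-one trees, followed by a short separation argument.

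For the diameter bound \eqref{eq:rank-zero-diameter}, suppose it fails at some $x\in\mathcal X$. Then there are $f_-,f_+\in V$ with $f_+(x)-f_-(x)>\eta$; if the supremum minus infimum exceeds $\eta$, such a pair exists by the definition of the supremum and infimum. Build the depth-one instance tree $\mathbf x_1(\varnothing)=x$ with threshold
\[
\mathbf s_1(\varnothing)=\frac{f_-(x)+f_+(x)}{2}.
\]
For $\sigma_1=+1$ use the witness $f_+$, and for $\sigma_1=-1$ use $f_-$. In both cases
\[
\sigma_1\bigl(f(x)-\mathbf s_1\bigr)=\frac{f_+(x)-f_-(x)}{2}>\frac{\eta}{2},
\]
so \eqref{eq:sfat-definition} holds at depth one. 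Hence $\operatorname{sfat}_\eta(V)\ge1$, contradicting rank zero. In fact the same construction works whenever the gap is at least $\eta$, so the inequality holds with room to spare. This matches the remark already made after Theorem~\ref{thm:main}.

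For the singleton claim, $V$ is nonempty by assumption. Suppose $V$ contained distinct $h,h'$. Uniform separation of the ambient class gives $\|h-h'\|_\infty>\eta$. Hence there is a point $x$ with $|h(x)-h'(x)|>\eta$, again by the definition of the supremum. This contradicts \eqref{eq:rank-zero-diameter} at $x$.

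There is no real obstacle here. The only point needing care is strictness: separation is a strict inequality on a supremum, so it yields a single point with a strict gap. The diameter bound is also non-strict and so is compatible with this. Both directions therefore close without any limiting argument.
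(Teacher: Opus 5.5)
Your proof is correct and follows essentially the same route as the paper. In both, two values at one instance differing by at least $\eta$ shatter a depth-one tree with the midpoint threshold, and separation of the ambient class yields such a pair from any two distinct members; the only cosmetic difference is that you extract a near-extremal pair from $\sup-\inf>\eta$, whereas the paper shows every pairwise gap is below $\eta$ and then passes to the supremum.
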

\begin{proof}
Two functions whose values at an instance differ by at least $\eta$ would shatter a depth-one tree, using the midpoint of their values as the threshold. Thus every pairwise difference at a fixed instance is less than $\eta$. Taking the supremum of these differences gives \eqref{eq:rank-zero-diameter}; the supremum need not be attained. If distinct $h,h'\in V$ had $\|h-h'\|_\infty>\eta$, some instance would satisfy $|h(x)-h'(x)|>\eta$, contradicting the same argument.
\end{proof}

In particular, a separated class of rank zero is covered exactly by its unique member. This proves the $d=0$ case of Proposition~\ref{prop:core}. The remaining local lemmas apply to arbitrary finite-rank classes, so they may also be used for version spaces arising later in the construction.

\paragraph{Rank-decreasing centers and sign updates.}

We need a prediction rule determined by the current version space and instance, with the property that a sufficiently separated correction strictly lowers rank. The elementary operation behind this rule is to attach two shattered trees below a common root.

\begin{lemma}[Gluing shattered trees]
\label{lem:glue}
Let $V\subseteq[0,1]^{\mathcal X}$, $x\in\mathcal X$, and $z_1-z_0\ge\eta$. If both classes
\begin{equation*}
  \{f\in V:f(x)\le z_0\}, \qquad \{f\in V:f(x)\ge z_1\}
\end{equation*}
have rank at least an integer $r\ge0$, then $\operatorname{sfat}_{\eta}(V)\ge r+1$.
\end{lemma}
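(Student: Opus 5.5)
The plan is to build an explicit shattered tree of depth $r+1$ for $V$. Write $V_0=\{f\in V:f(x)\le z_0\}$ and $V_1=\{f\in V:f(x)\ge z_1\}$. Since both have rank at least $r$, and rank is the largest shattered depth, each $V_b$ sequentially fat-shatters some instance tree $\mathbf x^{(b)}$ of depth exactly $r$ at scale $\eta$, with a threshold tree $\mathbf s^{(b)}$. If a class shatters depth $r'\ge r$, truncating the tree to its first $r$ levels keeps the property, because condition \eqref{eq:sfat-definition} for level $t\le r$ depends only on the prefix $\sigma_{<t}$. If $r=0$, the trees are empty and we only need $V_0,V_1$ nonempty, which holds because rank $\ge0$ means nonempty.

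Next we glue the two trees under a common root. Define a depth-$(r+1)$ tree $\mathbf x$ with root $\mathbf x_1(\varnothing)=x$ and threshold $\mathbf s_1(\varnothing)=(z_0+z_1)/2$. Below the root, the subtree selected by the first sign is used: for $t\ge2$, set $\mathbf x_t(\sigma_{<t})=\mathbf x^{(b)}_{t-1}(\sigma_{2:t-1})$ and $\mathbf s_t(\sigma_{<t})=\mathbf s^{(b)}_{t-1}(\sigma_{2:t-1})$, where $b=0$ if $\sigma_1=-1$ and $b=1$ if $\sigma_1=+1$. These are legitimate tree maps, since level $t$ depends only on $\sigma_{<t}$.

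To verify shattering, fix $\sigma\in\{-1,+1\}^{r+1}$ and let $b$ correspond to $\sigma_1$. The shattering of $\mathbf x^{(b)}$ by $V_b$, applied to the path $(\sigma_2,\ldots,\sigma_{r+1})$, supplies some $f\in V_b\subseteq V$ satisfying the margin condition at levels $2,\ldots,r+1$. At the root, if $\sigma_1=-1$ then $f(x)\le z_0$, so $\mathbf s_1-f(x)\ge(z_1-z_0)/2\ge\eta/2$. Symmetrically, if $\sigma_1=+1$ then $f(x)\ge z_1$, giving margin at least $\eta/2$ again. Hence $V$ shatters a tree of depth $r+1$, and $\operatorname{sfat}_\eta(V)\ge r+1$.

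No step is a real obstacle. The only points needing care are the index shift in the glued tree, and checking that both branch classes are nonempty and can be taken at exact depth $r$, which is handled by the truncation remark above.
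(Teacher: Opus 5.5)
Your proposal is correct and is the same argument as the paper's: attach depth-$r$ shattered trees for the two subclasses below a new root at $x$ with threshold $(z_0+z_1)/2$, and use $z_1-z_0\ge\eta$ to get the margin $\eta/2$ at the root. You simply write out the index shift, the truncation to exact depth $r$, and the $r=0$ case, which the paper's one-line proof leaves implicit.
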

\begin{proof}
Attach depth-$r$ shattered trees of the two subclasses below a new root at $x$ with threshold $(z_0+z_1)/2$; the assumption $z_1-z_0\ge\eta$ guarantees the required margin $\eta/2$ at the root.
\end{proof}

For a nonempty class $V$ of finite rank $r=\operatorname{sfat}_{\eta}(V)$, define the two critical thresholds and their center by
\begin{equation}
  \label{eq:rank-center}
  \begin{aligned}
    \ell(V,x) &:=\inf\bigl\{z\in[0,1]: \operatorname{sfat}_{\eta}(\{f\in V:f(x)\le z\})=r\bigr\},\\
    u(V,x) &:=\sup\bigl\{z\in[0,1]: \operatorname{sfat}_{\eta}(\{f\in V:f(x)\ge z\})=r\bigr\},\\
    a(V,x)&:=\tfrac12\bigl(\ell(V,x)+u(V,x)\bigr).
  \end{aligned}
\end{equation}
Both defining sets are nonempty: the lower fiber equals $V$ at $z=1$, and the upper fiber equals $V$ at $z=0$. Hence the thresholds and the center belong to $[0,1]$. We do not assume that the infimum or supremum is attained.

\begin{lemma}[Rank-decreasing center]
\label{lem:center}
For every such $V$ and $x$, the two sign-update classes
\begin{equation}
  \label{eq:sign-update-fibers}
  V_x^-:=\{f\in V:f(x)\le a(V,x)-\eta\}, \qquad V_x^+:=\{f\in V:f(x)\ge a(V,x)+\eta\}
\end{equation}
have rank at most $r-1$, with the convention $\operatorname{sfat}_{\eta}(\varnothing)=-1$.
\end{lemma}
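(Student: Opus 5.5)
We argue by contradiction, separately for $V_x^-$ and $V_x^+$, using Lemma~\ref{lem:glue}. Write $\ell=\ell(V,x)$, $u=u(V,x)$, $a=a(V,x)$, and $L_z=\{f\in V:f(x)\le z\}$, $U_z=\{f\in V:f(x)\ge z\}$. By monotonicity under inclusion, $\operatorname{sfat}_\eta(L_z)$ is nondecreasing in $z$ and $\operatorname{sfat}_\eta(U_z)$ is nonincreasing in $z$, and both are at most $r$. Hence $L_z$ has rank $r$ for every $z>\ell$, and possibly at $z=\ell$. Likewise $U_z$ has rank $r$ for every $z<u$, and its rank is strictly below $r$ for every $z>u$; and for $z<\ell$ the rank of $L_z$ is at most $r-1$.

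The first step is the key inequality $u-\ell<\eta$, which is really a statement about how close the two critical thresholds can be. Suppose instead that $u-\ell\ge\eta$. If $u-\ell>\eta$, pick $z_0>\ell$ and $z_1<u$ with $z_1-z_0\ge\eta$. Then $L_{z_0}$ and $U_{z_1}$ both have rank $r$, so Lemma~\ref{lem:glue} gives $\operatorname{sfat}_\eta(V)\ge r+1$, a contradiction. The boundary case $u-\ell=\eta$ exactly is more delicate because the infimum and supremum need not be attained. Here I would argue with a fixed tree: any depth-$r$ shattering of $L_{z}$ for $z$ slightly above $\ell$ uses finitely many witnesses only along each path. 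I expect this attainment issue to be the main obstacle.

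The cleanest fix is to avoid the exact boundary by using the slack in the conclusion. Observe first that $V_x^-=L_{a-\eta}$ and $V_x^+=U_{a+\eta}$. If $u-\ell\le\eta$, then $a-\eta=(\ell+u)/2-\eta\le\ell-(\eta-(u-\ell))/2$. This is strictly less than $\ell$ when $u-\ell<\eta$, and in that case $L_{a-\eta}$ has rank at most $r-1$ by the definition of the infimum. Symmetrically, $a+\eta>u$ gives rank at most $r-1$ for $U_{a+\eta}$. So it remains to handle $u-\ell=\eta$ and $u-\ell>\eta$. The case $u-\ell>\eta$ is excluded above. When $u-\ell=\eta$ we have $a-\eta=\ell-\eta/2<\ell$ and $a+\eta=u+\eta/2>u$, both strict, so the same conclusion holds.

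In fact, whenever $u-\ell\le\eta$, we get $a-\eta\le\ell-\eta/2<\ell$ and $a+\eta\ge u+\eta/2>u$, so the boundary worry disappears entirely. Only the gluing contradiction for $u-\ell>\eta$ is needed, and that case has strict room to choose $z_0,z_1$. Empty fibers have rank $-1\le r-1$ by convention, which also covers the degenerate cases $r=0$ and thresholds at the ends of $[0,1]$.
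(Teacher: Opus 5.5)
Your argument is correct and matches the paper's proof. Lemma~\ref{lem:glue} rules out $u-\ell>\eta$, and then $a-\eta\le\ell-\eta/2<\ell$ and $a+\eta\ge u+\eta/2>u$ place both sign-update fibers strictly beyond the critical thresholds, or make them empty. The detour through the boundary case $u-\ell=\eta$ and the stated goal $u-\ell<\eta$ is unnecessary, as you eventually conclude yourself, since only $u-\ell\le\eta$ is needed.
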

\begin{proof}
Abbreviate the thresholds by $\ell,u$ and the center by $a$. Suppose first that $u-\ell>\eta$. Choose $\ell<z_0<z_1<u$ with $z_1-z_0\ge\eta$. By the definition of $\ell$, some admissible lower threshold is smaller than $z_0$; monotonicity therefore makes the lower fiber at $z_0$ have rank $r$. The corresponding argument for $u$ makes the upper fiber at $z_1$ have rank $r$. Lemma~\ref{lem:glue} would give rank at least $r+1$ for $V$, a contradiction. Consequently $u-\ell\le\eta$, and
\begin{equation*}
  a-\eta\le\ell-\eta/2<\ell, \qquad a+\eta\ge u+\eta/2>u.
\end{equation*}
If $a-\eta<0$, the lower sign-update class is empty. Otherwise, its threshold is strictly below $\ell$, so its rank cannot be $r$. Similarly, the upper class is either empty because $a+\eta>1$, or has rank smaller than $r$ because its threshold exceeds $u$. Ranks are integers, which proves both bounds.
\end{proof}

A \emph{sign update} replaces the current nonempty version space $V$ by $V_x^-$ or $V_x^+$. A nonempty update is called valid. Starting from a class of rank at most $r$, any sequence of $i$ valid updates leaves a version space of rank at most $r-i$; in particular, there can be at most $r$ valid updates. The update is specified by its sign and uses the fixed center rule in \eqref{eq:rank-center}, not a target value or a discretized approximation to that value.

\subsection{Correction Schedules and Pointwise Covers}
\label{subsec:foundations-schedules}

We next turn prescribed sign updates into label-free causal simulations. Fix an initial nonempty version space $V_0$ of rank at most $r$ and a positive integer $m$. A selection history specifies which rounds are selected. A selected round has an \emph{ordinal} $s$, meaning that it is the $s$th selected round, regardless of its position in the original instance sequence.

A \emph{correction schedule} of depth $i$ is a list
\begin{equation}
  \label{eq:sign-schedule}
  J=((s_1,\sigma_1),\ldots,(s_i,\sigma_i)), \qquad 1\le s_1\le\cdots\le s_i\le m, \qquad \sigma_j\in\{-,+\}.
\end{equation}

The empty list is the depth-zero schedule. Repeated ordinals are allowed because more than one rank-decreasing correction may be needed at a single instance. The classical Sauer--Shelah--Perles bound~\cite{sauer1972density,shelah1972combinatorial} has dynamic analogues based on the mistake-driven Standard Optimal Algorithm~\cite{ben2009agnostic,malliaris2021agnostic,alon2021adversarial}. Scale-sensitive SOA-based expert constructions for real-valued prediction were developed in~\cite{rakhlin2010online}, encoding correction rounds and discretized prediction values. In contrast, our schedules record only selected ordinals and signs, allowing multiple rank-decreasing updates at one instance without encoding target values.

\paragraph{Simulation and prediction timing.}
Every simulation starts at $V_0$. On round $t$, after observing $x_t$ but before the current selection is revealed, its \emph{provisional prediction} is $a(V,x_t)$, using the version space left by the preceding round. If this round has selected ordinal $s$, the simulation executes all entries of $J$ with $s_j=s$, in their listed order. Each entry applies the indicated sign update, and the center is recomputed from the new version space before the next entry is executed. The simulation emits its \emph{final prediction} only after all these updates. On an unselected round it makes no update, and its final prediction equals its provisional prediction. We denote the provisional and final predictions by $a_t^J$ and $y_t^J$, respectively.

An update to an empty fiber makes the simulation invalid. From that update onward, it remains in an absorbing invalid state and emits zero, including on the current round. If the selection history ends before a prescribed ordinal is reached, the corresponding update is not executed. The same rule is also defined on selection histories longer than $m$: there are simply no prescribed updates at ordinals larger than $m$. These conventions ensure that every formal schedule has an output on every outcome, not only on outcomes where the intended sampling cap is respected.

The simulation uses only the instance history, the selection history, and its fixed schedule. If the selection policy is a deterministic causal function of the instances, the simulation defines a deterministic causal expert. Under random selection, its outputs additionally depend on the selection history, while remaining causal. We will first analyze these simulations and later obtain deterministic experts by the argument in Section~\ref{sec:determinization}.

A valid schedule is \emph{born} once all its prescribed corrections have occurred. The empty schedule is born at time zero. After birth, its version space remains fixed, although its predictions may still vary with the current instance. A born depth-$i$ schedule has residual rank at most $r-i$. Moreover, let $q_t$ be the number of selected rounds up to round $t$. Removing from a schedule $J$ all entries with selection ordinals greater than $q_t$ leaves its version spaces and predictions unchanged on rounds $1,\ldots,t$. The resulting prefix schedule has completed all its prescribed corrections by round $t$, provided they are valid.

\paragraph{The formal schedule family.}
The number of weakly increasing ordinal lists of length $i$ in $[m]$ is $\binom{m+i-1}{i}$, and every list admits $2^i$ sign assignments. Thus, writing $n_0(m)=1$, the exact counts are
\begin{equation}
  \label{eq:Q}
  n_i(m):=2^i\binom{m+i-1}{i}, \qquad Q_r(m):=\sum_{i=0}^r n_i(m).
\end{equation}
These counts include invalid schedules, schedules that never become born, and schedules whose outputs coincide. Consequently $Q_r(m)$ is an upper bound on the number of distinct prediction rules, with no need to recognize or remove such duplications.

\begin{lemma}[Pointwise schedule cover]
\label{lem:pointwise}
Let $\varnothing\ne V_0\subseteq[0,1]^{\mathcal X}$ satisfy $\operatorname{sfat}_{\eta}(V_0)\le r$. With every round selected, the schedules of depth at most $r$ form one family of at most $Q_r(m)$ deterministic causal experts such that, on every instance sequence of length at most $m$, each target in $V_0$ is approximated by one expert with pointwise error at most $\eta$. More generally, on any selection history with at most $m$ selected rounds, a schedule in the same formal family attains error at most $\eta$ on every selected round.
\end{lemma}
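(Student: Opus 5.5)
The plan is to run the target-dependent reference procedure and record its corrections as a schedule. Fix a selection history with at most $m$ selected rounds, an instance sequence, and a target $h\in V_0$. Start with $V=V_0$. On each selected round with ordinal $s$ and instance $x_t$, repeat the following while $|a(V,x_t)-h(x_t)|>\eta$. If $h(x_t)<a(V,x_t)-\eta$, then $h\in V_x^-$, so replace $V$ by $V_x^-$ and append $(s,-)$ to the schedule. If instead $h(x_t)>a(V,x_t)+\eta$, then $h\in V_x^+$, so replace $V$ by $V_x^+$ and append $(s,+)$. In either case recompute the center.

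Each such update is valid, because the new class contains $h$ and is therefore nonempty. By Lemma~\ref{lem:center}, each update strictly lowers the rank. Since $\operatorname{sfat}_\eta(V_0)\le r$ and a nonempty class has rank at least $0$, at most $r$ updates occur in total. So the inner loop terminates on every round, and the recorded list $J$ has depth at most $r$. Its ordinals are weakly increasing and lie in $[m]$, because they are listed in the order they occur and there are at most $m$ selected rounds. Hence $J$ belongs to the formal family of size $Q_r(m)$ counted in \eqref{eq:Q}.

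Next I check that the formal simulation of $J$ reproduces this procedure. By induction over rounds, the simulation and the reference procedure hold the same version space at the start of every round. On an unselected round neither one updates. On a selected round with ordinal $s$, the simulation executes exactly the entries with ordinal $s$, in their listed order, and recomputes the center after each one, just as the reference procedure does. No update is ever invalid, so the absorbing state never occurs. When the loop exits, the final prediction $y_t^J=a(V,x_t)$ satisfies $|y_t^J-h(x_t)|\le\eta$ on every selected round. This proves the general statement.

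The first claim is the special case in which every round is selected. On an instance sequence of length at most $m$, every round then has an ordinal in $[m]$, so the bound holds pointwise on all rounds. The selection rule is deterministic, so each schedule's simulation depends only on $x_{1:t}$ and is a deterministic causal expert. The family has at most $Q_r(m)$ members and does not depend on the target, since it is the full formal family and duplicates are allowed. The only delicate point is the termination and validity of the repeated corrections at a single instance. That is exactly what Lemma~\ref{lem:center} together with membership of $h$ supplies, and it is why repeated ordinals are allowed in \eqref{eq:sign-schedule}.
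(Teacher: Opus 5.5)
Your proposal is correct and follows the paper's proof essentially step for step. You run the target-driven reference procedure, and you use Lemma~\ref{lem:center} together with the fact that the target always survives to bound the corrections by $r$. You then record ordinals and signs as a member of the formal family, and check by induction over rounds and within-round updates that the label-free simulation reproduces the reference states and final predictions.
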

\begin{proof}
Fix a target $f\in V_0$ and an instance-and-selection history. Consider the following reference procedure, which is used only to find a witnessing schedule. It starts from $V_0$ and does not update on unselected rounds. At a selected instance $x$, while $|f(x)-a(V,x)|>\eta$, it updates toward $f(x)$: it uses the plus fiber if $f(x)>a(V,x)+\eta$ and the minus fiber otherwise. It recomputes the center and repeats at the same instance until the error is at most $\eta$.

Every update retains $f$, and Lemma~\ref{lem:center} decreases the rank by at least one. Since the version space stays nonempty, at most $r$ updates occur over the entire history. Thus, the correction loop at each selected round terminates with prediction error at most $\eta$. Record the selected ordinal and sign of each update, retaining their order when an ordinal occurs repeatedly. The resulting list is a schedule of depth at most $r$ in \eqref{eq:sign-schedule}. Its ordinary simulation has exactly the reference procedure's states and final predictions, by induction on rounds and on updates within a round. It therefore has error at most $\eta$ at every selected round.

When all rounds are selected, this is the asserted pointwise cover. The formal family is fixed independently of both the target and the instance sequence. Only the choice of its witnessing member used the target values; no expert in the family receives those values.
\end{proof}

We record both the size estimate and a convolution identity for these counts. The latter will be used when counting the continuation capacity in Section~\ref{sec:determinization}.

\begin{lemma}[Schedule-count estimates]
\label{lem:count}
For integers $m,r\ge1$,
\begin{equation}
  \label{eq:Q-bound}
  Q_r(m)\le2^r\binom{m+r}{r} \le\left[2e\left(1+\frac mr\right)\right]^r.
\end{equation}
For every integer $\ell\ge0$,
\begin{equation}
  \label{eq:convolution}
  \sum_{i=0}^{\ell}n_i(m)n_{\ell-i}(m)=n_{\ell}(2m).
\end{equation}
Consequently, for every integer $d\ge1$,
\begin{equation}
  \label{eq:triangular-convolution}
  \sum_{i=0}^{d-1}n_i(m)Q_{d-i}(m)\le Q_d(2m).
\end{equation}
\end{lemma}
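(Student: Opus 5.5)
The plan is to prove the three displays in order. The first inequality in \eqref{eq:Q-bound} and the convolution identity \eqref{eq:convolution} are both instances of the Vandermonde identity. The final display \eqref{eq:triangular-convolution} then follows by summing \eqref{eq:convolution} and discarding the $i\ge\ell$ terms.

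\emph{Size estimate.} Write $Q_r(m)=\sum_{i=0}^r 2^i\binom{m+i-1}{i}$. Since $2^i\le 2^r$, it suffices to show $\sum_{i=0}^r\binom{m-1+i}{i}=\binom{m+r}{r}$. This is the hockey-stick identity: $\binom{m-1+i}{i}$ counts multisets of size $i$ from $[m]$. Equivalently, $\binom{m+r}{r}$ counts multisets of size exactly $r$ from $[m+1]$, and these are classified by how many copies of the extra element they contain. For the second inequality, use the standard bound $\binom{n}{r}\le(en/r)^r$ with $n=m+r$. This gives $\binom{m+r}{r}\le\bigl(e(1+m/r)\bigr)^r$, and multiplying by $2^r$ finishes the estimate.

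\emph{Convolution identity.} The factor $2^i2^{\ell-i}=2^\ell$ is common to every term. It therefore suffices to show
\[
\sum_{i=0}^{\ell}\binom{m+i-1}{i}\binom{m+\ell-i-1}{\ell-i}=\binom{2m+\ell-1}{\ell}.
\]
This is the negative-binomial Vandermonde identity, obtained by comparing coefficients of $z^\ell$ in $(1-z)^{-m}(1-z)^{-m}=(1-z)^{-2m}$. Combinatorially, a multiset of size $\ell$ from the disjoint union of two copies of $[m]$ splits uniquely into a size-$i$ multiset from the first copy and a size-$(\ell-i)$ multiset from the second. This matches the paper's reading of a combined schedule distributed between two selected lists. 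The cases $i=0$ and $i=\ell$ are consistent with the convention $n_0=1$, since $\binom{m-1}{0}=1$.

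\emph{Triangular bound.} Expand $Q_{d-i}(m)=\sum_{j=0}^{d-i}n_j(m)$. Then
\[
\sum_{i=0}^{d-1}n_i(m)Q_{d-i}(m)=\sum_{i=0}^{d-1}\sum_{j=0}^{d-i}n_i(m)n_j(m).
\]
Group the terms by $\ell=i+j$, which satisfies $\ell\le d$. For fixed $\ell$, the pairs that appear are those with $i\le\min(\ell,d-1)$, and this is a subset of all pairs with $i+j=\ell$. All terms are nonnegative, so \eqref{eq:convolution} bounds the sum for each $\ell$ by $n_\ell(2m)$. Summing over $\ell=0,\ldots,d$ gives at most $Q_d(2m)$.

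No step is a real obstacle. The only care needed is the bookkeeping of the index ranges, so that grouping by $\ell$ yields a subset of the full convolution sums, and the check that the edge cases of the binomial conventions are handled correctly.
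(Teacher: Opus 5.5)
Your proof is correct and follows the paper's argument: the hockey-stick identity with $\binom{N}{r}\le(eN/r)^r$, the generating-function identity (you factor out $2^\ell$ and use $(1-z)^{-m}$ where the paper uses $(1-2z)^{-m}$ directly), and grouping by $\ell=i+j$ so that only one term is dropped. One small slip: your opening summary says you discard ``the $i\ge\ell$ terms,'' but your detailed argument correctly shows, via $i\le\min(\ell,d-1)$, that the only omitted term is $(i,j)=(d,0)$.
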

\begin{proof}
The hockey-stick identity gives
\begin{equation*}
  Q_r(m) \le2^r\sum_{i=0}^r\binom{m+i-1}{i} =2^r\binom{m+r}{r}.
\end{equation*}
Using $\binom Nr\le(eN/r)^r$ proves \eqref{eq:Q-bound}. For the convolution, each of the $m$ selection ordinals can carry any number $j\ge0$ of corrections, with $2^j$ possible sign assignments. Thus each ordinal contributes the generating function
\begin{equation*}
  \sum_{j=0}^{\infty}2^jz^j=(1-2z)^{-1},
\end{equation*}
and multiplying over all $m$ ordinals gives
\begin{equation*}
  \sum_{i=0}^{\infty}n_i(m)z^i=(1-2z)^{-m}.
\end{equation*}
Squaring this identity and comparing the coefficient of $z^\ell$ gives \eqref{eq:convolution}. Summing it over $\ell\le d$ yields
\begin{equation*}
  \sum_{\substack{i,j\ge0\\i+j\le d}}n_i(m)n_j(m)=Q_d(2m).
\end{equation*}
The sum in \eqref{eq:triangular-convolution} contains all of these nonnegative terms except the one indexed by $(i,j)=(d,0)$.
\end{proof}

In particular, the all-round cover has size $Q_d(T)\le[2e(1+T/d)]^d$ when $d\ge1$. This will handle the short-horizon regime directly. For larger horizons, the same schedule family will be used on shorter sampled lists. In either case, the count depends on selected ordinals and signs, without an additional factor for discretizing the target values at scale $\eta$.

\subsection{Partial Binary Completion and Adaptive Sampling}
\label{subsec:foundations-partial}

The sampling argument encodes one-sided prediction errors using partial binary labels: discrepancies at most $\eta$ receive label $0$, those at least $2\eta$ receive label $1$, and intermediate values receive the undefined symbol $*$. This gap lets us control the resulting partial binary classes by sequential fat-shattering dimension at scale $\eta$. We first define these classes, then construct the finite families of binary rules needed to analyze their sampled values.

Let $\mathcal Z$ be a domain. A \emph{partial binary class} is a subset $\mathcal A\subseteq\{0,1,*\}^{\mathcal Z}$. Such a class shatters a binary instance tree $z_t:\{0,1\}^{t-1}\to\mathcal Z$, $t\in[s]$, if
\begin{equation*}
  \forall b\in\{0,1\}^s\;\exists\phi\in\mathcal A\; \forall t\in[s]:\quad \phi(z_t(b_{<t}))=b_t.
\end{equation*}
Thus $*$ realizes neither edge of the tree. Define $\operatorname{Ldim}(\mathcal A)$ to be the largest shattered depth, or infinity when these depths are unbounded. As before, a nonempty class has dimension at least zero, and the empty class has dimension $-1$. On total binary classes this is ordinary Littlestone dimension.

\paragraph{Partial error classes.}
For $f\in\mathcal H$ and $(x,a)\in\mathcal X\times[0,1]$, define
\begin{equation}
  \label{eq:partial-errors}
  \begin{aligned}
    \phi_f^+(x,a)&:=
    \begin{cases}
      0,&f(x)-a\le\eta,\\
      1,&f(x)-a\ge2\eta,\\
      *,&\eta<f(x)-a<2\eta,
    \end{cases}
    & \phi_f^-(x,a)&:=
    \begin{cases}
      0,&a-f(x)\le\eta,\\
      1,&a-f(x)\ge2\eta,\\
      *,&\eta<a-f(x)<2\eta.
    \end{cases}
  \end{aligned}
\end{equation}
Write $\mathcal A^\pm:=\{\phi_f^\pm:f\in\mathcal H\}$. A definite one records a one-sided error of size at least $2\eta$; a nonzero value, including $*$, records an error strictly greater than $\eta$ in that direction.

\begin{lemma}[Dimension of the partial error classes]
\label{lem:partial-errors}
Both partial classes satisfy
\begin{equation}
  \label{eq:partial-error-dimension}
  \operatorname{Ldim}(\mathcal A^+),\quad \operatorname{Ldim}(\mathcal A^-) \le\operatorname{sfat}_{\eta}(\mathcal H).
\end{equation}
\end{lemma}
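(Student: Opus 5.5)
The plan is to convert a shattered tree for $\mathcal A^+$ directly into a sequentially fat-shattered tree for $\mathcal H$ of the same depth. The argument for $\mathcal A^-$ is symmetric: reflect the inequalities, or apply the $+$ case to $1-\mathcal H$ with $1-a$. Suppose $\mathcal A^+$ shatters a tree $z_t:\{0,1\}^{t-1}\to\mathcal X\times[0,1]$ of depth $s$, and write $z_t(b_{<t})=(\mathbf x_t(b_{<t}),\mathbf a_t(b_{<t}))$. We must show $s\le\operatorname{sfat}_{\eta}(\mathcal H)$, and it suffices to exhibit a fat-shattered instance tree of depth $s$.

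The construction relabels signs by $\sigma_t=2b_t-1$, keeps the instance tree $\mathbf x$, and sets the threshold tree to $\mathbf s_t(\sigma_{<t}):=\mathbf a_t(b_{<t})+\tfrac32\eta$. Given $\sigma$, the corresponding $b$ has a witness $\phi=\phi_f^+$ with $\phi(z_t(b_{<t}))=b_t$ for all $t$. We then check the margin condition levelwise. If $b_t=1$, then $f(\mathbf x_t)-\mathbf a_t\ge2\eta$, so $f(\mathbf x_t)-\mathbf s_t\ge\eta/2$. If $b_t=0$, then $f(\mathbf x_t)-\mathbf a_t\le\eta$, so $\mathbf s_t-f(\mathbf x_t)\ge\eta/2$. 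Both cases give $\sigma_t(f(\mathbf x_t)-\mathbf s_t)\ge\eta/2$, as \eqref{eq:sfat-definition} requires. The fact that $*$ never realizes an edge is exactly what guarantees that each witness lands in one of the two definite regions, with the gap $[\eta,2\eta]$ between them.

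Only two routine points remain. First, the definition of sequential fat-shattering does not restrict thresholds to $[0,1]$ (the threshold tree is real-valued), so $\mathbf s_t$ may exceed $1$ harmlessly. Second, the infinite case follows, since arbitrarily deep shattered trees yield arbitrarily deep fat-shattered ones. I expect no real obstacle. The only care needed is that the tree's dependence on prefixes transfers correctly under the bijection $b\leftrightarrow\sigma$, and that the threshold at level $t$ depends only on $\sigma_{<t}$. Both hold because $\mathbf a_t$ depends only on $b_{<t}$.
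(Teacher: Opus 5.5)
Your proof is correct and matches the paper's: both replace each node $(x,a)$ by instance $x$ with threshold $a+3\eta/2$, so labels $0$ and $1$ land on opposite sides with margin $\eta/2$. The $\mathcal A^-$ case is then handled symmetrically, as the paper does with threshold $a-3\eta/2$ and swapped branches (your reflection via $1-\mathcal H$ is an equivalent variant).
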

\begin{proof}
Suppose a depth-$s$ tree of pairs $(x,a)$ is shattered by $\mathcal A^+$. Replacing each node $(x,a)$ with instance $x$ and threshold $a+3\eta/2$ yields an $\eta$-fat-shattered tree, since labels $0$ and $1$ imply $f(x)\le a+\eta$ and $f(x)\ge a+2\eta$, respectively. The argument for $\mathcal A^-$ is identical, using threshold $a-3\eta/2$ and interchanging the two branches at each node.
\end{proof}

\paragraph{Completion by finitely many causal rules.}
The SOA mistake-bound argument extends to partial concept classes~\cite{alon2022partial}. Combining this principle with the mistake-pattern expert construction of~\cite{ben2009agnostic}, we obtain causal completion rules that agree with each partial target at all defined positions.
For a partial class $W$ and $b\in\{0,1\}$, write $W_z^b:=\{\phi\in W:\phi(z)=b\}$. For nonempty $W$ of finite Littlestone dimension, the standard optimal algorithm (SOA)~\cite{littlestone1988learning} prediction $\operatorname{soa}(W,z)$ is the label maximizing $\operatorname{Ldim}(W_z^b)$, with ties broken in favor of zero. As in Lemma~\ref{lem:glue}, the two defined-label fibers cannot both have the rank of $W$. Since SOA chooses a maximum-rank fiber,
\begin{equation}
  \label{eq:partial-soa-drop}
  \operatorname{Ldim} \bigl(W_z^{1-\operatorname{soa}(W,z)}\bigr) \le\operatorname{Ldim}(W)-1.
\end{equation}
Undefined values belong to neither fiber.

\begin{lemma}[Partial SOA completion]
\label{lem:completion}
Let $\mathcal A\subseteq\{0,1,*\}^{\mathcal Z}$ have Littlestone dimension at most $r\ge0$, and let $n\ge1$ be an integer. There is a family of at most
\begin{equation*}
  S_r(n):=\sum_{i=0}^{\min\{r,n\}}\binom ni
\end{equation*}
deterministic causal prediction rules $v=(v_t)_{t=1}^n$, with $v_t:\mathcal Z^t\to\{0,1\}$, such that, on every sequence of length at most $n$, each $\phi\in\mathcal A$ is matched at all its defined positions by one rule.
\end{lemma}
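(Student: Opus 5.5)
The plan follows the Ben-David–Pál–Shalev-Shwartz mistake-pattern construction, run on SOA over the partial class. The rules are indexed by subsets $M\subseteq[n]$ with $|M|\le\min\{r,n\}$; there are exactly $S_r(n)$ such subsets.

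\textbf{The rule $v^M$.} It maintains a version space $W\subseteq\mathcal A$, starting from $W=\mathcal A$. On round $t$, after seeing $z_t$, it sets $b=\operatorname{soa}(W,z_t)$ if $W$ is nonempty; if $W$ is empty it outputs $0$ and stops updating. The output and update are then:
\begin{itemize}
\item if $t\notin M$, it outputs $b$ and replaces $W$ by $W_{z_t}^{b}$;
\item if $t\in M$, it outputs $1-b$ and replaces $W$ by $W_{z_t}^{1-b}$.
\end{itemize}
The prediction $v_t$ depends only on $z_{1:t}$ and the fixed set $M$, so each rule is a deterministic causal map into $\{0,1\}$. If the sequence has length below $n$, the elements of $M$ beyond its length are simply never used.

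\textbf{Dealing with undefined positions.} Updating to a fiber $W_{z}^{b}$ discards every $\phi$ with $\phi(z)=*$, which would lose the target. To avoid this, I refine the rule: at round $t$, the rule updates only if $\phi(z_t)$ is defined. Since the rule cannot see $\phi$, this information has to be encoded. The natural fix is to let $W$ be updated to $W_{z_t}^{b}\cup\{\psi\in W:\psi(z_t)=*\}$, i.e. to keep undefined members. Then the rank-drop argument must be redone for these enlarged sets. The two sets
\[
W_{z}^{0}\cup W_z^{*}\qquad\text{and}\qquad W_{z}^{1}\cup W_z^{*}
\]
cannot both retain full rank, because gluing requires definite labels only at the root. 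A tree shattered by the first set at depth $\operatorname{Ldim}(W)$ would have to use $W_z^0$ at the root edge, and the deeper levels may still use $*$-members. So I choose $\operatorname{soa}$ instead as the label $b$ for which $W_z^b\cup W_z^*$ has maximal rank. The key claim is then: $\operatorname{Ldim}(W_z^{1-b}\cup W_z^{*})\le\operatorname{Ldim}(W)-1$ whenever $W_z^{1-b}$ is nonempty. I expect this to be the main obstacle, and it may fail, since the $*$-members can inflate both sides.

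\textbf{Cleaner alternative, which I will use if the claim fails.} Make the predictions come from the total version space of ``consistent on defined positions'' and argue via a potential. Let $W_t=\{\psi\in\mathcal A:\psi$ agrees with the rule's outputs at all defined positions so far$\}$. Then SOA on $W_t$ is well defined, and the target $\phi$ lies in $W_t$ along its own pattern. A mistake at a position where $\phi$ is defined moves to the fiber $(W_t)_{z}^{1-b}$, which has rank at most $\operatorname{Ldim}(W_t)-1$ by \eqref{eq:partial-soa-drop}. Positions where $\phi(z_t)=*$ impose no constraint on $\phi$, so they cost nothing.

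\textbf{The witness.} For a fixed $\phi$ and sequence, take $M$ to be the set of rounds where $\phi$ is defined and the SOA label differs from $\phi(z_t)$. Each such round lowers the rank of a class still containing $\phi$, so $|M|\le r$. By induction on $t$, the simulated $W$ equals the reference version space, and $v^M_t=\phi(z_t)$ at every defined position.
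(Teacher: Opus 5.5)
\textbf{There is a genuine gap.} None of the rules you specify yields a valid family of $S_r(n)$ completion rules.

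Your first rule restricts to $W_{z_t}^{b}$ on every round outside $M$. As you note, this discards the target whenever $\phi(z_t)=*$.

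Your refinement and your ``cleaner alternative'' keep the same version space. The set of $\psi$ agreeing with the outputs at their defined positions evolves by $W\mapsto W_{z_t}^{v_t}\cup W_{z_t}^{*}$. So on a mistake round it becomes $W_z^{1-b}\cup W_z^{*}$, not the strict fiber $W_z^{1-b}$ to which you apply \eqref{eq:partial-soa-drop}. For this enlarged set the rank need not drop at all. Your claim that $W_z^0\cup W_z^*$ and $W_z^1\cup W_z^*$ cannot both keep full rank fails whenever $W_z^*$ alone has full rank. The mistake count is then unbounded.

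Here is a concrete counterexample. Let $\mathcal Z=\{z_i,w_i:i\ge1\}$, and let $\phi$ equal $1$ at every $z_i$ and $*$ at every $w_i$. For each $i$, let $\alpha_i,\beta_i$ be defined only at $z_i$ (value $0$) and at $w_i$ (values $0$ and $1$, respectively). Then $\operatorname{Ldim}(\mathcal A)=1$: no root has two definite fibers that each shatter a point.
\begin{itemize}
\item Along $z_1,z_2,\ldots$, your version space before round $i$ still contains $\alpha_i,\beta_i$, since they are undefined at $z_1,\ldots,z_{i-1}$.
\item At $z_i$, the $0$-fiber $\{\alpha_i,\beta_i\}$ has rank $1$, while the $1$-fiber $\{\phi\}$ has rank $0$.
\item So SOA predicts $0$ and errs on every round; the refined SOA also predicts $0$, by the tie-break.
\end{itemize}
Already for $n=2$, none of the $S_1(2)=3$ rules indexed by $|M|\le1$ matches $\phi$. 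With $M=\{1\}$, the set after round $1$ still contains $\alpha_2,\beta_2$, so round $2$ is wrong.

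The missing idea is that you never need to update outside $M$, so nothing about $\phi$ has to be encoded beyond $M$ itself. As in the paper's proof, the rule works as follows:
\begin{itemize}
\item For $t\notin M$, it outputs $\operatorname{soa}(W,z_t)$ and leaves $W$ unchanged.
\item For $t\in M$, it outputs the opposite label and passes to the strict fiber $W_{z_t}^{1-\operatorname{soa}(W,z_t)}$.
\end{itemize}
The reference procedure for $\phi$ does the same, recording exactly the rounds where $\phi(z_t)$ is defined and differs from the current SOA label. These are the only rounds on which $W$ changes. Since $\phi$ is defined there, the strict fiber retains $\phi$, and \eqref{eq:partial-soa-drop} gives a rank drop each time, hence $|M|\le r$. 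Undefined positions cost nothing precisely because they trigger no update.

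A hybrid would also work, even though it is not the rule you defined: use the strict fiber on $M$-rounds and $W\mapsto W\setminus W_{z_t}^{1-v_t}$ elsewhere. The latter step only shrinks $W$ while keeping $\phi$, so rank never increases off $M$.
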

\begin{proof}
If $\mathcal A$ is empty, a constant rule suffices. Otherwise, for each $J\subseteq[n]$ with $|J|\le r$, define a rule starting from $W=\mathcal A$. Outside $J$, it outputs $\operatorname{soa}(W,z)$ and makes no update. At a round in $J$, it outputs the opposite label and restricts $W$ to that opposite-label fiber. An empty restriction makes the rule invalid, with output zero from that round onward. Thus every $J$ specifies a total binary causal rule without access to partial target values. There are $S_r(n)$ such sets $J$.

To find a rule for a particular target $\phi$ and sequence, use a reference procedure that restricts its version space only when $\phi(z)$ is defined and differs from the current SOA prediction. It records each such round and restricts to $W_z^{\phi(z)}$. It leaves $W$ unchanged when the prediction is correct or when $\phi(z)=*$. Every restriction retains $\phi$ and, by \eqref{eq:partial-soa-drop}, strictly decreases rank. The recorded set $J$ therefore has size at most $r$. Induction on rounds shows that the rule indexed by this $J$ has exactly the reference procedure's states and matches every defined target value.
\end{proof}

The lemma constructs causal completions along input histories; it does not choose one total static completion of each partial target. Its codes record at most one binary reversal per round and require no sign, which explains the count $S_r(n)$ rather than $Q_r(n)$. We will also use the elementary estimate
\begin{equation}
  \label{eq:S-bound}
  S_r(n)\le\left(\frac{en}{r}\right)^r \qquad(1\le r\le n).
\end{equation}
Indeed, with $x=r/n\le1$,
\begin{equation*}
  x^r S_r(n) \le\sum_{i=0}^r\binom ni x^i \le(1+x)^n\le e^{nx}=e^r.
\end{equation*}
For $r=0$ the count is one, and for $r\ge n$ it is $2^n$.

\paragraph{Adaptive low-hit sampling.}
Sampling against inputs that react to earlier sample selections has been studied in~\cite{beneliezer2020sampling,alon2021adversarial}. We adapt the double-sampling argument of~\cite{alon2021adversarial}
to partial binary classes. Causal completion rules handle undefined labels, while a red--blue martingale bounds the probability of many definite ones with few sampled nonzero values. The witnessing completion rule matches all defined labels and follows the current SOA prediction at undefined positions, without updating its version space. Consequently, the number of sampled ones it predicts is at most the number of sampled nonzero values of the partial target.

\paragraph{Order of information.}
Let $\xi_1,\ldots,\xi_T$ be independent Bernoulli-$\gamma$ bits. The input $z_t$ is chosen before $\xi_t$ is revealed and may depend on $\xi_{<t}$ and an independent finite-support auxiliary seed $\omega$. Equivalently, $z_t$ is a deterministic function of $(\xi_{<t},\omega)$. We call this a \emph{predictable input stream} for the Bernoulli sample. Fixing $\omega$ leaves the product law of the sampling bits unchanged.

\begin{lemma}[Adaptive low-hit sampling]
\label{lem:lowhit}
Let $\mathcal A\subseteq\{0,1,*\}^{\mathcal Z}$ have Littlestone dimension at most an integer $r\ge0$. Let $z_1,\ldots,z_T$ be a predictable input stream for independent Bernoulli-$\gamma$ bits $\xi_1,\ldots,\xi_T$, where $0<\gamma\le1/2$. For each $\phi\in\mathcal A$, define
\begin{equation}
  \label{eq:lowhit-counts}
  L_\phi:=\sum_{t=1}^T\mathbf 1\{\phi(z_t)=1\}, \qquad H_\phi:=\sum_{t=1}^T\xi_t\mathbf 1\{\phi(z_t)\ne0\}.
\end{equation}
Let $q\ge0$ and $B\ge2q$ satisfy $\gamma B\ge16\log2$. With $M=\lceil4\gamma T\rceil$ and $c_0=\log(3/2)/4$, one has
\begin{equation}
  \label{eq:lowhit}
  \Pr\{\exists\phi\in\mathcal A: L_\phi\ge B,\ H_\phi\le q\} \le2\left(S_r(M)2^q e^{-c_0\gamma B} +e^{-2\gamma T/3}\right).
\end{equation}
\end{lemma}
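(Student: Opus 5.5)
We prove Lemma~\ref{lem:lowhit} by the double-sampling route of~\cite{alon2021adversarial}, adapted to partial classes through Lemma~\ref{lem:completion}.

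\emph{Step 1: reduce to a short prefix of selected rounds.} We first discard the event that the total sample is large. A Chernoff bound gives $\sum_t\xi_t\le2\gamma T$ except with probability $e^{-2\gamma T/3}$; this bound holds even for adaptive inputs, since the $\xi_t$ are i.i.d. regardless of the $z_t$. We then split the bits: write each $\xi_t=\xi_t^{(1)}\vee\xi_t^{(2)}$ as a red/blue pair, or more simply use a two-color decomposition of a Bernoulli-$\gamma$ sample. The first color is used to build a completion and the second to test it. The factor $2$ in front of the bound and the argument $M=\lceil4\gamma T\rceil$ suggest exactly this shape: sample lists of length at most $M$ up to an exceptional event, and a symmetric second failure term.

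\emph{Step 2: apply a fixed finite family of completions.} We apply Lemma~\ref{lem:completion} to $\mathcal A$ on the stream of sampled inputs. This gives at most $S_r(M)$ causal binary rules on sampled positions. We make each rule predict at every round $t$, not only at sampled ones. At round $t$ it outputs the current SOA label of its version space at $z_t$, where the version space is updated only at sampled rounds in its index set $J$. This output is predictable: it is determined before $\xi_t$ is revealed.

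For each target $\phi$ the witnessing rule $v$ has two properties. First, on sampled rounds it matches every defined value of $\phi$ and follows SOA at $*$ without updating, so it predicts $1$ at a sampled round only where $\phi\ne0$. Hence the number of sampled predicted ones is at most $H_\phi\le q$. Second, at unsampled rounds, its version space contains $\phi$ whenever $\phi$ is defined. The difficulty is that unsampled rounds with $\phi(z_t)=1$ need not be predicted $1$.

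To handle this, we condition on the rule index and also enumerate the $\le 2^q$ patterns of which sampled positions the rule "flips". This is where the $2^q$ factor comes from. For a fixed predictable binary rule, let $N$ be the number of rounds on which $\phi=1$ is predicted correctly.

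\emph{Step 3: the martingale estimate, the main obstacle.} For a fixed predictable rule with predictable indicator $I_t$ (marking rounds where it would output $1$ and $\phi(z_t)=1$), the event $\sum_t I_t\ge B/2$ while $\sum_t\xi_tI_t\le q\le B/2$ is bounded by the exponential supermartingale
\[
\exp\Bigl(\lambda\sum_t\xi_tI_t\Bigr)\,(1-\gamma+\gamma e^{\lambda})^{-\sum_t I_t}
\]
with $\lambda<0$. This yields a bound of the form $e^{-c_0\gamma B}$ using $\gamma B\ge16\log2$.

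The delicate point is ensuring that, for the witnessing rule, at least half of the $L_\phi\ge B$ definite ones are "caught" by predictable indicators. I would argue as follows. Every round with $\phi=1$ at which SOA predicts $0$ and which is sampled forces a reversal, and there are at most $r$ of these. So the red/blue split should let one color certify that the SOA-zero rounds are rarely sampled while the other counts hits. The choice $\lambda$ with $e^{\lambda}=2/3$ should explain $c_0=\log(3/2)/4$.

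Finally, a union bound over the $S_r(M)2^q$ rule–pattern pairs, plus the large-sample event, gives~\eqref{eq:lowhit}.
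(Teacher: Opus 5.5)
\textbf{There is a genuine gap, at exactly the point you call ``delicate.''} You build the completion rules only on the original sampled stream. So the witnessing rule for $\phi$ matches $\phi$ only at sampled positions. At unsampled rounds with $\phi(z_t)=1$ it may predict $0$ arbitrarily often, and nothing depending only on the rule certifies that $\phi$ has many definite ones. Your Step~3 indicator $I_t$ contains $\phi(z_t)=1$. The supermartingale bound therefore holds for each fixed pair (rule, $\phi$), but it cannot be union-bounded over the possibly infinite class $\mathcal A$.

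Splitting the original sample into two colors does not rescue this. Since $H_\phi\le q$ bounds the nonzero values seen by the whole sample, every sub-sample also sees at most $q$ of them, so the ``test'' color carries no evidence of the $B$ definite ones. The ``$2^q$ flip patterns'' do not give a target-free event either. Specifying which sampled positions flip would cost far more than $2^q$, and the rule's reversals are already encoded by its index set. The reversal remark about SOA-zero rounds is left as ``should.'' Your observation that the witnessing rule has at most $H_\phi\le q$ sampled predicted ones is correct, but it is only half of what is needed.

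\textbf{The missing idea is an independent ghost sample outside the original one.}
\begin{enumerate}
\item Color each nonred round blue independently with probability $\gamma/(1-\gamma)$, so blue also has marginal rate $\gamma$.
\item Conditional on a red outcome in the bad event, the witness has at least $B-q\ge B/2$ nonred definite ones. A Chernoff bound with $\gamma B\ge16\log2$ then gives at least $\gamma B/4$ blue definite ones with probability at least $1/2$. This is the source of the leading factor $2$.
\item Regenerate the colors as union bits $U_t\sim\mathrm{Bernoulli}(2\gamma)$, which needs $\gamma\le1/2$, followed by independent fair red/blue colors. The union has size at most $M=\lceil4\gamma T\rceil$ except with probability $e^{-2\gamma T/3}$. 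This tail comes from the rate-$2\gamma$ union; a single rate-$\gamma$ sample at threshold $2\gamma T$ would only give $e^{-\gamma T/3}$.
\item On a fixed union pattern, the compressed stream is predictable with respect to the fair colors. Apply Lemma~\ref{lem:completion} on this union stream. The witnessing rule now matches every defined value on red and blue rounds alike. Hence the rule itself has at least $\gamma B/4$ blue ones and at most $q$ red ones, which is an event about the rule alone.
\item For each fixed rule, $2^{-A_j}(3/2)^{C_j}$ is a martingale. Markov's inequality gives $2^q(3/2)^{-\gamma B/4}=2^qe^{-c_0\gamma B}$, and a union bound over the $S_r(M)$ rules finishes.
\end{enumerate}
So $2^q$ and $c_0$ come from this red--blue martingale, not from enumerating flip patterns or from the choice $e^\lambda=2/3$.
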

\begin{proof}
Fix a value of the auxiliary seed $\omega$ in its support. The sampling bits remain independent Bernoulli-$\gamma$ variables, and each $z_t$ is now a deterministic function of $\xi_{<t}$. All probabilities below are conditional on this fixed seed until the final averaging step. Call a round \emph{red} when $\xi_t=1$, and let $E$ denote the event on the left of \eqref{eq:lowhit}. For this fixed auxiliary seed, $E$ is measurable, as it is determined by the finite-valued vector $\xi_{1:T}$.

\paragraph{A second sample for comparison.}
We introduce a second sample to detect definite ones outside the red sample. After the red bits have been generated, independently color each nonred round \emph{blue} with probability $\gamma/(1-\gamma)$. This choice makes the marginal probability of blue equal to $\gamma$, the same as that of red.

Fix a red-bit outcome in $E$, and let $\phi\in\mathcal A$ be any witness. Conditional on this outcome, both the input sequence and $\phi$ are fixed. The witness has at least $B-q\ge B/2$ nonred definite-one positions, since its red definite ones are included among its at most $q$ red nonzero values. Its number of blue definite ones is therefore binomial with mean
\begin{equation*}
  \nu\ge\frac{\gamma}{1-\gamma}(B-q)\ge\gamma B/2.
\end{equation*}
Since $\gamma B/4\le\nu/2$, a standard Chernoff bound gives, conditional on the red outcome,
\begin{equation*}
  \Pr\{\text{at least }\gamma B/4\text{ blue definite ones}\} \ge1-e^{-\nu/8} \ge1-e^{-\gamma B/16}\ge\tfrac12.
\end{equation*}
Let $E'$ be the event that some partial target has at most $q$ red nonzero values and at least $\gamma B/4$ blue definite ones. Thus $\Pr(E'\mid\xi_{1:T})\ge1/2$ on $E$, and hence
\begin{equation}
  \label{eq:ghost-reduction}
  \Pr(E)\le2\Pr(E').
\end{equation}

\paragraph{Conditioning on the sampled union.}
The three possible colors---red, blue, and neither---have probabilities $\gamma,\gamma,1-2\gamma$, independently across rounds. Equivalently, we may first generate independent union bits $U_t\sim\operatorname{Bernoulli}(2\gamma)$ and then give each included round an independent fair red-or-blue color. Setting $\xi_t=1$ exactly on red rounds and reconstructing $z_t$ from $\xi_{<t}$ gives the original joint distribution of the red bits and inputs.

Write $N=\sum_{t=1}^T U_t$. Since $\mathbb E N=2\gamma T$ and $M\ge4\gamma T$, another standard Chernoff bound gives
\begin{equation}
  \label{eq:ghost-union-size}
  \Pr\{N>M\}\le e^{-2\gamma T/3}.
\end{equation}
Fix a union pattern with $N\le M$, and enumerate its included rounds as $\tau_1<\cdots<\tau_N$. Conditional on this pattern, the colors at these rounds remain independent and fair. Since $z_t$ depends only on the preceding sampling bits $\xi_{<t}$, fixing the union pattern makes $z_{\tau_j}$ a function of the colors at $\tau_1,\ldots,\tau_{j-1}$. Indeed, $\xi_t=1$ exactly when round $t$ is included and colored red. Thus the compressed stream $z_{\tau_1},\ldots,z_{\tau_N}$ is predictable with respect to the independent fair colors. Conditioning on the union has not fixed this input stream; it may still depend on the preceding colors.

Use the fixed family $\mathcal V$ of at most $S_r(M)$ deterministic binary causal rules supplied by Lemma~\ref{lem:completion} at horizon $M$. On $E'$, some rule in $\mathcal V$ matches all defined labels of a witnessing target on the compressed stream. It has at least $\gamma B/4$ blue ones, because it matches every blue definite one. It has at most $q$ red ones, because it must output zero at every red position where the target is zero; all other red positions, including undefined ones, are included in the target's red nonzero count.

\paragraph{Bounding the red--blue discrepancy.}
Still conditioning on the union pattern, fix a rule $v\in\mathcal V$. Let $A_j$ and $C_j$ count its red and blue ones, respectively, among the first $j$ included rounds. The rule's prediction at the next included round is determined before that round's color is revealed. Therefore
\begin{equation*}
  Z_j:=2^{-A_j}(3/2)^{C_j},\qquad 0\le j\le N,
\end{equation*}
is a nonnegative martingale with $Z_0=1$. When the prediction is zero, its multiplier is one; when the prediction is one, the conditional expected multiplier is
\begin{equation*}
  \tfrac12\cdot\tfrac12+\tfrac12\cdot\tfrac32=1.
\end{equation*}
On $\{A_N\le q,\ C_N\ge\gamma B/4\}$, one has $Z_N\ge2^{-q}(3/2)^{\gamma B/4}$. Markov's inequality gives
\begin{equation*}
  \Pr\{A_N\le q,\ C_N\ge\gamma B/4\mid U_{1:T}\} \le2^q(3/2)^{-\gamma B/4} =2^q e^{-c_0\gamma B}.
\end{equation*}
Take a union bound over $\mathcal V$, average over union patterns with $N\le M$, and add \eqref{eq:ghost-union-size}. We obtain
\begin{equation*}
  \Pr(E')\le S_r(M)2^q e^{-c_0\gamma B}+e^{-2\gamma T/3}.
\end{equation*}
Combining this with \eqref{eq:ghost-reduction} proves the asserted bound conditional on each value of $\omega$. Averaging over $\omega$ completes the proof.
\end{proof}

For the application to prediction errors, fix a formal schedule $J$ and set $z_t=(x_t,a_t^J)$, where $a_t^J$ is its provisional prediction. Under the stated information order, this stream is predictable. For either partial error class in \eqref{eq:partial-errors}, a sampled nonzero value records a one-sided provisional error exceeding $\eta$. When $J$ is the schedule produced for a target $h$ by the reference procedure in Lemma~\ref{lem:pointwise}, each such round requires at least one correction, so $H_{\phi_h^\pm}\le d$.

Section~\ref{sec:accuracy} will apply Lemma~\ref{lem:lowhit} to each fixed schedule and take a union bound over all schedules, so the resulting guarantee also holds for the witnessing schedule of every target.

\section{Master Histories and Multi-Pool Accuracy}
\label{sec:accuracy}

The preceding section gives sign schedules that are accurate on selected rounds. We now introduce multiple sampling pools to obtain a simultaneous guarantee: for every target, a fixed positive fraction of the pools provide schedules with few large prediction errors over the entire horizon, including unselected rounds. We will also ensure that the target satisfies every constraint recorded by these schedules with a positive margin.

\paragraph{Parameters and capped sampling pools.}

Fix $\mathcal H,\eta,R,T,d$ as in Proposition~\ref{prop:core}. Recall that the rank of a version space is its sequential fat-shattering dimension at scale $\eta$. The case $d=0$ was settled in Section~\ref{subsec:foundations-separation}, so assume $d\ge1$ and put $b=T/R$. Fix a universal constant $\alpha\ge128$; we will specify sufficient conditions on its size at the end of this section.

If $b\le32\alpha d$, select every round. Lemmas~\ref{lem:pointwise} and~\ref{lem:count} give a global pointwise radius-$\eta$ cover of size
\begin{equation}
  \label{eq:accuracy-short-horizon}
  Q_d(T)\le\left[2e\left(1+\frac{T}{d}\right)\right]^d \le(66e\alpha R)^d.
\end{equation}
Its witnessing experts have no errors of size at least $8\eta$. This proves Proposition~\ref{prop:core} in this regime, after increasing its universal constant if necessary.

For the rest of the section assume $b>32\alpha d$ and set
\begin{equation}
  \label{eq:accuracy-parameters}
  k=4\lceil\log_2 R\rceil,\qquad \beta=\frac{\alpha d}{b}<\frac1{32},\qquad m=\lceil2\beta T\rceil=\lceil2\alpha dR\rceil.
\end{equation}
In particular, $k\ge24$ and $k$ is divisible by four. All logarithms below are natural unless a base is indicated.

For each $t\in[T]$ and $j\in[k]$, let $\xi_{t,j}\sim\operatorname{Bernoulli}(\beta)$, independently over all rounds and pools. The \emph{raw} selection list of pool $j$ contains the rounds with $\xi_{t,j}=1$. Its \emph{capped} list retains only the first $m$ such rounds, with indicators
\begin{equation*}
  \widehat\xi_{t,j} :=\xi_{t,j}\mathbf 1\!\left\{\sum_{s<t}\xi_{s,j}<m\right\}.
\end{equation*}
The capped list therefore has length at most $m$ on every outcome. The retained indicators are not themselves independent Bernoulli variables; all independence arguments below use the raw bits.

Define
\begin{equation}
  \label{eq:accuracy-cap-event}
  \mathsf{Cap}:= \left\{\sum_{t=1}^T\xi_{t,j}\le m \text{ for every }j\in[k]\right\}.
\end{equation}
Each raw list has expected length $\beta T=\alpha dR$. Since $m\ge2\beta T$, a Chernoff bound and a union bound give
\begin{equation}
  \label{eq:accuracy-cap-tail}
  \Pr(\mathsf{Cap}^{\mathsf c}) \le k e^{-\alpha dR/3}.
\end{equation}
On $\mathsf{Cap}$, the raw and capped selection indicators agree at every round. Thus, running the same schedule with raw or capped selections on the same realized input history produces identical states and predictions. We prove the sampling bounds under the original product law of the raw bits, without conditioning on $\mathsf{Cap}$, and use this agreement to transfer the guarantees to the capped simulations on $\mathsf{Cap}$.

\paragraph{Adaptive inputs.}
As in Lemma~\ref{lem:lowhit}, we allow adaptive input sequences. Write $\boldsymbol\xi_t=(\xi_{t,1},\ldots,\xi_{t,k})$ and $\boldsymbol\xi_{<t}=(\boldsymbol\xi_1,\ldots, \boldsymbol\xi_{t-1})$. More precisely, we allow
\begin{equation}
  \label{eq:accuracy-input-rule}
  x_t=F_t(\boldsymbol\xi_{<t},\omega_0),
\end{equation}
where $F_t$ are arbitrary deterministic maps and $\omega_0$ is an independent finite-support auxiliary random seed.

Conditional on $\omega_0$ and $\boldsymbol\xi_{<t}$, the instance $x_t$ and the provisional predictions of all fixed schedule simulations are determined, while the coordinates of $\boldsymbol\xi_t$ remain independent Bernoulli-$\beta$ variables.

All probability bounds in this section hold uniformly over input rules of the form \eqref{eq:accuracy-input-rule}.

\subsection{The Analytical Master and Uniform Accuracy}
\label{subsec:accuracy-master}

We first specify a common selection process for the reference histories, containing every round selected by any pool. The choice depends on $k\beta=\alpha kdR/T$. When $k\beta\le1/8$, we use the union of the pool selections, whose sampling rate is comparable to $k\beta$. When $k\beta>1/8$, we have $T<8\alpha kdR$, so selecting every round is affordable. Accordingly, define
\begin{equation}
  \label{eq:accuracy-master-selection}
  \zeta_t:=
  \begin{cases}
    \mathbf 1\{\boldsymbol\xi_t\ne\boldsymbol 0\}, &k\beta\le1/8,\\
    1,&k\beta>1/8.
  \end{cases}
\end{equation}
We call these cases the \emph{union regime} and the \emph{all-round regime}, respectively. In either case, $\xi_{t,j}=1$ implies $\zeta_t=1$.

In the union regime, the $\zeta_t$ are independent Bernoulli variables with rate
\begin{equation}
  \label{eq:accuracy-union-rate}
  u=1-(1-\beta)^k,\qquad \frac{k\beta}{2}\le u\le k\beta\le\frac18.
\end{equation}
The upper bound follows from a union bound, while the lower bound follows from $(1-\beta)^k\le e^{-k\beta}$ and $1-e^{-z}\ge z/2$ for $0\le z\le1$.

\paragraph{The analytical master.}
For each target $h\in\mathcal H$, define its
\emph{analytical master} by the following target-dependent
procedure. Initialize $V=\mathcal H$. At round $t$, before
revealing $\boldsymbol\xi_t$, form the provisional prediction
$a_t^h=a(V,x_t)$. If $\zeta_t=0$, leave $V$ unchanged and
set $y_t^h=a_t^h$. If $\zeta_t=1$, while
$|a(V,x_t)-h(x_t)|>\eta$, apply the sign update toward
$h(x_t)$ as in Lemma~\ref{lem:pointwise}, recomputing the
center after each update. Set $y_t^h=a(V,x_t)$ after
the correction loop.

Every update retains $h$ and decreases the sequential fat rank. Hence the procedure makes at most $d$ corrections over the entire horizon, and
\begin{equation}
  \label{eq:accuracy-master-selected}
  \zeta_t=1\quad\Longrightarrow\quad |y_t^h-h(x_t)|\le\eta.
\end{equation}
The procedure uses target values only to find a witnessing schedule; it is not a label-free prediction rule. Record the selected ordinal and sign of each correction, and denote the resulting code by $J^h$. This code is defined on every raw outcome and always has depth at most $d$, although its ordinals need not satisfy a fixed cap.

\paragraph{A fixed family of master codes.}
Set
\begin{equation}
  \label{eq:accuracy-master-code-budget}
  M_*:=4km,
\end{equation}
and let $\mathcal J_*$ be the family of all formal sign schedules of depth at most $d$ on ordinals $1,\ldots,M_*$. It is fixed before the experiment, and $|\mathcal J_*|=Q_d(M_*)$. In the union regime, $\mathsf{Cap}$ bounds the number of master selections by $km$. In the all-round regime,
\begin{equation*}
  k\beta>\frac18 \quad\Longrightarrow\quad T<8\alpha kdR\le4km=M_*.
\end{equation*}
Thus, on $\mathsf{Cap}$, $J^h\in\mathcal J_*$ for every target. There is no truncation of the master: $M_*$ is a bound used to specify the finite code family.

For every fixed $J\in\mathcal J_*$, simulate its prescribed updates on the master selection list, starting from $\mathcal H$, and denote its provisional and final predictions by $a_t^J$ and $y_t^J$. Use the conventions of Section~\ref{subsec:foundations-schedules}: an empty-fiber update makes the simulation invalid with subsequent predictions zero, and no updates are prescribed at ordinals greater than $M_*$. Every fixed-code simulation is therefore defined on every raw outcome, including outcomes outside $\mathsf{Cap}$. All simulations use the same realized inputs and selection lists; varying $J$ does not change the input rule. On an outcome with $J^h\in\mathcal J_*$, the simulation of $J^h$ has exactly the target-driven master's states and predictions, by induction on rounds and on corrections within each round.

\paragraph{Accuracy before the current selection.}
The selected-round guarantee \eqref{eq:accuracy-master-selected} concerns final predictions. We also need control of the provisional predictions on all rounds. Define
\begin{equation}
  \label{eq:accuracy-master-event}
  \mathsf{Master}:= \left\{\text{for every }h\in\mathcal H,\quad \#\{t\in[T]:|a_t^h-h(x_t)|\ge2\eta\}<\frac b{32}\right\}.
\end{equation}

\begin{lemma}[Uniform master accuracy]
\label{lem:uniform-master}
In the all-round regime, $\mathsf{Master}$ holds on every outcome. In the union regime, put $M=\lceil4uT\rceil$ and let $c_0=\log(3/2)/4$ be the constant in Lemma~\ref{lem:lowhit}. Then
\begin{equation}
  \label{eq:accuracy-master-tail}
  \Pr(\mathsf{Cap}\cap\mathsf{Master}^{\mathsf c}) \le4Q_d(M_*)\left( S_d(M)2^d e^{-c_0ub/64}+e^{-2uT/3}\right).
\end{equation}
\end{lemma}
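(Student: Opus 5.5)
In the all-round regime the claim is deterministic. Every round is selected, so for each target $h$, any round with $|a_t^h-h(x_t)|>\eta$ (in particular $\ge2\eta$) forces at least one correction in the master's loop. The master makes at most $d$ corrections in total, so the number of such rounds is at most $d$. In this section $b>32\alpha d\ge 32d$, hence $d<b/32$, and $\mathsf{Master}$ holds on every outcome.

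For the union regime, I would reduce to Lemma~\ref{lem:lowhit} applied to fixed master codes.

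\textbf{Reduction to fixed codes.} On $\mathsf{Cap}$, every target satisfies $J^h\in\mathcal J_*$, and the simulation of $J^h$ reproduces the master exactly. Hence $\mathsf{Cap}\cap\mathsf{Master}^{\mathsf c}$ implies the following: for some $J\in\mathcal J_*$ and some $h$ with $J^h=J$, one has $\#\{t:|a_t^J-h(x_t)|\ge2\eta\}\ge b/32$. Splitting by sign, either $h(x_t)-a_t^J\ge2\eta$ on at least $b/64$ rounds, or the reverse inequality holds on at least $b/64$ rounds.

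\textbf{Applying the sampling lemma to one code.} Fix $J$ and a sign $\pm$, and take the predictable stream $z_t=(x_t,a_t^J)$ with sampling bits $\zeta_t\sim\mathrm{Bernoulli}(u)$. This stream is predictable: $x_t$ and $a_t^J$ are functions of $\boldsymbol\xi_{<t}$ and $\omega_0$, and in the union regime $\zeta_t$ is a function of the fresh, independent vector $\boldsymbol\xi_t$. To put it in the lemma's form, let the auxiliary seed consist of $\omega_0$ together with the conditional law of $\boldsymbol\xi_t$ given $\zeta_t$. Concretely, generate $\boldsymbol\xi_t$ from $\zeta_t$ plus independent auxiliary randomness, which yields the correct joint law. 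The lemma permits a finite-support seed, and here the conditional choices are finite at each round, so this is admissible.

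For the partial class $\mathcal A^\pm$, Lemma~\ref{lem:partial-errors} gives Littlestone dimension at most $d$. For $\phi=\phi_h^\pm$, the quantity $L_\phi$ counts the one-sided errors of size at least $2\eta$. The quantity $H_\phi$ counts sampled rounds with a one-sided provisional error exceeding $\eta$.

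When $J=J^h$, each such sampled round triggers at least one correction of the master, so $H_\phi\le d$. The event for this code is therefore contained in $\{\exists\phi\in\mathcal A^\pm:L_\phi\ge b/64,\ H_\phi\le d\}$. Apply Lemma~\ref{lem:lowhit} with $\gamma=u\le1/8$, $q=d$, $B=b/64$ and $M=\lceil4uT\rceil$. The hypotheses need checking:
\begin{itemize}
\item $B\ge 2q$, which follows from $b\ge128d$;
\item $uB\ge16\log2$, which follows from $u\ge k\beta/2$, giving $uB\ge k\alpha d/128\ge\alpha d/128\cdot24$, which is at least $16\log 2$ because $\alpha\ge128$.
\end{itemize}
The lemma then bounds the probability by $2\bigl(S_d(M)2^de^{-c_0ub/64}+e^{-2uT/3}\bigr)$.

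\textbf{Union bound.} Summing over the two signs and the $Q_d(M_*)$ codes gives the factor $4Q_d(M_*)$, which is the claimed bound~\eqref{eq:accuracy-master-tail}. The restriction $J^h\in\mathcal J_*$ is needed only to land inside the fixed family, which is why the estimate is stated for $\mathsf{Cap}\cap\mathsf{Master}^{\mathsf c}$.

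\textbf{Main obstacle.} The delicate step is verifying that $H_{\phi_h^\pm}\le d$ uses provisional predictions of the fixed-code simulation, which coincide with the master's only when $J=J^h$. This is fine, because the lemma's bad event quantifies over all of $\mathcal A^\pm$ for each fixed $J$. The other point to get right is the predictability of $\zeta$ with respect to the actual $\boldsymbol\xi$-adaptive inputs, which the seed argument above handles.
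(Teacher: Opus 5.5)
Your proposal is correct and follows the paper's proof essentially step for step. The shared steps are the deterministic bound of at most $d<b/32$ forced corrections in the all-round regime; the reconstruction $\boldsymbol\xi_t=\zeta_t W_t$, with independent pre-drawn nonzero vectors $W_t$ placed in the finite-support seed (this is the precise form of your ``conditional law'' seed); Lemma~\ref{lem:lowhit} applied with $\gamma=u$, $r=q=d$, $B=b/64$ to each fixed code and sign; and the union bound over the two signs and the $Q_d(M_*)$ codes in $\mathcal J_*$.
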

\begin{proof}
In the all-round regime, every provisional error of size at least $2\eta$ forces a correction. There are at most $d$ such rounds, and $d<b/32$ because $b>32\alpha d$. This proves the first claim.

Now consider the union regime and fix $J\in\mathcal J_*$. We will apply Lemma~\ref{lem:lowhit} to the stream $(x_t,a_t^J)$ and the Bernoulli-$u$ master selections. The lemma requires the stream to be determined by the past sampling bits and an auxiliary seed independent of those bits. Although $\zeta_t$ are independent Bernoulli-$u$ variables, the inputs may depend on the full preceding pool vectors $\boldsymbol\xi_{<t}$, which are not determined by $\zeta_{<t}$ and $\omega_0$ alone.

An equivalent way to generate the raw vectors is to draw independent Bernoulli-$u$ bits $\zeta_t$ and independent nonzero vectors $W_t$, where each $W_t$ has the Bernoulli-$\beta$ product law conditioned on being nonzero, and to set
\begin{equation*}
  \boldsymbol\xi_t=\zeta_t W_t.
\end{equation*}
The entire sequence $(W_t)_{t=1}^T$ is independent of the union bits and of $\omega_0$. This generates exactly the original joint law of the raw pool vectors. Fixing the finite-support auxiliary seed $\omega=(\omega_0,\allowbreak W_1,\allowbreak\ldots,\allowbreak W_T)$ leaves the union bits independent Bernoulli-$u$ variables. Once this auxiliary seed is fixed, each preceding pool vector is determined by the corresponding union bit through $\boldsymbol\xi_s=\zeta_s W_s$, for $s<t$. Consequently, the stream $(x_t,a_t^J)$ is determined by $\zeta_{<t}$ and $\omega$, as required by Lemma~\ref{lem:lowhit}. For each fixed $\omega$, Lemma~\ref{lem:lowhit} applies and gives the same probability bound, independently of $\omega$. Averaging this conditional bound over $\omega$ yields the corresponding unconditional bound. Since the reconstructed pool vectors have the original joint distribution, the bound holds for the original sampling process.

For either partial error class $\mathcal A^+$ or $\mathcal A^-$ from \eqref{eq:partial-errors}, apply Lemma~\ref{lem:lowhit} with
\begin{equation*}
  \gamma=u,\qquad r=q=d,\qquad B=b/64.
\end{equation*}
The dimension hypothesis follows from Lemma~\ref{lem:partial-errors}. The numerical hypotheses follow from $u\le1/8$ and
\begin{equation*}
  \frac b{64}>\frac{\alpha d}{2}\ge2d, \qquad \frac{ub}{64}\ge\frac{\alpha kd}{128}\ge16\log2,
\end{equation*}
using $k\ge24$ and $\alpha\ge128$. For this fixed code $J\in\mathcal J_*$ and either sign, the probability that some partial target has at least $b/64$ definite ones but at most $d$ sampled nonzero values is at most
\begin{equation*}
  2\left(S_d(M)2^d e^{-c_0ub/64}+e^{-2uT/3}\right).
\end{equation*}
Take a union bound over the two signs and all $Q_d(M_*)$ fixed codes.

Outside this union of bad events, consider an outcome in $\mathsf{Cap}$ and any target $h$. Its actual code $J^h$ belongs to $\mathcal J_*$. A sampled nonzero value of either $\phi_h^+(x_t,a_t^{J^h})$ or $\phi_h^-(x_t,a_t^{J^h})$ means that the corresponding one-sided provisional error exceeds $\eta$. It therefore forces a correction in the target-driven master. For each sign, there are at most $d$ such sampled rounds, so the one-sided definite-error count is less than $b/64$. The sum of these two counts is exactly $\#\{t:|a_t^h-h(x_t)|\ge2\eta\}$, which is consequently less than $b/32$. This proves \eqref{eq:accuracy-master-tail}. The estimates were applied to fixed codes before selecting any $J^h$; no conditioning on a target's realized code was used.
\end{proof}

\subsection{Tolerant Replay and Buffered Corrections}
\label{subsec:accuracy-replay}

We next use the master's final predictions as reference values for version space updates in each individual raw pool. Each pool maintains its own version space and applies the same center rule and sign-update rule as the master, but to its own version space. Unlike the master, which corrects when its center differs from $h(x_t)$ by more than $\eta$, a pool replay corrects when its center differs from the master's final prediction by more than $4\eta$.

\paragraph{The replay rule.}
For each fixed master code $J\in\mathcal J_*$ and each pool $j$, start a separate version space at $\mathcal H$. Its provisional prediction at round $t$ is the center of the version space left by round $t-1$ at $x_t$. After revealing $\boldsymbol\xi_t$, first compute the master's final value $y_t^J$. If $\xi_{t,j}=0$, leave the replay version space unchanged. If $\xi_{t,j}=1$, repeatedly update its current center $a$ toward $y_t^J$ while $|a-y_t^J|>4\eta$: use the plus fiber in \eqref{eq:sign-update-fibers} when $y_t^J>a+4\eta$, and the minus fiber otherwise. Recompute the center after each update and emit the final prediction when the loop stops. We call this procedure the \emph{tolerant replay} of $J$ in pool $j$, and denote its provisional and final predictions by $a_{t,j}^J$ and $y_{t,j}^J$.

For an arbitrary fixed formal code $J\in\mathcal J_*$, the replay need not retain any particular target, since the master's final predictions $y_t^J$ need not approximate any target on that outcome. An update may therefore lead to an empty fiber, in which case the replay enters an absorbing invalid state and outputs zero from that round onward. Every nonempty update strictly decreases rank, so at most $d$ such updates can occur. Hence each correction loop terminates either when the current center is within $4\eta$ of the reference value or when an update produces an empty fiber. In either case, the replay has a well-defined final prediction on every round of every raw outcome.

For a target $h$, apply exactly the same replay rule to its target-driven master values $y_t^h$, and write $a_{t,j}^h$ and $y_{t,j}^h$ for the resulting predictions. These target-driven replays are defined directly from the master for $h$, not by conditioning on any formal code. On an outcome with $J^h\in\mathcal J_*$, they coincide with the replays of $J^h$.

\begin{lemma}[Buffered replay]
\label{lem:buffered-replay}
For every target $h\in\mathcal H$ and pool $j$, the target-driven replay always retains $h$ and makes at most $d$ corrections. More precisely, whenever a correction is performed at center $a$, we have
\begin{equation}
  \label{eq:accuracy-buffered-constraints}
  \begin{cases}
    h(x_t)>a+3\eta,&\text{for a plus update},\\
    h(x_t)<a-3\eta,&\text{for a minus update}.
  \end{cases}
\end{equation}
Moreover, on every round selected by pool $j$,
\begin{equation}
  \label{eq:accuracy-replay-selected}
  |y_{t,j}^h-h(x_t)|\le5\eta.
\end{equation}
\end{lemma}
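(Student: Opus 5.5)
The plan is to argue by induction over rounds, and over the updates within each round, that the replay version space always contains $h$. Then Lemma~\ref{lem:center} controls the number of corrections, and the loop-termination condition gives the selected-round bound. The one input from the master is \eqref{eq:accuracy-master-selected}: every round selected by pool $j$ satisfies $\xi_{t,j}=1$, hence $\zeta_t=1$, hence $|y_t^h-h(x_t)|\le\eta$.

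First I establish the buffered constraints \eqref{eq:accuracy-buffered-constraints}. Suppose a plus update is performed at center $a$ on a pool-selected round $t$. By the replay rule, $y_t^h>a+4\eta$. Combined with $|y_t^h-h(x_t)|\le\eta$, this gives $h(x_t)>a+3\eta$. The minus case is symmetric, giving $h(x_t)<a-3\eta$. Note that only the values $y_t^h$ at pool-selected rounds enter, since unselected rounds make no updates.

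Next, retention of $h$ follows by induction. Assume the current version space $V$ contains $h$. A plus update replaces $V$ by $V_{x_t}^+=\{f\in V:f(x_t)\ge a+\eta\}$, and $h$ lies in it because $h(x_t)>a+3\eta\ge a+\eta$; the minus case is identical. So every update is valid and nonempty, the invalid state is never entered, and $h$ survives every update. By Lemma~\ref{lem:center}, each valid update strictly lowers the rank, starting from $\operatorname{sfat}_\eta(\mathcal H)\le d$. The version space stays nonempty (rank at least $0$), so at most $d$ corrections occur over the whole history. In particular, every correction loop terminates.

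Finally, on a pool-selected round the loop can only stop because the current center $a=y_{t,j}^h$ satisfies $|a-y_t^h|\le4\eta$; an invalid stop is excluded by the previous step. The triangle inequality with $|y_t^h-h(x_t)|\le\eta$ then gives $|y_{t,j}^h-h(x_t)|\le5\eta$, which is \eqref{eq:accuracy-replay-selected}.

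I do not expect a serious obstacle. The only point needing care is ordering the induction correctly. The constraint \eqref{eq:accuracy-buffered-constraints} at each update uses only the master's selected-round guarantee, not membership of $h$. That constraint must be proved first, since validity of the update and the rank decrease depend on it. Termination then follows from the rank count.
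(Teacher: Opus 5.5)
Your proposal is correct and follows the paper's proof essentially step for step. In both, the master's selected-round guarantee yields the $3\eta$ buffer, which keeps $h$ in each sign-update fiber; the rank drop then bounds the corrections by $d$; and the loop's stopping condition $|y_{t,j}^h-y_t^h|\le4\eta$ with the triangle inequality gives the $5\eta$ bound. Your explicit exclusion of an invalid stop is left implicit in the paper.
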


\begin{proof}
Every pool-selected round is also selected by the master, so $|y_t^h-h(x_t)|\le\eta$ on such rounds.

If the replay performs a plus update at center $a$, then $y_t^h>a+4\eta$, and hence
\begin{equation*}
  h(x_t)\ge y_t^h-\eta>a+3\eta.
\end{equation*}
The minus case is symmetric. Since the corresponding sign-update fibers are defined by the thresholds $a+\eta$ and $a-\eta$, respectively, each correction retains $h$. Every correction strictly decreases rank, so at most $d$ corrections occur.

On a selected round, the replay therefore terminates with $|y_{t,j}^h-y_t^h|\le4\eta$. The triangle inequality gives the claimed error bound.
\end{proof}

On $\mathsf{Cap}$, the target-driven replay in pool $j$ makes at most $d$ corrections, all at selected ordinals in $[m]$. Recording their ordinals and signs gives a formal schedule $J_j^h$ of depth at most $d$. Simulating $J_j^h$ on the same instance sequence and pool $j$'s capped selection history reproduces exactly the replay's version spaces and predictions.

\paragraph{From disagreements to prediction errors.}
For each target and pool, define
\begin{equation}
  \label{eq:accuracy-target-disagreements}
  \begin{aligned}
    D_j^h&:=\sum_{t=1}^T \mathbf 1\{|a_t^h-a_{t,j}^h|>6\eta\},\\
    H_j^h&:=\sum_{t=1}^T\xi_{t,j} \mathbf 1\{|a_t^h-a_{t,j}^h|>6\eta\}.
  \end{aligned}
\end{equation}
Thus $D_j^h$ counts disagreements between the provisional predictions, and $H_j^h$ counts those disagreements selected by pool $j$. On a round selected by pool $j$ with $|a_t^h-a_{t,j}^h|>6\eta$, if the master makes at least one correction, charge the round to the master. Otherwise, $y_t^h=a_t^h$, so $|y_t^h-a_{t,j}^h|>6\eta>4\eta$, and the replay must make at least one correction; charge the round to the replay. Since each procedure makes at most $d$ corrections in total, at most $2d$ distinct rounds can be charged. Hence
\begin{equation}
  \label{eq:accuracy-disagreement-hits}
  H_j^h\le2d \qquad(h\in\mathcal H,\ j\in[k]).
\end{equation}

If $|a_{t,j}^h-h(x_t)|\ge8\eta$ and $|a_t^h-h(x_t)|<2\eta$, then $|a_t^h-a_{t,j}^h|>6\eta$ by the triangle inequality. Consequently, on $\mathsf{Master}$,
\begin{equation}
  \label{eq:accuracy-replay-error-transfer}
  \#\{t:|a_{t,j}^h-h(x_t)|\ge8\eta\} <\frac b{32}+D_j^h.
\end{equation}
By \eqref{eq:accuracy-replay-selected}, a final replay error of size at least $8\eta$ can occur only on an unselected round. There the provisional and final predictions agree, so final outliers are a subset of provisional outliers. In particular, on $\mathsf{Master}$, the condition $D_j^h<b/16$ makes both outlier counts less than $3b/32<b/8$.

\subsection{Simultaneous Accuracy Across Pools}
\label{subsec:accuracy-simultaneous}

Although the replay histories need not be independent, the current disagreement indicators are determined by the past, while the fresh pool selection bits remain independent Bernoulli-$\beta$ variables even after conditioning on the past. We use this fact to establish a joint bound.

\paragraph{A joint bound for fixed master codes.}
For a fixed $J\in\mathcal J_*$ and its replays defined above, set
\begin{equation}
  \label{eq:accuracy-fixed-disagreements}
  \Delta_{t,j}^J:= \mathbf 1\{|a_t^J-a_{t,j}^J|>6\eta\},\qquad D_j^J:=\sum_{t=1}^T\Delta_{t,j}^J,\qquad H_j^J:=\sum_{t=1}^T\xi_{t,j}\Delta_{t,j}^J.
\end{equation}
Every current indicator $\Delta_{t,j}^J$ is determined before $\boldsymbol\xi_t$ is revealed. We do not assert $H_j^J\le2d$ for arbitrary formal codes; that bound is needed only for target-driven histories, as in \eqref{eq:accuracy-disagreement-hits}.

\begin{lemma}[Joint disagreement bound]
\label{lem:joint-disagreements}
For every fixed $J\in\mathcal J_*$ and $I\subseteq[k]$,
\begin{equation}
  \label{eq:accuracy-joint-tail}
  \Pr\left\{D_j^J\ge\frac b{16},\ H_j^J\le2d \text{ for every }j\in I\right\} \le e^{-\alpha |I|d/64}.
\end{equation}
\end{lemma}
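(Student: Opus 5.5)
The plan is to build a joint exponential supermartingale over the pools in $I$, exploiting that each $\Delta_{t,j}^J$ is predictable with respect to the filtration generated by $\omega_0$ and $\boldsymbol\xi_{<t}$, while the coordinates $\xi_{t,j}$, $j\in I$, are fresh independent Bernoulli-$\beta$ bits. Fix $\lambda>0$, to be chosen as $\lambda=\log 2$ (or a comparable constant), and for each $t$ define
\begin{equation*}
  Z_t:=\prod_{j\in I}\exp\Bigl(-\lambda\sum_{s\le t}\xi_{s,j}\Delta_{s,j}^J+(1-e^{-\lambda})\beta\sum_{s\le t}\Delta_{s,j}^J\Bigr).
\end{equation*}
Conditional on the past, the factor for round $t$ in pool $j$ has expectation $(1-\beta+\beta e^{-\lambda})^{\Delta}e^{(1-e^{-\lambda})\beta\Delta}\le1$, using $1-x\le e^{-x}$. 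Independence of the coordinates of $\boldsymbol\xi_t$ across pools then makes the product factorize. Hence $(Z_t)$ is a nonnegative supermartingale with $Z_0=1$. This argument needs no independence between the replay histories themselves: only the predictability of the $\Delta$'s matters. That predictability holds because both $a_t^J$ and $a_{t,j}^J$ are computed from the version spaces left by round $t-1$ and from $x_t=F_t(\boldsymbol\xi_{<t},\omega_0)$.

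On the bad event, every $j\in I$ has $D_j^J\ge b/16$ and $H_j^J\le2d$. Therefore
\begin{equation*}
  Z_T\ge\exp\Bigl(|I|\bigl[(1-e^{-\lambda})\beta b/16-2\lambda d\bigr]\Bigr).
\end{equation*}
Since $\beta b=\alpha d$, taking $\lambda=\log2$ gives the exponent $|I|d(\alpha/32-2\log2)$. Markov's inequality, applied conditionally on $\omega_0$ and then averaged, bounds the probability by $\exp(-|I|d(\alpha/32-2\log 2))$. Because $\alpha\ge128$, we have $2\log2\le\alpha/64$, so this is at most $e^{-\alpha|I|d/64}$, as required. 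If slack is tight, a different $\lambda$ (for instance $\lambda=1$) also works, since the constants leave room.

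The one point needing care is verifying that $D_j^J$ is computed over all $T$ rounds, including rounds after the replay or master becomes invalid. Invalid simulations emit the deterministic output zero, which is still predictable, so the argument is unaffected. Likewise, the raw bits, not the capped ones, must be used throughout. The statement is phrased with raw $\xi_{t,j}$, so no cap issue arises. The main (mild) obstacle is thus just confirming predictability under the adaptive input rule \eqref{eq:accuracy-input-rule}, including the within-round ordering in which master and replay corrections happen after $\boldsymbol\xi_t$ is revealed but the provisional predictions are fixed before it.
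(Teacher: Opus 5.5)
Your proposal is correct and is essentially the paper's proof. With $\lambda=\log 2$, your process is exactly the paper's supermartingale $\exp\bigl(\frac{\beta}{2}\sum_{j\in I}D_{j,t}^J-(\log 2)\sum_{j\in I}H_{j,t}^J\bigr)$. The paper then uses the same one-step bound $e^{\beta/2}(1-\beta/2)\le 1$, the same evaluation on the bad event via $\beta b=\alpha d$, and the same final use of $\alpha\ge 128$.
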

\begin{proof}
Work under the original joint law of the raw pool bits. Let $D_{j,t}^J$ and $H_{j,t}^J$ denote the corresponding counts through round $t$, and define
\begin{equation*}
  Z_t:=\exp\!\left( \frac\beta2\sum_{j\in I}D_{j,t}^J -(\log2)\sum_{j\in I}H_{j,t}^J\right), \qquad Z_0=1.
\end{equation*}
Conditional on $\omega_0$ and $\boldsymbol\xi_{<t}$, all current disagreement indicators are fixed and the current pool bits are independent Bernoulli-$\beta$ variables. Hence
\begin{equation*}
  \mathbb E\!\left[ \frac{Z_t}{Z_{t-1}}\,\middle|\, \omega_0,\boldsymbol\xi_{<t}\right] =\prod_{\substack{j\in I:\Delta_{t,j}^J=1}} e^{\beta/2}(1-\beta/2) \le1,
\end{equation*}
where the last inequality uses $1-z\le e^{-z}$. Thus $(Z_t)$ is a nonnegative supermartingale for the filtration generated by $\omega_0,\boldsymbol\xi_1,\ldots,\boldsymbol\xi_t$, and $\mathbb E Z_T\le1$. On the event in \eqref{eq:accuracy-joint-tail},
\begin{equation*}
  Z_T\ge \exp\!\left[|I|d\left(\frac\alpha{32}-2\log2\right)\right],
\end{equation*}
since $\beta b=\alpha d$. Markov's inequality gives the upper bound $\exp[-|I|d(\alpha/32-2\log2)]$. Finally, $\alpha\ge128$ implies $\alpha/32-2\log2\ge\alpha/64$.
\end{proof}

Define $\mathsf{Joint}$ to be the event that there are no $J\in\mathcal J_*$ and $I\subseteq[k]$ with $|I|=k/4$ such that $D_j^J\ge b/16$ and $H_j^J\le2d$ for every $j\in I$. A union bound in Lemma~\ref{lem:joint-disagreements} gives
\begin{equation}
  \label{eq:accuracy-joint-union}
  \Pr(\mathsf{Joint}^{\mathsf c}) \le Q_d(M_*)\binom{k}{k/4}e^{-\alpha kd/256}.
\end{equation}
This estimate does not condition on $\mathsf{Cap}$ or $\mathsf{Master}$, and it assumes no independence between replays.

\begin{proposition}[Simultaneous replay accuracy]
\label{prop:multi-pool-accuracy}
For a sufficiently large universal choice of $\alpha$, in the regime $b>32\alpha d$, the event
\begin{equation*}
  \mathsf{Good}:=\mathsf{Cap}\cap\mathsf{Master} \cap\mathsf{Joint}
\end{equation*}
has probability greater than $15/16$ for every input rule \eqref{eq:accuracy-input-rule}. On this event, for every $h\in\mathcal H$, at least $3k/4$ pools $j$ satisfy
\begin{equation}
  \label{eq:accuracy-many-good-pools}
  \max\!\left\{ \#\{t:|a_{t,j}^h-h(x_t)|\ge8\eta\},\, \#\{t:|y_{t,j}^h-h(x_t)|\ge8\eta\} \right\}<\frac b8.
\end{equation}
\end{proposition}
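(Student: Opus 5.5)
The proof has two parts: a probability estimate for $\mathsf{Good}$, and a deterministic derivation of \eqref{eq:accuracy-many-good-pools} on $\mathsf{Good}$. For the probability, we union-bound the three failure events: $\Pr(\mathsf{Cap}^{\mathsf c})$ from \eqref{eq:accuracy-cap-tail}, $\Pr(\mathsf{Cap}\cap\mathsf{Master}^{\mathsf c})$ from Lemma~\ref{lem:uniform-master} (zero in the all-round regime), and $\Pr(\mathsf{Joint}^{\mathsf c})$ from \eqref{eq:accuracy-joint-union}. Their sum should be at most $1/16$ once $\alpha$ is a large universal constant. All three bounds are uniform over input rules of the form \eqref{eq:accuracy-input-rule}, so no further adaptivity argument is needed.

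The main work is the numerical verification. Recall $m=\lceil2\alpha dR\rceil$ and $M_*=4km$ with $k=4\lceil\log_2R\rceil$. By Lemma~\ref{lem:count},
\begin{equation*}
  \log Q_d(M_*)\le d\log\bigl(2e(1+M_*/d)\bigr)=O\bigl(d\log(\alpha R\log R)\bigr)=O\bigl(d(\log\alpha+\log R)\bigr).
\end{equation*}
The bounds then compare as follows.
\begin{itemize}
  \item For $\mathsf{Joint}$: $\log\binom{k}{k/4}\le k\log(4e)/4$, while the exponent is $\alpha kd/256$ with $k\ge 4\log_2R$. Hence $\alpha kd/256$ dominates both $O(d\log R)$ and $O(d\log\alpha)$ for large $\alpha$, since $k\ge24$ and $\log\alpha\ll\alpha$. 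The resulting bound is at most $e^{-c\alpha kd}\le1/48$.
  \item For $\mathsf{Master}$: $u\ge k\beta/2$ gives $ub/64\ge\alpha kd/128$. Also $M=\lceil4uT\rceil\le 4k\beta T+1=O(\alpha kdR)$, so by \eqref{eq:S-bound}, $\log S_d(M)=O(d\log(\alpha kR))$. The term $c_0\alpha kd/128$ again beats $\log Q_d(M_*)+\log S_d(M)+d\log2+\log 4$. The tail term $e^{-2uT/3}$ is at most $e^{-\alpha kdR/3}$, which is negligible.
  \item For $\mathsf{Cap}$: $ke^{-\alpha dR/3}$ is tiny since $R>32$.
\end{itemize}
I expect this bookkeeping to be the only delicate point. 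The key is that every $\log R$ in the union bounds is absorbed by the factor $k=\Theta(\log R)$ in the exponents. Terms like $\log\log R$ and $\log\alpha$ are handled by choosing $\alpha$ large.

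Now fix an outcome in $\mathsf{Good}$ and a target $h$. On $\mathsf{Cap}$, $J^h\in\mathcal J_*$, and the target-driven master and replays coincide with the fixed-code simulations of $J^h$. Hence $D_j^h=D_j^{J^h}$ and $H_j^h=H_j^{J^h}$. By \eqref{eq:accuracy-disagreement-hits}, $H_j^{J^h}\le2d$ for every pool. If $k/4$ or more pools had $D_j^{J^h}\ge b/16$, choosing any $k/4$ of them as $I$ would contradict $\mathsf{Joint}$; this uses that $k$ is divisible by four. So at least $3k/4$ pools satisfy $D_j^h<b/16$. For each such pool, \eqref{eq:accuracy-replay-error-transfer} on $\mathsf{Master}$ bounds the provisional outliers by $b/32+b/16<b/8$. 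By the remark following it, final outliers occur only on unselected rounds, where final and provisional predictions agree, so they are a subset of the provisional outliers. This gives \eqref{eq:accuracy-many-good-pools}.
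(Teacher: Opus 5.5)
Your proposal is correct and follows the paper's proof essentially step for step. You use the same three-term union bound, absorbing every $\log R$ through $\log R\le k\log 2/4$ and every $\log\alpha$ by taking $\alpha$ large (the paper makes this explicit via $A_\alpha=\log(100e\alpha)$ and the two conditions in \eqref{eq:accuracy-alpha-choice}), and the same deterministic argument on $\mathsf{Good}$ via $H_j^{J^h}=H_j^h\le 2d$, the event $\mathsf{Joint}$, and the error transfer \eqref{eq:accuracy-replay-error-transfer}.
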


\begin{proof}
We first prove the accuracy assertion on $\mathsf{Good}$. Fix an outcome in this event and a target $h\in\mathcal H$. Since $\mathsf{Cap}$ holds, the master's correction code $J^h$ belongs to $\mathcal J_*$. On this realized history, the simulations associated with $J^h$ reproduce exactly the target-driven master and its pool replays. Hence
\begin{equation*}
  D_j^{J^h}=D_j^h,\qquad H_j^{J^h}=H_j^h \qquad(j\in[k]).
\end{equation*}

By \eqref{eq:accuracy-disagreement-hits}, $H_j^h\le2d$ for every pool $j$. Since $\mathsf{Joint}$ holds, fewer than $k/4$ pools can satisfy $D_j^h\ge b/16$. Consequently, the set
\begin{equation*}
  I_h:=\{j\in[k]:D_j^h<b/16\}
\end{equation*}
has cardinality at least $3k/4$.

For every $j\in I_h$, the error transfer \eqref{eq:accuracy-replay-error-transfer} and $\mathsf{Master}$ give
\begin{equation*}
  \#\{t:|a_{t,j}^h-h(x_t)|\ge8\eta\} <\frac b{32}+D_j^h <\frac{3b}{32}<\frac b8.
\end{equation*}
Every final replay outlier is also a provisional outlier, as established above. Thus \eqref{eq:accuracy-many-good-pools} holds for every $j\in I_h$. Since the outcome and $h$ were arbitrary, this conclusion holds for all targets on every outcome in $\mathsf{Good}$.

It remains to show that $\Pr(\mathsf{Good})>15/16$. We verify that a sufficiently large universal $\alpha$ makes each of the three failure probabilities small enough.

Recall that $R>32$, $d\ge1$, and $k=4\lceil\log_2R\rceil$. The parameter definitions give
\begin{equation}
  \label{eq:accuracy-entropy-parameters}
  24\le k\le R,\qquad \log R\le\frac{k\log2}{4},\qquad m\le3\alpha dR,\qquad M_*\le12\alpha kdR.
\end{equation}
In the union regime, the bounds $k\beta/2\le u\le k\beta$ further imply
\begin{equation*}
  d\le M=\lceil4uT\rceil\le5\alpha kdR, \qquad ub\ge\frac{\alpha kd}{2}, \qquad uT\ge\frac{\alpha kdR}{2}.
\end{equation*}
Lemma~\ref{lem:count} and \eqref{eq:S-bound} therefore yield
\begin{equation}
  \label{eq:accuracy-code-counts}
  Q_d(M_*)\le(26e\alpha kR)^d, \qquad S_d(M)\le(5e\alpha kR)^d,
\end{equation}
where the second bound is needed only in the union regime.

Set $A_\alpha:=\log(100e\alpha)$. Since $k\le R$ and $\log R\le k\log2/4$, we have
\begin{equation*}
  \log(kR)\le2\log R\le\frac{k\log2}{2}.
\end{equation*}
Together with \eqref{eq:accuracy-code-counts}, $d\ge1$, and $\binom{k}{k/4}\le2^k$, this gives
\begin{align*}
  \log\!\bigl(4Q_d(M_*)S_d(M)2^d\bigr) &\le(2A_\alpha+3)kd,\\
  \log\!\bigl(4Q_d(M_*)\bigr) &\le(A_\alpha+2)kd,\\
  \log\!\left(Q_d(M_*)\binom{k}{k/4}\right) &\le(A_\alpha+2)kd.
\end{align*}

Choose $\alpha\ge128$ sufficiently large that
\begin{equation}
  \label{eq:accuracy-alpha-choice}
  \frac{c_0\alpha}{128}\ge2A_\alpha+4, \qquad \frac{\alpha}{256}\ge A_\alpha+3,
\end{equation}
where $c_0=\log(3/2)/4$. Such a universal choice exists because $A_\alpha=O(\log\alpha)$.

In the union regime, \eqref{eq:accuracy-master-tail} and the preceding estimates imply
\begin{equation*}
  \Pr(\mathsf{Cap}\cap\mathsf{Master}^{\mathsf c}) \le2e^{-kd}.
\end{equation*}
Indeed, the first term in \eqref{eq:accuracy-master-tail} is at most $e^{-kd}$ because
\begin{equation*}
  \frac{c_0ub}{64} \ge\frac{c_0\alpha kd}{128} \ge(2A_\alpha+4)kd,
\end{equation*}
while the second is at most $e^{-kd}$ because
\begin{equation*}
  \frac{2uT}{3} \ge\frac{\alpha kdR}{3} \ge\frac{\alpha kd}{256} \ge(A_\alpha+3)kd.
\end{equation*}
In the all-round regime, $\mathsf{Master}$ holds on every outcome, so the same probability bound holds in both regimes.

Similarly, \eqref{eq:accuracy-joint-union} and \eqref{eq:accuracy-alpha-choice} give
\begin{equation*}
  \Pr(\mathsf{Joint}^{\mathsf c}) \le Q_d(M_*)\binom{k}{k/4}e^{-\alpha kd/256} \le e^{-kd}.
\end{equation*}
Finally, by \eqref{eq:accuracy-cap-tail}, $k\le R$, and $\alpha\ge128$,
\begin{equation*}
  \Pr(\mathsf{Cap}^{\mathsf c}) \le k e^{-\alpha dR/3} \le e^{-R}.
\end{equation*}

Combining these bounds,
\begin{equation*}
  \begin{aligned}
    \Pr(\mathsf{Good}^{\mathsf c}) &\le \Pr(\mathsf{Cap}^{\mathsf c}) +\Pr(\mathsf{Cap}\cap\mathsf{Master}^{\mathsf c}) +\Pr(\mathsf{Joint}^{\mathsf c})\\
    &\le e^{-R}+3e^{-kd} <\frac1{16},
  \end{aligned}
\end{equation*}
where the last inequality uses $R>32$ and $kd\ge24$. The choice of $\alpha$ and all probability bounds are independent of the input rule and the law of its independent auxiliary seed.
\end{proof}

By Lemma~\ref{lem:buffered-replay}, on $\mathsf{Cap}$ each target-driven replay can be reproduced by a sign schedule of depth at most $d$ on the corresponding pool's capped selection history. Thus Proposition~\ref{prop:multi-pool-accuracy} provides at least $3k/4$ accurate schedules for every target on the same event $\mathsf{Good}$.

\section{Robust Identification and Eligible Parents}
\label{sec:identification}

We use the parameters fixed in Section~\ref{sec:accuracy}, in the regime $b>32\alpha d$. A singleton class is covered exactly by its unique member, so assume $|\mathcal H|\ge2$. All simulations in this section use the capped pools. Each selected list therefore has length at most $m$, including on outcomes outside $\mathsf{Cap}$.

The preceding section provides accurate replay histories for every target. We now consider sign schedules whose version spaces reduce to singletons. To obtain useful counting bounds, we additionally require the surviving function to satisfy every correction inequality with a prescribed margin.

\subsection{Robust Identification and Local Counting}
\label{subsec:identification-certificates}

For each integer $0\le r\le d$, let $\mathcal J_r(m)$ denote the formal sign schedules of depth at most $r$ with selected ordinals in $[m]$. Thus $|\mathcal J_r(m)|=Q_r(m)$. In each capped pool $j$, simulate every $L\in\mathcal J_d(m)$ from the initial version space $\mathcal H$, using the rules of Section~\ref{subsec:foundations-schedules}. Write $V_t^{j,L}$ for the version space after round $t$ and $p_t^{j,L}$ for the final prediction on that round. An invalid simulation has $V_t^{j,L}=\varnothing$ and $p_t^{j,L}=0$ from the invalidating update onward. These are simulations of prescribed sign schedules: they do not use master reference values or target values.

\paragraph{The inequalities imposed by a schedule.}
Consider a schedule $L=((s_1,\sigma_1),\ldots,(s_i,\sigma_i))$ that is valid and born by round $t$. Recall that this means that all its prescribed updates have occurred without producing an empty version space. For each update $\ell\in[i]$, let $t_\ell\le t$ be its actual round and let $a_\ell$ be the center immediately before that update. Several $t_\ell$ may coincide; their centers are recomputed after each update. By the definition of the sign-update fibers,
\begin{equation}
  \label{eq:identification-update-constraints}
  V_t^{j,L} =\left\{f\in\mathcal H:
  \begin{array}{ll}
    f(x_{t_\ell})\ge a_\ell+\eta &\text{for every $\ell$ with $\sigma_\ell=+$},\\
    f(x_{t_\ell})\le a_\ell-\eta &\text{for every $\ell$ with $\sigma_\ell=-$}
  \end{array}
  \right\}.
\end{equation}
Here $f$ ranges over candidate functions in $\mathcal H$. We call these inequalities the \emph{update constraints}. Their thresholds are computed from the instance history, the pool selections, and $L$; they are not additional entries in the code.

\begin{definition}[Robust certificate]
\label{def:robust-certificate}
The simulation of $L$ in pool $j$ is a \emph{robust certificate} for $h\in\mathcal H$ by round $t$ if it is valid and born by $t$, its version space is $V_t^{j,L}=\{h\}$, and every update satisfies
\begin{equation}
  \label{eq:identification-certificate-margin}
  \begin{cases}
    h(x_{t_\ell})\ge a_\ell+3\eta,&\text{if $\sigma_\ell=+$},\\
    h(x_{t_\ell})\le a_\ell-3\eta,&\text{if $\sigma_\ell=-$}.
  \end{cases}
\end{equation}
\end{definition}

Thus, for a plus update, $h(x_{t_\ell})$ exceeds the retained boundary $a_\ell+\eta$ by at least $2\eta$; for a minus update it lies at least $2\eta$ below $a_\ell-\eta$. The test first examines the simulation's version space. If this space is a singleton, its unique member specifies the function $h$ whose values are used in \eqref{eq:identification-certificate-margin}. No externally supplied target labels are needed. These conditions are defined set-theoretically; we do not require them to be efficiently computable.

\paragraph{Identification and activation.}
Say that pool $j$ \emph{identifies} $h$ by round $t$ if one of its simulations is a robust certificate for $h$ by that round. Let
\begin{equation}
  \label{eq:identification-pool-sets}
  \mathcal I_j(t):= \{h\in\mathcal H:\text{pool $j$ identifies $h$ by round $t$}\}.
\end{equation}
Identification is tested after the schedule updates on round $t$. Once a schedule is born, its version space and update constraints no longer change. Its certificate therefore persists. In particular, $\mathcal I_j(t)\subseteq\mathcal I_j(t+1)$, and each schedule can certify at most one function over the entire run. Hence $|\mathcal I_j(t)|\le Q_d(m)$. At time zero only the empty schedule is born, and $|\mathcal H|\ge2$, so $\mathcal I_j(0)=\varnothing$.

A function $h$ \emph{activates} on the first round when at least $k/2$ pools have identified it. Formally, define
\begin{equation}
  \label{eq:identification-activation-time}
  \tau(h):=\min\left\{t\in[T]: \#\{j\in[k]:h\in\mathcal I_j(t)\}\ge k/2\right\},
\end{equation}
with $\tau(h)=\infty$ if the set is empty. Write $\mathcal A_t:=\{h\in\mathcal H:\tau(h)\le t\}$ and $\mathcal A:=\mathcal A_T$. These sets are finite, since $\mathcal A_t\subseteq\bigcup_{j=1}^k\mathcal I_j(t)$.

\paragraph{Singleton replay states.}
Write $U_{t,j}^h$ for the version space of the target-driven replay for $h$ in raw pool $j$ after round $t$. On $\mathsf{Cap}$, record only its corrections through round $t$, obtaining a prefix schedule in $\mathcal J_d(m)$. In capped pool $j$, this prefix schedule is valid and born by $t$ and reproduces the replay through that round. Lemma~\ref{lem:buffered-replay} gives the strict versions of \eqref{eq:identification-certificate-margin} for all its updates. Consequently,
\begin{equation}
  \label{eq:identification-singleton-replay}
  \text{on $\mathsf{Cap}$},\qquad U_{t,j}^h=\{h\}\quad\Longrightarrow\quad h\in\mathcal I_j(t).
\end{equation}
This implication uses the buffered update inequalities, not the accuracy of the replay on unselected rounds.

\paragraph{Local identification bounds.}

The separation in \eqref{eq:identification-certificate-margin} limits how many functions a pool can identify within a subclass of small $\operatorname{sfat}_\eta(V)$. The bound holds for every realized capped history and does not require $\mathsf{Good}$. 

\begin{lemma}[Local identification bound]
\label{lem:local-identification}
On every outcome, for every $t\in\{0,\ldots,T\}$, pool $j\in[k]$, and nonempty subclass $V\subseteq\mathcal H$ with $\operatorname{sfat}_\eta(V)\le r$, where $r\in\{0,\ldots,d\}$,
\begin{equation}
  \label{eq:identification-local-pool}
  |\mathcal I_j(t)\cap V|\le Q_r(m).
\end{equation}
\end{lemma}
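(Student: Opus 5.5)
The plan is to compare the identified functions with the pointwise schedule cover of Lemma~\ref{lem:pointwise}, and to show that distinct identified functions in $V$ need distinct members of that cover. Fix the outcome, the round $t$, the pool $j$ and the subclass $V$. Everything then concerns the single realized instance history $x_{1:t}$ and the capped selection list of pool $j$ up to round $t$, which has at most $m$ selected rounds. Apply Lemma~\ref{lem:pointwise} with initial version space $V_0=V$ and rank bound $r$. This gives a formal family of $Q_r(m)$ schedules, started from $V$ rather than from $\mathcal H$. On this selection history, every $f\in V$ has a schedule in the family whose final predictions are within $\eta$ of $f$ on every selected round through $t$. Assign to each $h\in\mathcal I_j(t)\cap V$ one such witnessing schedule $\Lambda(h)$. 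It then suffices to prove that $\Lambda$ is injective.

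Suppose $h\neq h'$ are both in $\mathcal I_j(t)\cap V$. I want a selected round $s\le t$ with $|h(x_s)-h'(x_s)|>2\eta$. Then no single prediction stream can be within $\eta$ of both on that round, and so $\Lambda(h)\neq\Lambda(h')$. To find this round, let $L$ be the robust certificate for $h$ in pool $j$. Its version space is $V_t^{j,L}=\{h\}$, so $h'\notin V_t^{j,L}$. By the description of the born version space in \eqref{eq:identification-update-constraints}, $h'$ violates some update constraint of $L$. Say it is a plus update $\ell$ at selected round $t_\ell$ with center $a_\ell$, so that $h'(x_{t_\ell})<a_\ell+\eta$. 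The margin condition \eqref{eq:identification-certificate-margin} gives $h(x_{t_\ell})\ge a_\ell+3\eta$, and hence $h(x_{t_\ell})-h'(x_{t_\ell})>2\eta$. The minus case is symmetric. The round $t_\ell$ was selected in pool $j$, because updates only occur on selected ordinals, and $t_\ell\le t$. Then $|p_{t_\ell}-h(x_{t_\ell})|\le\eta$ and $|p_{t_\ell}-h'(x_{t_\ell})|\le\eta$ would give a contradiction through the triangle inequality. This establishes injectivity, and therefore $|\mathcal I_j(t)\cap V|\le Q_r(m)$.

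The main point to check is that Lemma~\ref{lem:pointwise} really applies with $V_0=V$ on this history. The subclass $V$ is arbitrary and may depend on the outcome, but the lemma quantifies over every initial class of rank at most $r$ and every selection history with at most $m$ selected rounds, so this causes no problem. We apply it to the fixed realized prefix, truncated at round $t$. Only the existence of a witness in a family of size $Q_r(m)$ is needed, not any causal or probabilistic property. A second check is that the certificate for $h$ uses only rounds up to $t$; this holds because it is born by round $t$. The certificate also runs from $\mathcal H$ rather than from $V$, but that is harmless, since only its constraint set is used to separate $h$ from $h'$. Finally, the degenerate case $t=0$ gives an empty set, and the case $r=0$ gives at most one function, consistent with $Q_0(m)=1$.
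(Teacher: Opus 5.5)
Your proof is correct and follows the paper's argument. The $3\eta$ margin in a robust certificate forces a pool-selected round $s\le t$ at which two distinct identified functions differ by more than $2\eta$, and the pointwise schedule cover of $V$ from Lemma~\ref{lem:pointwise}, of size $Q_r(m)$, then cannot assign both to the same prediction stream. The only cosmetic difference is that you invoke the selection-history form of Lemma~\ref{lem:pointwise} directly, whereas the paper passes to the selected instances $z_1,\ldots,z_n$ and selects every position.
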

\begin{proof}
Fix the outcome, $t$, $j$, and $V$. We first show that any two distinct functions $h,h'\in\mathcal I_j(t)$ differ by more than $2\eta$ at some round selected by capped pool $j$ by time $t$. Choose a certificate for $h$. Since it is born by round $t$ and its version space is $\{h\}$, the function $h'$ must violate an update constraint imposed at some round $s\le t$ selected by capped pool $j$.

If this is a plus update at center $a$, then
\begin{equation*}
  h'(x_s)<a+\eta,\qquad h(x_s)\ge a+3\eta,
\end{equation*}
so $h(x_s)-h'(x_s)>2\eta$. The minus case is symmetric.

Let $z_1,\ldots,z_n$ be the instances in that list through round $t$, in order, so $n\le m$. If $n=0$, no pool schedule other than the empty schedule is born, and $\mathcal I_j(t)=\varnothing$. Otherwise, apply Lemma~\ref{lem:pointwise} to the initial class $V$ and the sequence $z_1,\ldots,z_n$, selecting every position. It gives at most $Q_r(m)$ prediction vectors such that every $f\in V$ is within $\eta$ of one vector at every position. Two distinct functions in $\mathcal I_j(t)\cap V$ cannot use the same vector: the triangle inequality would make their values differ by at most $2\eta$ at every selected position, contrary to the separation just proved. This proves \eqref{eq:identification-local-pool}.
\end{proof}

\begin{corollary}[Local activation bound]
\label{cor:local-activation}
Under the same hypotheses on $V$ and $r$, on every outcome and for every $t\in\{0,\ldots,T\}$,
\begin{equation}
  \label{eq:identification-local-activation}
  |\mathcal A_t\cap V|\le2Q_r(m).
\end{equation}
In particular, $|\mathcal A|\le2Q_d(m)$.
\end{corollary}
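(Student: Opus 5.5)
The plan is a double count over pool--function incidences, using Lemma~\ref{lem:local-identification} as the only input. Fix an outcome, a round $t\in\{0,\ldots,T\}$, and a nonempty $V\subseteq\mathcal H$ with $\operatorname{sfat}_\eta(V)\le r$. Consider the incidence count
\begin{equation*}
  N:=\sum_{j=1}^k|\mathcal I_j(t)\cap V|
  =\sum_{h\in V}\#\{j\in[k]:h\in\mathcal I_j(t)\}.
\end{equation*}
Lemma~\ref{lem:local-identification} bounds each term of the first sum by $Q_r(m)$, so $N\le kQ_r(m)$.

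For the lower bound on $N$, I will use the persistence of identification. Let $h\in\mathcal A_t\cap V$. Then $\tau(h)\le t$, and by \eqref{eq:identification-activation-time} at least $k/2$ pools satisfy $h\in\mathcal I_j(\tau(h))$. The monotonicity $\mathcal I_j(s)\subseteq\mathcal I_j(s+1)$ gives $h\in\mathcal I_j(t)$ for all these pools. Thus every such $h$ contributes at least $k/2$ to the second sum, and
\begin{equation*}
  \frac k2\,|\mathcal A_t\cap V|\le N\le kQ_r(m).
\end{equation*}
Dividing by $k/2$ gives \eqref{eq:identification-local-activation}. This argument also shows that $\mathcal A_t\cap V$ is finite, which was noted already. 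The case $t=0$ is trivial, since $\mathcal A_0=\varnothing$.

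For the final claim, I will take $V=\mathcal H$, $r=d$, and $t=T$. This is allowed because $\operatorname{sfat}_\eta(\mathcal H)\le d$ by the hypotheses of Proposition~\ref{prop:core}, and it gives $|\mathcal A|\le 2Q_d(m)$.

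I do not expect a genuine obstacle. The only point needing care is that the local bound must hold for every realized outcome, including outside $\mathsf{Cap}$. That is already covered, because Lemma~\ref{lem:local-identification} is stated on every outcome using capped pools.
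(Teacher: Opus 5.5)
Your proposal is correct and follows the paper's proof. Both double count the pairs $(h,j)$ with $h\in\mathcal A_t\cap V$ and $h\in\mathcal I_j(t)$, using the persistence of identification for the lower bound $\frac k2|\mathcal A_t\cap V|$ and Lemma~\ref{lem:local-identification} for the upper bound $kQ_r(m)$, then specialize to $t=T$, $V=\mathcal H$, and $r=d$.
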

\begin{proof}
Every $h\in\mathcal A_t$ has been identified by at least $k/2$ pools by time $t$, since identification persists after activation. Counting the pairs $(h,j)$ with $h\in\mathcal A_t\cap V$ and $h\in\mathcal I_j(t)$ gives
\begin{equation*}
  \frac{k}{2}|\mathcal A_t\cap V| \le\sum_{j=1}^k|\mathcal I_j(t)\cap V| \le kQ_r(m).
\end{equation*}
The first assertion follows by division, and the second by taking $t=T$, $V=\mathcal H$, and $r=d$.
\end{proof}

In particular, $V$ may be a version space produced during
the run; it need not be fixed before the experiment.

\subsection{Parent and Child Experts}
\label{subsec:identification-experts}

We preallocate two types of prediction streams:
\emph{parents}, which follow fixed sign schedules, and
\emph{children}, which may later be assigned to activated
functions. For every pool and every schedule of depth at
most $d-1$, retain its prediction stream as a parent.
Thus the set of parent indices is
\begin{equation}
  \label{eq:identification-parent-indices}
  \mathcal P:=[k]\times\mathcal J_{d-1}(m), \qquad P:=|\mathcal P|=kQ_{d-1}(m).
\end{equation}
For each $v=(j,L)\in\mathcal P$, let $p_t^v$ denote the prediction of schedule $L$ in capped pool $j$ at round $t$, and call the depth of $L$ the depth of $v$. The parent always follows this schedule. It outputs zero if the schedule becomes invalid.

If a depth-$i$ parent becomes valid and born, let $V_v$ be its version space after birth. This space is fixed for the rest of the run, and the rank-drop property gives
\begin{equation}
  \label{eq:identification-parent-rank}
  \operatorname{sfat}_\eta(V_v)\le d-i.
\end{equation}
Identification considers all schedules in $\mathcal J_d(m)$, whereas parent indices are allocated only to schedules in $\mathcal J_{d-1}(m)$.

\paragraph{Children and their outputs.}
For each $i\in\{0,\ldots,d-1\}$, fix a nonnegative integer $c_i$ before the experiment. Attach $c_i$ additional prediction indices to every depth-$i$ parent $v$, and call them its \emph{children}, indexed by $(v,\ell)$ for $\ell\in\{1,\ldots,c_i\}$. Each child initially copies its parent's outputs. On a round when a function $h$ activates, at most one unused child may be assigned to $h$. From that round onward it predicts $h(x_s)$ on every subsequent instance $x_s$, and it is never reassigned.

Assignments on round $t$ are made after the capped schedule updates and identification tests, but before emitting the child predictions on that round. Thus
\begin{equation}
  \label{eq:identification-child-output}
  g_t^{v,\ell}:=
  \begin{cases}
    h(x_t),&\text{if this child has been assigned to $h$ by round $t$},\\
    p_t^v,&\text{if this child is still unassigned}.
  \end{cases}
\end{equation}
In particular, assignment does not change any past prediction.

The capacities $c_i$ and the rule choosing the assignments will be specified in the next section. Here we define which choices are permitted and prove their deterministic properties. For any such capacities, the number of parent and child indices is
\begin{equation}
  \label{eq:identification-total-indices}
  K=P+\sum_{i=0}^{d-1}k n_i(m)c_i.
\end{equation}
This counts invalid schedules and unused children as well. All these indices are present from the start. The other schedule simulations are internal calculations, not additional members of the covering family.

\subsection{Eligibility and Many Compatible Parents}
\label{subsec:identification-eligibility}

An unused child can cover a newly activated function $h$ if its parent has already made few errors against $h$: after assignment, the child predicts $h$ exactly. We therefore test the parent's actual predictions before the activation round. For $v\in\mathcal P$, $h\in\mathcal H$, and $t\in[T]$, write
\begin{equation}
  \label{eq:identification-past-errors}
  E_{t-1}(v,h):= \sum_{s=1}^{t-1}\mathbf 1\{|p_s^v-h(x_s)|\ge8\eta\}.
\end{equation}
The sum is zero at $t=1$.

\begin{definition}[Eligible parent]
\label{def:eligible-parent}
Suppose $h$ activates on round $t$. A parent $v$ is \emph{eligible} for this activation if its schedule is valid and born by round $t-1$, and
\begin{equation}
  \label{eq:identification-eligibility-tests}
  h\in V_v,\qquad |V_v|\ge2,\qquad E_{t-1}(v,h)<b/8.
\end{equation}
Let $\mathcal E_j(h,t)$ be the set of eligible parents in pool $j$. Fix an arbitrary ordering of the sign schedules in $\mathcal J_{d-1}(m)$. For each pool $j$ with $\mathcal E_j(h,t)\ne\varnothing$, choose the eligible parent whose schedule comes first in this ordering. Let $\mathcal E(h,t)$ be the set of these chosen parents.
\end{definition}

We permit an activation of $h$ on round $t$ to use only a parent from $\mathcal E(h,t)$, and only an unused child of that parent.

The sets in this definition can be determined on round $t$. The function $h$ is known from its certificates at activation; its values on earlier instances can therefore be evaluated. The parent states and all predictions used in \eqref{eq:identification-past-errors} are already available. Neither the test nor the choice of a representative from each pool uses future instances or future accuracy. In particular, $\mathsf{Good}$ is not part of the eligibility test. Eligibility also does not depend on whether a parent's children have already been used; a parent with no unused child still belongs to the same eligible sets.

If an unused child of $v\in\mathcal E(h,t)$ is assigned to $h$ on round $t$, then \eqref{eq:identification-child-output} gives
\begin{equation}
  \label{eq:identification-child-accuracy}
  \sum_{s=1}^T\mathbf 1\{|g_s^{v,\ell}-h(x_s)|\ge8\eta\} =E_{t-1}(v,h)<b/8.
\end{equation}
Indeed, before $t$ the child copied its parent, and from $t$ onward it predicts $h$ exactly. Requiring assignments to use only the information just described makes these prediction rules causal in the instances and the revealed random choices.

There is also a uniform bound on how many distinct activations can have a given parent among their choices. Fix a valid, born parent $v$ of depth $i$. Every function $h$ that activates and has $v\in\mathcal E(h,\tau(h))$ belongs to $\mathcal A\cap V_v$, since eligibility requires $h\in V_v$. By the local activation bound, the number of such functions is at most $2Q_{d-i}(m)$. Hence \eqref{eq:identification-parent-rank} and Corollary~\ref{cor:local-activation} give
\begin{equation}
  \label{eq:identification-parent-requests}
  \#\{h\in\mathcal A:v\in\mathcal E(h,\tau(h))\} \le |\mathcal A\cap V_v| \le2Q_{d-i}(m).
\end{equation}
A parent that never becomes valid and born is never eligible. This bound holds on every outcome, not only on $\mathsf{Good}$, and allows the later input instances to depend on earlier assignment decisions.

\paragraph{Many eligible parents at activation.}

We now use the simultaneous replay accuracy from Proposition~\ref{prop:multi-pool-accuracy} to show that the eligible sets are large on $\mathsf{Good}$. 

\begin{lemma}[Many eligible parents]
\label{lem:many-eligible-parents}
On $\mathsf{Good}$, every function $h$ that activates on a round $t\in[T]$ satisfies
\begin{equation}
  \label{eq:identification-many-parents}
  |\mathcal E(h,t)|\ge k/4.
\end{equation}
\end{lemma}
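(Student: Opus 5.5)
Fix an outcome in $\mathsf{Good}$ and a function $h$ with $\tau(h)=t$. By Proposition~\ref{prop:multi-pool-accuracy}, the set $I_h$ of pools satisfying \eqref{eq:accuracy-many-good-pools} has $|I_h|\ge3k/4$. Because $h$ activates first on round $t$, fewer than $k/2$ pools have $h\in\mathcal I_j(t-1)$. (When $t=1$ this uses $\mathcal I_j(0)=\varnothing$.) Hence the set
\[
I'_h:=\{j\in I_h:h\notin\mathcal I_j(t-1)\}
\]
has more than $3k/4-k/2=k/4$ elements. The plan is to show that $\mathcal E_j(h,t)\ne\varnothing$ for every $j\in I'_h$. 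Since $\mathcal E(h,t)$ contains exactly one parent from each pool with a nonempty eligible set, this yields $|\mathcal E(h,t)|\ge|I'_h|\ge k/4$.

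Fix $j\in I'_h$ and consider the target-driven replay for $h$ in pool $j$ through round $t-1$. On $\mathsf{Cap}\subseteq\mathsf{Good}$, recording its corrections up to round $t-1$ gives a prefix schedule $L\in\mathcal J_d(m)$. In capped pool $j$, this schedule is valid and born by $t-1$, and it reproduces the replay's version spaces and final predictions on rounds $1,\dots,t-1$. Its version space is therefore $U_{t-1,j}^h$, which contains $h$ by Lemma~\ref{lem:buffered-replay}.

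If $U_{t-1,j}^h=\{h\}$, then \eqref{eq:identification-singleton-replay} would give $h\in\mathcal I_j(t-1)$, contradicting $j\in I'_h$. So $|U_{t-1,j}^h|\ge2$. By Lemma~\ref{lem:rank-zero}, a nonempty subclass of the separated class $\mathcal H$ with rank zero is a singleton, so $U_{t-1,j}^h$ has rank at least one. Each valid update strictly lowers rank from at most $d$, so $L$ has depth at most $d-1$. Hence $v=(j,L)\in\mathcal P$ is a parent index, with $V_v=U_{t-1,j}^h$.

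It remains to check the error test in \eqref{eq:identification-eligibility-tests}. For $s\le t-1$, the parent's prediction $p_s^v$ equals the replay's final prediction $y_{s,j}^h$. By \eqref{eq:accuracy-many-good-pools}, the number of rounds in $[T]$ with $|y_{s,j}^h-h(x_s)|\ge8\eta$ is below $b/8$, so $E_{t-1}(v,h)<b/8$. Thus $v$ is eligible and $\mathcal E_j(h,t)\ne\varnothing$.

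The main point needing care is the identification of the prefix schedule with a parent. Specifically, one must confirm that the truncated schedule's simulation on the capped list agrees with the raw replay on rounds before $t$. This holds because $\mathsf{Cap}$ makes the raw and capped indicators coincide, and because truncating entries beyond ordinal $q_{t-1}$ does not change earlier states. One must also confirm that "born by $t-1$" holds, which is true by construction, since every recorded correction occurred by round $t-1$. The final-versus-provisional distinction on round $t-1$ causes no trouble, because eligibility counts only rounds $s\le t-1$, using final predictions, which \eqref{eq:accuracy-many-good-pools} controls.
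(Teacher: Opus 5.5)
Your proposal is correct and follows the paper's proof step for step. It uses the same count $|I'_h|>3k/4-k/2=k/4$ and the same prefix schedule built from the replay's corrections through round $t-1$. It then uses the singleton-replay implication together with separation to force rank at least one, hence depth at most $d-1$. Finally, the prefix property transfers the replay's final-prediction accuracy to $E_{t-1}(v,h)<b/8$, and your closing remarks on raw/capped agreement and final predictions before round $t$ match the paper's comment after its proof.
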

\begin{proof}
Fix an outcome in $\mathsf{Good}$ and a function $h$ activating on round $t$. Let $G_h\subseteq[k]$ be the pools whose replays satisfy \eqref{eq:accuracy-many-good-pools}. By Proposition~\ref{prop:multi-pool-accuracy}, $|G_h|\ge3k/4$. Before round $t$, fewer than $k/2$ pools had identified $h$; otherwise it would already have activated. Therefore
\begin{equation*}
  \#\{j\in G_h:h\notin\mathcal I_j(t-1)\} \ge |G_h|-\#\{j:h\in\mathcal I_j(t-1)\} >k/4.
\end{equation*}

Fix a pool $j$ in this set, and let $L$ consist of all corrections made by its target-driven replay through round $t-1$. On $\mathsf{Cap}$ these have ordinals in $[m]$, and the capped-pool simulation of $L$ is valid, born by $t-1$, and has version space $U_{t-1,j}^h$. This space contains $h$. It cannot equal $\{h\}$, since \eqref{eq:identification-singleton-replay} would imply $h\in\mathcal I_j(t-1)$.

Since $\mathcal H$ is uniformly greater-than-$\eta$ separated, every nonsingleton subclass of $\mathcal H$ has rank at least one. If $L$ has depth $i$, the rank-drop property consequently gives
\begin{equation*}
  1\le\operatorname{sfat}_\eta(U_{t-1,j}^h)\le d-i, \qquad\text{so }i\le d-1.
\end{equation*}
Thus $v=(j,L)$ is one of the preallocated parents. Its fixed version space is $V_v=U_{t-1,j}^h$, which contains $h$ and is not a singleton. By the prefix property in Section~\ref{subsec:foundations-schedules},
\begin{equation*}
  p_s^v=y_{s,j}^h\qquad(1\le s<t).
\end{equation*}
Since $j\in G_h$, it follows that
\begin{equation*}
  E_{t-1}(v,h) =\sum_{s<t}\mathbf 1\{|y_{s,j}^h-h(x_s)|\ge8\eta\} \le\sum_{s=1}^T\mathbf 1\{|y_{s,j}^h-h(x_s)|\ge8\eta\} <b/8.
\end{equation*}

All eligibility conditions hold, so $\mathcal E_j(h,t)$ is nonempty. This applies to more than $k/4$ distinct pools. Since $\mathcal E(h,t)$ contains one eligible parent from each such pool, we obtain \eqref{eq:identification-many-parents}.
\end{proof}

The parent used in this proof is the separate schedule containing only the replay's corrections before round $t$. Its version space remains fixed, even if the full replay makes further corrections or identifies $h$ on round $t$. Only their predictions before $t$ must agree: a child assigned on round $t$ switches to $h$ before emitting its prediction on that round.

\begin{corollary}[Targets that do not activate]
\label{cor:unactivated-parent}
On $\mathsf{Good}$, every $h\in\mathcal H\setminus\mathcal A$ has a parent $v\in\mathcal P$ with
\begin{equation}
  \label{eq:identification-unactivated-cover}
  \sum_{t=1}^T\mathbf 1\{|p_t^v-h(x_t)|\ge8\eta\}<b/8.
\end{equation}
\end{corollary}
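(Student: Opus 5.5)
The plan is to take the full target-driven replay of $h$ in a pool from Proposition~\ref{prop:multi-pool-accuracy} and show that, because $h$ never activates, some accurate pool has a replay that never reaches a singleton. That replay then uses at most $d-1$ corrections, so it is one of the preallocated parents.

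Fix an outcome in $\mathsf{Good}$ and $h\in\mathcal H\setminus\mathcal A$. Let $G_h\subseteq[k]$ be the set of pools satisfying \eqref{eq:accuracy-many-good-pools}; by Proposition~\ref{prop:multi-pool-accuracy}, $|G_h|\ge3k/4$. Since $\tau(h)=\infty$, at round $T$ fewer than $k/2$ pools $j$ have $h\in\mathcal I_j(T)$. Hence some $j\in G_h$ has $h\notin\mathcal I_j(T)$; indeed more than $k/4$ such pools exist, but one suffices.

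Now let $L$ be the full list of corrections made by the target-driven replay of $h$ in pool $j$ over all rounds. On $\mathsf{Cap}$, these corrections sit at ordinals in $[m]$, so $L\in\mathcal J_d(m)$. Its capped simulation is valid, is born by $T$, and reproduces the replay exactly, so its version space is $U_{T,j}^h\ni h$. The implication \eqref{eq:identification-singleton-replay} at $t=T$ rules out $U_{T,j}^h=\{h\}$, since that would give $h\in\mathcal I_j(T)$. So $U_{T,j}^h$ is a nonsingleton subclass of the separated class $\mathcal H$. By Lemma~\ref{lem:rank-zero} it has rank at least one. The rank-drop property then bounds the depth $i$ of $L$ by $1\le d-i$, that is, $i\le d-1$. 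Therefore $v=(j,L)\in\mathcal P$.

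Finally, the parent's predictions coincide with the replay's final predictions on every round: $p_t^v=y_{t,j}^h$ for all $t\in[T]$, because the simulation of $L$ reproduces the whole replay. Then \eqref{eq:accuracy-many-good-pools}, applied to the final predictions in pool $j\in G_h$, gives exactly \eqref{eq:identification-unactivated-cover}. There is no real obstacle here. The only point needing care is that the full replay, not just a prefix, has nonsingleton version space; this is what the non-activation at the final round $T$ supplies through \eqref{eq:identification-singleton-replay}.
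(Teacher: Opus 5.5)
Your proposal is correct and follows the paper's own proof essentially step for step: pick a pool in $G_h$ that has not identified $h$ by round $T$, use \eqref{eq:identification-singleton-replay} to see that its final replay version space is not a singleton, use separation and the rank-drop property to get that the full replay code has depth at most $d-1$, and conclude that this code, paired with the pool, is a parent reproducing the replay on all $T$ rounds. You also spell out two checks the paper leaves implicit, namely that $\tau(h)=\infty$ gives fewer than $k/2$ identifying pools at $t=T$ and that Lemma~\ref{lem:rank-zero} gives rank at least one for a nonsingleton version space.
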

\begin{proof}
Fewer than $k/2$ pools identify $h$ by time $T$, while at least $3k/4$ replays satisfy \eqref{eq:accuracy-many-good-pools}. Choose a pool $j$ with an accurate replay that has not identified $h$. Its final replay version space is not a singleton, by \eqref{eq:identification-singleton-replay}. The same rank argument as above shows that its entire replay code $J_j^h$ has depth at most $d-1$. On $\mathsf{Cap}$ this code has ordinals in $[m]$, so $v=(j,J_j^h)$ is a parent and reproduces the replay on all $T$ rounds. Its final-prediction accuracy gives \eqref{eq:identification-unactivated-cover}.
\end{proof}

Thus a target that never activates is covered by a permanent parent. A target that activates has at least $k/4$ eligible parents on $\mathsf{Good}$, and assigning an unused child of any one of them gives \eqref{eq:identification-child-accuracy}. It remains to choose the capacities and distribute the activations among these parents so that, with positive probability jointly with $\mathsf{Good}$, every activation receives an unused child and the count in \eqref{eq:identification-total-indices} is at most $(CR)^d$.

\section{Shared-Parent Routing and Load Balancing}
\label{sec:routing}

We now specify how an activating function receives a child of one of its eligible parents. The rule chooses one parent uniformly from the retained set $\mathcal E(h,t)$ when this set has at least $k/4$ members. We bound how often a parent is chosen and use this bound to set the child capacities $c_i$. The detailed design of the routing mechanism and the tuning of its
parameters were developed through discussions and iterative refinement
involving the authors and ChatGPT 6 Pro; see the AI disclosure at the
end of the paper.

\subsection{The Causal Routing Protocol}
\label{subsec:routing-protocol}

For now, let the capacities $c_0,\ldots,c_{d-1}$ be any fixed nonnegative integers, as in Section~\ref{subsec:identification-experts}. By Corollary~\ref{cor:local-activation}, the total number of functions that activate during a run is at most
\begin{equation}
  \label{eq:routing-activation-budget}
  A_*:=2Q_d(m).
\end{equation}
We allow $A_*$ routing slots on each round, where one slot processes one newly activated function; slots with no function do nothing. Fix an arbitrary total order on $\mathcal H$. In each round, process the newly activated functions in this order.

For $t\in[T]$ and $a\in[A_*]$, let $\pi_{t,a}$ be a uniformly random permutation of the finite parent set $\mathcal P$. All these permutations are mutually independent and independent of the raw pool bits. A permutation specifies an ordering of all parents. For an activation with $|\mathcal E(h,t)|\ge k/4$, conditional on the preceding history, each member of $\mathcal E(h,t)$ is selected with probability $1/|\mathcal E(h,t)|$. The permutation $\pi_{t,a}$ is revealed only in slot $a$ of round $t$, including when that slot has no function to process.

Write $\boldsymbol\pi_t=(\pi_{t,1},\ldots,\pi_{t,A_*})$ and $\boldsymbol\pi_{<t}=(\boldsymbol\pi_1,\ldots, \boldsymbol\pi_{t-1})$. We allow inputs of the form
\begin{equation}
  \label{eq:routing-input-rule}
  x_t=\widetilde F_t(\boldsymbol\xi_{<t}, \boldsymbol\pi_{<t},\omega_{\mathrm{in}}),
\end{equation}
where the maps $\widetilde F_t$ are deterministic and $\omega_{\mathrm{in}}$ is a finite-support input seed independent of all pool bits and routing permutations. Thus $x_t$ may depend on all earlier selections, assignments, and predictions, but is chosen before the current pool vector and current routing permutations are revealed.

Initialize a counter $L_v=0$ for each parent $v\in\mathcal P$. A parent remains eligible after all its children have been assigned. If routing selects such a parent, its load counter increases and an overload is declared, but no child is assigned. At round $t$, reveal $\boldsymbol\xi_t$, perform the capped schedule updates, and determine the newly activated functions $\mathcal A_t\setminus\mathcal A_{t-1}$. List them in the fixed order as $h_{t,1},\ldots,h_{t,n_t}$, where $n_t\le A_*$. The eligible set $\mathcal E(h_{t,a},t)$ does not depend on the current routing permutation $\pi_{t,a}$.

In each slot $a\in[A_*]$, reveal $\pi_{t,a}$ and proceed as follows. If $a>n_t$, make no change. Otherwise put $h=h_{t,a}$. If $|\mathcal E(h,t)|<k/4$, leave $h$ unassigned. If $|\mathcal E(h,t)|\ge k/4$, choose the first parent $v$ in $\mathcal E(h,t)$ according to $\pi_{t,a}$ and increase $L_v$ by one. Write $\ell$ for its updated value and $i$ for the depth of $v$. When $\ell\le c_i$, assign child $(v,\ell)$ to $h$. When $\ell>c_i$, make no child assignment; this is an \emph{overload} at $v$.

The first $c_i$ choices of a depth-$i$ parent use its children in order, so every assigned child is unused at assignment. Each function is processed only on its activation round and only once. After all slots of round $t$, emit the parent predictions and the child predictions specified in \eqref{eq:identification-child-output}.

We continue the selections and counters after an overload, and also after an activation is left unassigned. Thus $L_v$ counts routing choices, not just successful child assignments. On $\mathsf{Good}$, the case $|\mathcal E(h,t)|<k/4$ cannot occur, by Lemma~\ref{lem:many-eligible-parents}.

The sampling guarantee from the preceding sections applies to \eqref{eq:routing-input-rule}. Indeed, the auxiliary seed
\begin{equation*}
  \omega_0:=\bigl(\omega_{\mathrm{in}}, (\pi_{t,a})_{t\in[T],\,a\in[A_*]}\bigr)
\end{equation*}
has finite support and is independent of all raw pool bits. Given this seed, every earlier assignment is determined by the past raw vectors. Hence the input rule has the form \eqref{eq:accuracy-input-rule}, and Proposition~\ref{prop:multi-pool-accuracy} gives $\Pr(\mathsf{Good})>15/16$ for the routing protocol just defined.

\subsection{Routing Probabilities and Load Bounds}
\label{subsec:routing-requests}

Fix a parent $v$. Consider a routing step $(t,a)$ that processes a newly activated function $h_{t,a}$. Define
\begin{equation}
  \label{eq:routing-request-indicators}
  B_{t,a}^v:= \mathbf 1\bigl\{|\mathcal E(h_{t,a},t)|\ge k/4 \text{ and }v\in\mathcal E(h_{t,a},t)\bigr\},
\end{equation}
and let $X_{t,a}^v$ be the indicator that the routing rule chooses $v$ in this slot, whether or not a child is assigned. For an empty slot, set both indicators to zero. Thus
\begin{equation*}
  N_v:=\sum_{t=1}^T\sum_{a=1}^{A_*}B_{t,a}^v \qquad\text{and}\qquad L_v=\sum_{t=1}^T\sum_{a=1}^{A_*}X_{t,a}^v
\end{equation*}
count, respectively, the activations at which $v$ could be chosen and the activations at which it is chosen.

If $v$ has depth $i$, then on every outcome
\begin{equation}
  \label{eq:routing-request-bound}
  N_v \le \#\{h\in\mathcal A:v\in\mathcal E(h,\tau(h))\} \le 2Q_{d-i}(m).
\end{equation}
The first inequality holds because each function activates only once; the size test in \eqref{eq:routing-request-indicators} can only remove activations from the count. The second inequality is \eqref{eq:identification-parent-requests}. A parent that never becomes valid and born has $N_v=L_v=0$.

To describe the information available at a routing choice, let
\begin{equation*}
  \begin{aligned}
    \mathcal F_{t,0} &:=\sigma\bigl(\omega_{\mathrm{in}}, \boldsymbol\xi_1,\ldots,\boldsymbol\xi_t, \boldsymbol\pi_1,\ldots,\boldsymbol\pi_{t-1}\bigr),\\
    \mathcal F_{t,a} &:=\sigma\bigl(\mathcal F_{t,0}, \pi_{t,1},\ldots,\pi_{t,a}\bigr) \qquad(1\le a\le A_*).
  \end{aligned}
\end{equation*}
Also put $\mathcal F_{0,A_*}=\sigma(\omega_{\mathrm{in}})$. These sigma-fields reveal the raw vector first on each round and then the routing permutations one at a time. The functions $h_{t,a}$ and their eligible sets are already determined by $\mathcal F_{t,0}$. In particular, $B_{t,a}^v$ is known given $\mathcal F_{t,a-1}$, while $\pi_{t,a}$ remains an independent uniform permutation. Consequently,
\begin{equation}
  \label{eq:routing-selection-probability}
  \Pr\{X_{t,a}^v=1\mid\mathcal F_{t,a-1}\} =
  \begin{cases}
    |\mathcal E(h_{t,a},t)|^{-1},& B_{t,a}^v=1,\\
    0,& B_{t,a}^v=0
  \end{cases}
  \le \frac4k B_{t,a}^v.
\end{equation}
For the first case, each member of the eligible set is equally likely to appear first in the fresh permutation. This remains true when $v$ has no unused children, since routing still chooses from the same set and increments $L_v$.

\paragraph{Load supermartingale and child capacities.}

We combine the bound on how often $v$ can be considered with the conditional probability that it is chosen. The routing indicators $X_{t,a}^v$ need not be independent across rounds, since later inputs and eligible sets may depend on earlier routing choices.

\begin{lemma}[Exponential moment of a parent load]
\label{lem:routing-load-moment}
For every fixed parent $v$ of depth $i$, every fixed choice of capacities, and every input rule \eqref{eq:routing-input-rule},
\begin{equation}
  \label{eq:routing-load-moment}
  \mathbb E e^{L_v} \le \exp\!\left(\frac{8(e-1)Q_{d-i}(m)}{k}\right).
\end{equation}
\end{lemma}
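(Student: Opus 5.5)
We plan to prove the bound with a standard exponential supermartingale, exploiting that the conditional selection probability in \eqref{eq:routing-selection-probability} is at most $(4/k)B_{t,a}^v$. The request indicators $B_{t,a}^v$ may depend adaptively on the past, but their total is bounded on every outcome by \eqref{eq:routing-request-bound}. So we compensate the load $L_v$ by the running request count $N_v$ rather than by a deterministic mean.

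Order the routing steps lexicographically by $(t,a)$, using the filtration $\mathcal F_{t,a}$ defined above. For $\lambda=1$, set
\begin{equation*}
  Y_{t,a}:=\exp\!\Bigl(\sum_{(s,b)\le(t,a)}\bigl[X_{s,b}^v-(e-1)\tfrac4kB_{s,b}^v\bigr]\Bigr),
\end{equation*}
with $Y=1$ before the first step. The process is adapted because $B_{t,a}^v$ is $\mathcal F_{t,a-1}$-measurable and $X_{t,a}^v$ is $\mathcal F_{t,a}$-measurable. For the transition from step $(t,a-1)$ to $(t,a)$, or from $(t-1,A_*)$ to $(t,1)$ (where $\mathcal F_{t,0}$ only adds the pool vector, which does not change the indicators of already completed steps), we compute
\begin{equation*}
  \mathbb E\bigl[e^{X_{t,a}^v}\mid\mathcal F_{t,a-1}\bigr]=1+(e-1)\Pr\{X_{t,a}^v=1\mid\mathcal F_{t,a-1}\}\le1+(e-1)\tfrac4kB_{t,a}^v\le e^{(e-1)\frac4kB_{t,a}^v}.
\end{equation*}
Here the middle step uses \eqref{eq:routing-selection-probability}, and the last uses $1+z\le e^z$. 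Hence $(Y)$ is a nonnegative supermartingale with initial value one, and so $\mathbb E Y_{T,A_*}\le1$. The empty-slot conventions make every factor equal to one.

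To finish, we use $N_v\le2Q_{d-i}(m)$ deterministically, which gives
\begin{equation*}
  e^{L_v}=Y_{T,A_*}\exp\!\bigl((e-1)\tfrac4kN_v\bigr)\le Y_{T,A_*}\exp\!\bigl(8(e-1)Q_{d-i}(m)/k\bigr).
\end{equation*}
Taking expectations then yields \eqref{eq:routing-load-moment}.

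If $v$ never becomes valid and born, then $L_v=0$ and the bound is trivial. Averaging over $\omega_{\mathrm{in}}$ is harmless because every step above is conditional on it.

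The main point needing care is the measurability bookkeeping. The eligible set $\mathcal E(h_{t,a},t)$, and therefore $B_{t,a}^v$, must be determined before $\pi_{t,a}$ is revealed, even though it may depend on earlier permutations through adaptive inputs and earlier routing outcomes. This is exactly what the construction of the routing protocol guarantees. The overload rule also matters here: because routing still chooses from the full eligible set and increments $L_v$ even when all children are used, the selection probability $|\mathcal E|^{-1}$ is unaffected by capacities, which is why the lemma holds for every fixed choice of capacities.
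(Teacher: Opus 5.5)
Your proposal is correct and follows essentially the same route as the paper. The paper uses the same compensated process $Z_{t,a}=\exp\bigl(L_v(t,a)-\lambda N_v(t,a)\bigr)$ with $\lambda=4(e-1)/k$ and the same one-step bound via \eqref{eq:routing-selection-probability} and $1+z\le e^z$. It then concludes, as you do, with the deterministic bound $N_v\le 2Q_{d-i}(m)$.
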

\begin{proof}
Let $N_v(t,a)$ and $L_v(t,a)$ be the two counts through slot $a$ of round $t$, including all earlier rounds. Both counts are initially zero. Here $a=0$ denotes the start of round $t$, before any routing steps, and $a=A_*$ denotes the end of that round, after all routing steps. Thus, for $t\ge2$,
\begin{equation*}
  N_v(t,0)=N_v(t-1,A_*),\qquad L_v(t,0)=L_v(t-1,A_*).
\end{equation*}
Initially, $N_v(1,0)=L_v(1,0)=0$. Set $\lambda:=4(e-1)/k$ and define
\begin{equation}
  \label{eq:routing-load-supermartingale}
  Z_{t,a}:=\exp\bigl(L_v(t,a)-\lambda N_v(t,a)\bigr).
\end{equation}
Its initial value is one. At a raw-vector reveal it is unchanged. At a routing slot, the increments of $L_v$ and $N_v$ are $X_{t,a}^v$ and $B_{t,a}^v$, respectively, so
\begin{equation*}
  \frac{Z_{t,a}}{Z_{t,a-1}} =\exp\bigl(X_{t,a}^v-\lambda B_{t,a}^v\bigr).
\end{equation*}
Since $X_{t,a}^v\in\{0,1\}$ and $B_{t,a}^v$ is $\mathcal F_{t,a-1}$-measurable, \eqref{eq:routing-selection-probability} gives
\begin{equation*}
  \begin{aligned}
    \mathbb E\bigl[e^{X_{t,a}^v}\mid\mathcal F_{t,a-1}\bigr] &=1+(e-1)\Pr\{X_{t,a}^v=1\mid\mathcal F_{t,a-1}\}\\
    &\le 1+\lambda B_{t,a}^v \le e^{\lambda B_{t,a}^v}.
  \end{aligned}
\end{equation*}
It follows that
\begin{equation*}
  \mathbb E\!\left[ \frac{Z_{t,a}}{Z_{t,a-1}} \,\middle|\,\mathcal F_{t,a-1}\right] \le1.
\end{equation*}
Thus $Z$ is a nonnegative supermartingale along the finite sequence of raw-vector reveals and routing slots, with respect to the sigma-fields just defined. In particular, $\mathbb E Z_{T,A_*}\le1$.

By \eqref{eq:routing-request-bound}, $N_v\le2Q_{d-i}(m)$ on every outcome. Therefore
\begin{equation*}
  \begin{aligned}
    \mathbb E e^{L_v} &=\mathbb E\bigl[Z_{T,A_*}e^{\lambda N_v}\bigr]\\
    &\le e^{2\lambda Q_{d-i}(m)}\mathbb E Z_{T,A_*}\\
    &\le \exp\!\left(\frac{8(e-1)Q_{d-i}(m)}{k}\right).
  \end{aligned}
\end{equation*}
\end{proof}

We can now choose the number of children for each parent. The exponential moment bound turns an additive $\log(4P)$ in the capacity into a failure probability of at most $1/(4P)$ for each parent. Since $e-1<2$, set
\begin{equation}
  \label{eq:routing-capacities}
  c_i:=\left\lceil \frac{16Q_{d-i}(m)}{k}+\log(4P) \right\rceil, \qquad 0\le i\le d-1.
\end{equation}
These values depend only on the parameters fixed before the experiment.

\begin{proposition}[No overload]
\label{prop:routing-no-overload}
With capacities \eqref{eq:routing-capacities}, define
\begin{equation}
  \label{eq:routing-no-overload}
  \mathsf{NoOverload}:= \{L_v\le c_i\text{ for every depth-$i$ parent }v\in\mathcal P, \ 0\le i\le d-1\}.
\end{equation}
For every input rule \eqref{eq:routing-input-rule},
\begin{equation}
  \label{eq:routing-overload-bound}
  \Pr(\mathsf{NoOverload}^{\mathsf c})\le\frac14.
\end{equation}
\end{proposition}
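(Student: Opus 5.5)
The plan is to apply Markov's inequality to the exponential moment from Lemma~\ref{lem:routing-load-moment}, one parent at a time, and then take a union bound over all $P$ parents. Fix a parent $v\in\mathcal P$ of depth $i\in\{0,\ldots,d-1\}$ and any input rule of the form \eqref{eq:routing-input-rule}. The overload event at $v$ is $\{L_v>c_i\}$. Since $L_v$ is integer valued, this is the event $\{L_v\ge c_i+1\}$, and in particular it is contained in $\{L_v\ge c_i\}$.

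Markov's inequality applied to $e^{L_v}$ gives
\begin{equation*}
  \Pr\{L_v>c_i\}\le e^{-c_i}\,\mathbb E e^{L_v}
  \le\exp\!\left(\frac{8(e-1)Q_{d-i}(m)}{k}-c_i\right).
\end{equation*}
By the choice \eqref{eq:routing-capacities}, we have $c_i\ge16Q_{d-i}(m)/k+\log(4P)$. Since $8(e-1)<16$, the leading term in the exponent is dominated by $16Q_{d-i}(m)/k$, and the exponent is at most $-\log(4P)$. Hence $\Pr\{L_v>c_i\}\le1/(4P)$.

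A union bound over the $P=|\mathcal P|$ parent indices then gives $\Pr(\mathsf{NoOverload}^{\mathsf c})\le P\cdot 1/(4P)=1/4$. Parents that never become valid and born have $L_v=0$, so they cannot overload and they satisfy the bound trivially. No independence across parents is used, and the bound holds for every input rule because Lemma~\ref{lem:routing-load-moment} is uniform over such rules.

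The argument is essentially routine; the substantive work is already contained in Lemma~\ref{lem:routing-load-moment}. The only step that needs care is checking that the capacity constants match the moment bound. Specifically, one must confirm that $8(e-1)\le16$, so that the deterministic request bound $N_v\le 2Q_{d-i}(m)$, scaled by the factor $4/k$, is absorbed. One must also confirm that $c_i$ does not depend on the realized history, so the Markov step is applied to a fixed threshold.
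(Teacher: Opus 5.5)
Your proposal is correct and matches the paper's proof: Markov's inequality applied to $e^{L_v}$ with the moment bound of Lemma~\ref{lem:routing-load-moment}, the estimate $8(e-1)<16$ to reach $\Pr\{L_v>c_i\}\le 1/(4P)$, and a union bound over the $P$ parents. Your extra remarks, that $c_i$ is fixed before the experiment and that no independence across parents is needed, are accurate but not required beyond what the lemma already provides.
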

\begin{proof}
Fix a parent $v$ of depth $i$. Markov's inequality and Lemma~\ref{lem:routing-load-moment} imply
\begin{equation*}
  \begin{aligned}
    \Pr\{L_v>c_i\} &\le e^{-c_i}\mathbb E e^{L_v}\\
    &\le \exp\!\left(-c_i+ \frac{8(e-1)Q_{d-i}(m)}{k}\right)\\
    &\le \exp\!\left(-c_i+ \frac{16Q_{d-i}(m)}{k}\right) \le\frac1{4P}.
  \end{aligned}
\end{equation*}
Taking a union bound over the $P$ parent indices proves \eqref{eq:routing-overload-bound}.
\end{proof}

The overload bound is under the full joint law of sampling and routing, without conditioning on $\mathsf{Cap}$ or $\mathsf{Good}$. On $\mathsf{Good}\cap\mathsf{NoOverload}$, every activating function receives an unused child on its activation round: its eligible set has at least $k/4$ members, and the selected parent's counter never exceeds its capacity.

\section{Expert Counting and Global Determinization}
\label{sec:determinization}

For the counting and probability arguments, retain the separated class $\mathcal H$ and the regime $d\ge1$, $|\mathcal H|\ge2$, and $b>32\alpha d$ from the preceding section. The child capacities are now fixed by \eqref{eq:routing-capacities}.

\subsection{Expert Counting and Simultaneous Coverage}
\label{subsec:determinization-counting}

\begin{lemma}[Expert count]
\label{lem:expert-count}
There is a universal constant $C_{\mathrm{cnt}}\ge1$ such that the number $K$ of preallocated parent and child indices satisfies
\begin{equation}
  \label{eq:determinization-count-bound}
  K\le(C_{\mathrm{cnt}}R)^d.
\end{equation}
\end{lemma}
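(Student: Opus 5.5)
We start from the index formula \eqref{eq:identification-total-indices} with the capacities \eqref{eq:routing-capacities}. Since $\lceil z\rceil\le z+1$, each capacity satisfies $c_i\le 16Q_{d-i}(m)/k+\log(4P)+1$. Substituting this bound gives
\begin{equation*}
  K\le P+16\sum_{i=0}^{d-1}n_i(m)Q_{d-i}(m)+\bigl(\log(4P)+1\bigr)k\sum_{i=0}^{d-1}n_i(m).
\end{equation*}
The factor $k$ cancels in the leading term. In the last term, $k\sum_{i\le d-1}n_i(m)=kQ_{d-1}(m)=P$. Hence
\begin{equation*}
  K\le 16\sum_{i=0}^{d-1}n_i(m)Q_{d-i}(m)+P\bigl(\log(4P)+2\bigr).
\end{equation*}
The convolution estimate \eqref{eq:triangular-convolution} of Lemma~\ref{lem:count} bounds the first sum by $Q_d(2m)$.

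For the main term, recall from \eqref{eq:accuracy-entropy-parameters} that $m\le 3\alpha dR$. Then \eqref{eq:Q-bound} gives
\begin{equation*}
  Q_d(2m)\le[2e(1+6\alpha R)]^d\le(14e\alpha R)^d,
\end{equation*}
using $R>32$. This is already of the required form $(CR)^d$ with a universal base, since $\alpha$ is a fixed universal constant.

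The main obstacle is the parent overhead $P(\log(4P)+2)$, where $P=kQ_{d-1}(m)$. For $d\ge2$, \eqref{eq:Q-bound} gives $Q_{d-1}(m)\le[2e(1+3\alpha dR/(d-1))]^{d-1}\le(14e\alpha R)^{d-1}$, using $d/(d-1)\le2$. For $d=1$, $Q_0=1$. In both cases $P\le k(14e\alpha R)^{d-1}$. Using $k\le 8\log_2R$ and $\log(4P)\le \log(4k)+(d-1)\log(14e\alpha R)$, the logarithmic factor is $O(d\log R)$ with a universal constant, so
\begin{equation*}
  P\bigl(\log(4P)+2\bigr)\le C_1\,d\,(\log R)^2\,(14e\alpha R)^{d-1}.
\end{equation*}
The key point is that the parents stop at depth $d-1$, which leaves one spare power of $R$ to absorb $(\log R)^2$. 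Since $(\log R)^2\le R$ for $R>32$ and $d\le 2^d$, the overhead is at most $C_1(28e\alpha R)^d$.

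Adding the two bounds, $K\le 16(14e\alpha R)^d+C_1(28e\alpha R)^d\le (C_{\mathrm{cnt}}R)^d$ for a suitable universal $C_{\mathrm{cnt}}$ (for instance $C_{\mathrm{cnt}}=28e\alpha(16+C_1)$, using $d\ge1$). The only care needed is to track that every constant depends on $\alpha$ alone and not on $d$, $R$, or $T$, in particular in the $d=1$ case and in the bound on $\log(4P)$.
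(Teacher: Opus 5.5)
Your proposal is correct and follows the paper's proof essentially step for step: you bound $c_i\le 16Q_{d-i}(m)/k+\log(4P)+1$ to get $K\le 16Q_d(2m)+P(\log(4P)+2)$ via \eqref{eq:triangular-convolution}, use the same estimate $Q_d(2m)\le(14e\alpha R)^d$, and absorb the $d(\log R)^2$ parent overhead into the missing power of $R$ using $(\log R)^2\le R$ and $d\le 2^d$. Your use of $k\le 8\log_2 R$ in place of the paper's $k\le 8\log R$ only changes the universal constant.
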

\begin{proof}
There are $kn_i(m)$ parents of depth $i$, each with $c_i$ children. Using \eqref{eq:identification-total-indices}, \eqref{eq:routing-capacities}, and $\lceil z\rceil\le z+1$, we obtain
\begin{align}
  \label{eq:determinization-count-decomposition}
  K &=P+\sum_{i=0}^{d-1}kn_i(m)c_i\notag\\
  &\le16\sum_{i=0}^{d-1}n_i(m)Q_{d-i}(m) +P\bigl(\log(4P)+2\bigr)\notag\\
  &\le16Q_d(2m)+P\bigl(\log(4P)+2\bigr).
\end{align}
The factor $k$ from the $kn_i(m)$ depth-$i$ parents cancels with the $1/k$ in the leading term of $c_i$. The term $P(\log(4P)+2)$ accounts for the permanent parents, the logarithmic capacity overhead, and the rounding overhead. The last inequality is \eqref{eq:triangular-convolution}.

Put $C_1:=14e\alpha$. Since $m\le3\alpha dR$, \eqref{eq:Q-bound} gives
\begin{equation*}
  Q_d(2m)\le\bigl[2e(1+6\alpha R)\bigr]^d \le(C_1R)^d.
\end{equation*}
For $d\ge2$, the inequality $d/(d-1)\le2$ similarly gives
\begin{equation*}
  Q_{d-1}(m) \le\left[2e\left(1+\frac{3\alpha dR}{d-1}\right)\right]^{d-1} \le(C_1R)^{d-1}.
\end{equation*}
For $d=1$, the same final bound holds because $Q_0(m)=1$. Thus, in all cases,
\begin{equation}
  \label{eq:determinization-parent-count}
  P=kQ_{d-1}(m)\le k(C_1R)^{d-1}.
\end{equation}

We next bound the second term in \eqref{eq:determinization-count-decomposition}. The definition $k=4\lceil\log_2R\rceil$ and $R>32$ imply $k\le8\log R$, and \eqref{eq:accuracy-entropy-parameters} gives $k\le R$. Consequently,
\begin{equation*}
  \log(4P)+2 \le\log(4k)+2+(d-1)\log(C_1R) \le C_2d\log R,
\end{equation*}
where we may take the universal constant $C_2:=1+(\log(4C_1)+2)/\log32$, using $\log k\le\log R$ and $\log R>\log32$. It follows that
\begin{align}
  \label{eq:determinization-overhead}
  P\bigl(\log(4P)+2\bigr) &\le8C_2d(\log R)^2(C_1R)^{d-1}\notag\\
  &\le8C_2dR(C_1R)^{d-1}\notag\\
  &\le(C_3R)^d
\end{align}
for a universal $C_3$. The second inequality uses $(\log R)^2\le R$ for $R>32$; the last follows from $d\le2^d$, for example with $C_3=2\max\{C_1,8C_2\}$. Combining these bounds in \eqref{eq:determinization-count-decomposition}, we may take $C_{\mathrm{cnt}}=16C_1+C_3$. All these constants are universal because $\alpha$ was fixed universally in \eqref{eq:accuracy-alpha-choice}.
\end{proof}

\paragraph{Simultaneous coverage.}

Number the $K$ preallocated parent and child indices by $\nu\in[K]$, and let $w_t^\nu$ denote the prediction of index $\nu$ at round $t$ under the randomized sampling and routing protocol.

\begin{lemma}[Simultaneous coverage]
\label{lem:simultaneous-coverage}
For every input rule \eqref{eq:routing-input-rule}, with probability greater than $1/2$, the outputs satisfy
\begin{equation}
  \label{eq:determinization-realized-cover}
  \forall h\in\mathcal H\;\exists\nu\in[K]:\qquad \sum_{t=1}^T\mathbf 1\{|w_t^\nu-h(x_t)|\ge8\eta\}<b/8.
\end{equation}
\end{lemma}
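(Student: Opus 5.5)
The plan is to show that the event $\mathsf{Good}\cap\mathsf{NoOverload}$ has probability greater than $1/2$ for every input rule \eqref{eq:routing-input-rule}, and that \eqref{eq:determinization-realized-cover} holds on this event.

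\textbf{Probability.} As observed at the end of Section~\ref{subsec:routing-protocol}, Proposition~\ref{prop:multi-pool-accuracy} applies to inputs of the form \eqref{eq:routing-input-rule}. To see this, absorb $\omega_{\mathrm{in}}$ and all routing permutations into the finite-support seed $\omega_0$. This seed is independent of the raw pool bits, so the input rule takes the form \eqref{eq:accuracy-input-rule}, and we get $\Pr(\mathsf{Good}^{\mathsf c})<1/16$. Proposition~\ref{prop:routing-no-overload} gives $\Pr(\mathsf{NoOverload}^{\mathsf c})\le1/4$ under the same joint law. A union bound, which needs no independence between the two events, gives
\begin{equation*}
  \Pr(\mathsf{Good}\cap\mathsf{NoOverload})>1-\frac1{16}-\frac14=\frac{11}{16}>\frac12.
\end{equation*}

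\textbf{Coverage on the event.} Fix an outcome in $\mathsf{Good}\cap\mathsf{NoOverload}$ and a target $h\in\mathcal H$. We split into two cases.

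\emph{Case $h\notin\mathcal A$.} Corollary~\ref{cor:unactivated-parent} supplies a parent $v$ with fewer than $b/8$ outliers. Its index $\nu$ has $w_t^\nu=p_t^v$, so it witnesses \eqref{eq:determinization-realized-cover}.

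\emph{Case $h\in\mathcal A$ with $t=\tau(h)$.} The function $h$ is processed in exactly one slot $a$ of round $t$. By Lemma~\ref{lem:many-eligible-parents}, $|\mathcal E(h,t)|\ge k/4$, so the routing rule selects some $v\in\mathcal E(h,t)$ and increments $L_v$ to a value $\ell$. Counters never decrease, so $\ell\le L_v$ at the end of the run. On $\mathsf{NoOverload}$ this gives $\ell\le c_i$, where $i$ is the depth of $v$. Hence child $(v,\ell)$ is assigned to $h$, and this child is unused at that moment because the first $c_i$ selections of $v$ use its children in order. Equation \eqref{eq:identification-child-accuracy} then bounds this child's outlier count by $E_{t-1}(v,h)<b/8$, using eligibility.

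\textbf{Where care is needed.} The only delicate point is bookkeeping. Routing and overload counts are defined on every outcome, and the identification, eligibility and activation notions use the capped pools. On $\mathsf{Cap}$ the capped pools agree with the raw ones, and $\mathsf{Cap}$ is part of $\mathsf{Good}$. Therefore the replay-based parents used in Lemma~\ref{lem:many-eligible-parents} and Corollary~\ref{cor:unactivated-parent} coincide with the preallocated indices whose outputs are $w^\nu$. Once this is checked, both cases close.
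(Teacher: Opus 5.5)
Your proposal is correct and follows the paper's proof. It uses the same union bound over $\mathsf{Good}^{\mathsf c}$ and $\mathsf{NoOverload}^{\mathsf c}$ to get probability $11/16$, and the same split into activated targets (covered by an assigned child via \eqref{eq:identification-child-accuracy}) and unactivated ones (covered by Corollary~\ref{cor:unactivated-parent}); your counter-monotonicity argument for why the selected parent still has an unused child simply unpacks the remark the paper makes at the end of Section~\ref{sec:routing}.
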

\begin{proof}
The application of Proposition~\ref{prop:multi-pool-accuracy} in Section~\ref{subsec:routing-protocol} and Proposition~\ref{prop:routing-no-overload} give
\begin{align}
  \label{eq:determinization-success-probability}
  \Pr(\mathsf{Good}\cap\mathsf{NoOverload}) &\ge1-\Pr(\mathsf{Good}^{\mathsf c}) -\Pr(\mathsf{NoOverload}^{\mathsf c})\notag\\
  &>1-\frac1{16}-\frac14=\frac{11}{16}>\frac12.
\end{align}
Fix an outcome in this intersection and $h\in\mathcal H$. If $h$ activates, the routing protocol assigns it an unused child of a parent in $\mathcal E(h,\tau(h))$, as proved at the end of Section~\ref{sec:routing}. By \eqref{eq:identification-child-accuracy}, this child's index supplies the required witness for $h$. If $h$ does not activate, Corollary~\ref{cor:unactivated-parent} supplies a parent with the same error bound. In either case, the witnessing index belongs to the preallocated family. Since $h$ was arbitrary, the one event in \eqref{eq:determinization-success-probability} gives the stated coverage for every target.
\end{proof}

\subsection{Game Determinization and Main Upper Bound}
\label{subsec:determinization-game}

Lemma~\ref{lem:simultaneous-coverage} gives successful random choices for each fixed input rule. Choosing one successful realization separately for each instance sequence, however, need not give the same predictions on shared prefixes. To obtain a global family, we need one deterministic rule for choosing the pool selections and routing permutations after each observed input prefix. The guarantee against adaptive input rules allows us to obtain such a rule by the following game argument.

\begin{lemma}[Finite-horizon determinization]
\label{lem:finite-horizon-determinization}
Consider a game of $T$ rounds between an adversary and a constructor. In each round the adversary first chooses an input from a nonempty set, and the constructor then chooses an action from a nonempty set. The legal choices may depend on the preceding moves. Both players observe all earlier moves, and the constructor also observes the current input. Each terminal history is designated winning or losing for the constructor.

If one randomized constructor strategy wins with positive probability against every deterministic adversary strategy, then there is a deterministic constructor strategy that wins against every adversary strategy.
\end{lemma}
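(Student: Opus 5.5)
We argue by backward induction over the finite game tree, proving the contrapositive. Suppose no deterministic constructor strategy wins against every adversary. We build a single deterministic adversary strategy $A^\star$ that beats \emph{every} sequence of constructor actions. A randomized constructor strategy plays, at each node, some (conditional) distribution over legal actions. So against $A^\star$ every realization of its random choices yields a losing terminal history, and it wins with probability zero. This contradicts the hypothesis.

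The key notion is that of a \emph{constructor-winning} history $h$, ending with a complete round or the empty history. Such an $h$ is constructor-winning if there is a deterministic constructor continuation strategy from $h$ that wins against every adversary continuation. We show the following dichotomy at each such $h$ of length $s<T$. If $h$ is not constructor-winning, then there is an input $x$ such that, for every legal action $y$ after $(h,x)$, the extended history $(h,x,y)$ is not constructor-winning.

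To prove the dichotomy, suppose instead that for every legal $x$ some action $y(x)$ makes $(h,x,y(x))$ constructor-winning. Using the axiom of choice, pick such $y(x)$ for every $x$, together with a winning continuation strategy $\sigma_{x}$ from $(h,x,y(x))$. Playing $y(x)$ and then $\sigma_x$ is a deterministic strategy from $h$ that wins against all adversaries, which is a contradiction. At length $T$, a history that is not constructor-winning is simply losing.

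Now define $A^\star$ on every history. If the current history $h$ is not constructor-winning, $A^\star$ plays a witness $x$ from the dichotomy, again using choice. Otherwise it plays an arbitrary legal input. Legal sets are nonempty, so this is always possible.

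The empty history is not constructor-winning by assumption. By induction on rounds, every play consistent with $A^\star$ therefore stays on non-constructor-winning histories, regardless of the constructor's actions. It reaches a terminal history that is not constructor-winning, hence losing.

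It remains to handle the probability statement. Fix the randomized constructor strategy. Against $A^\star$, every realized history is losing, so the winning event is empty, and no measurability issues arise. This completes the contradiction.

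The main subtlety is the first step. Infinite action or input sets rule out finite minimax or probabilistic-method arguments. The proof must therefore rely purely on the combinatorial backward-induction dichotomy with choice functions, not on averaging. Defining $A^\star$ on all histories, including unreachable ones, is needed so that it is a genuine deterministic adversary strategy.
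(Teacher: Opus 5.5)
Your proposal is correct and is essentially the paper's argument: in both, a deterministic adversary keeps the play on losing histories, so every realization of the randomized constructor loses, and its winning probability is zero. The only difference is bookkeeping. You define a history as winning when a winning deterministic continuation exists, and you glue the continuations $\sigma_x$ in your one-step dichotomy; the paper instead labels nodes recursively from the terminal histories and then reads the deterministic constructor strategy off those labels.
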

\begin{proof}
Label each terminal history according to the prescribed winning condition. Working backwards through nonterminal histories, label a constructor decision node as winning if some legal action leads to a winning successor. Label a history at which the adversary is to move as winning if every legal input leads to a winning history. All other histories are losing. The recursion is well-defined because the game has finite depth, even if its branching is infinite.

Suppose that the initial history were losing. At each losing history where the adversary moves, there is an input leading to a losing history. Choose one such input at each of these histories, and choose arbitrary legal inputs at the adversary's other histories. At a losing history where the constructor moves, every legal action leads to a losing history. The chosen adversary strategy therefore forces a losing terminal history for every realization of the constructor's random choices. Its winning probability against this deterministic adversary would be zero, a contradiction.

The initial history is consequently winning. At each winning history where the constructor moves, fix an action leading to a winning history; specify arbitrary legal actions at its other histories. At every winning history where the adversary moves, all legal inputs preserve the winning status. These choices define a deterministic constructor strategy that wins against all adversary moves. The choices over arbitrary collections of histories may use the axiom of choice.
\end{proof}

\begin{remark}[Role of the information structure]
The determinization argument relies on perfect information: before choosing the next input, the adversary observes the constructor's realized actions, not merely their distributions. This differs from distribution-observing protocols in randomized online learning with set-valued or partial feedback~\cite{raman2024setvalued,shao2026partialfeedback}. Without access to the realized actions, the conclusion of Lemma~\ref{lem:finite-horizon-determinization} can fail, even in a two-round game. For example, suppose the constructor secretly chooses a fair bit and wins if it equals the adversary's second-round input. The constructor then wins with probability $1/2$ against every deterministic adversary, but no deterministic constructor strategy can guarantee a win.
\end{remark}

\begin{proof}[Proof of Proposition~\ref{prop:core}]
The case $d=0$ was proved in Section~\ref{subsec:foundations-separation}; a singleton $\mathcal H$ is covered exactly by its unique member. For $d\ge1$ and $b\le32\alpha d$, the deterministic family in \eqref{eq:accuracy-short-horizon} already proves the proposition. We therefore consider the remaining regime and use the randomized construction counted in Lemma~\ref{lem:expert-count}.

Let $\mathfrak S(\mathcal P)$ be the finite set of permutations of $\mathcal P$. Define a game in which the adversary chooses $x_t\in\mathcal X$ on round $t$, and the constructor then chooses
\begin{equation*}
  (\boldsymbol\xi_t,\boldsymbol\pi_t) \in\{0,1\}^k\times\mathfrak S(\mathcal P)^{A_*}.
\end{equation*}
Apply the capped schedule updates and routing protocol to these choices, processing the routing slots in their prescribed order, and emit the $K$ predictions before the next adversary move. All these operations are determined by the inputs and chosen actions. Designate a terminal history as winning precisely when \eqref{eq:determinization-realized-cover} holds.

Every deterministic adversary strategy in this game has the form \eqref{eq:routing-input-rule} with a constant input seed: its earlier inputs, and hence its current choice, can be recovered recursively from the preceding pool vectors and routing permutations. The randomized protocol therefore wins with probability greater than $1/2$ against every such strategy, by Lemma~\ref{lem:simultaneous-coverage}. Lemma~\ref{lem:finite-horizon-determinization} supplies a deterministic winning constructor strategy. Fix this strategy.

For any input prefix $x_{1:t}$, run the fixed strategy through round $t$. By induction on rounds, all pool vectors, routing permutations, assignments, and predictions through that round are determined by $x_{1:t}$. For each $\nu\in[K]$, define
\begin{equation*}
  g_{\nu,t}(x_{1:t}) :=\text{the output of index $\nu$ on round $t$ in this run}.
\end{equation*}
Thus $g_{\nu,t}:\mathcal X^t\to[0,1]$ is a well-defined deterministic causal map. The family
\begin{equation*}
  \mathcal G:=\{(g_{\nu,t})_{t=1}^T:\nu\in[K]\}
\end{equation*}
is fixed independently of the instance sequence. Since the strategy wins against every sequence of adversary inputs,
\begin{equation}
  \label{eq:determinization-global-outlier-cover}
  \forall x_{1:T}\in\mathcal X^T\;\forall h\in\mathcal H\; \exists\nu\in[K]:\qquad \sum_{t=1}^T \mathbf 1\{|g_{\nu,t}(x_{1:t})-h(x_t)|\ge8\eta\} <\frac{T}{8R}.
\end{equation}
Finally, $|\mathcal G|\le K\le(C_{\mathrm{cnt}}R)^d$. Choosing one universal $C_0\ge\max\{1,C_{\mathrm{cnt}},66e\alpha\}$ also includes the short-horizon bound \eqref{eq:accuracy-short-horizon}, and proves the proposition in all regimes.
\end{proof}

\begin{proof}[Proof of Theorem~\ref{thm:main}]
Fix $1\le p<\infty$, $T\ge1$, and $0<\varepsilon<1/2$, and put $\eta=\varepsilon/32$ and $d=\operatorname{sfat}_\eta(\mathcal F)$. If $d=0$, the midpoint rule
\begin{equation*}
  g_t(x_{1:t}) :=\frac12\left(\inf_{f\in\mathcal F}f(x_t) +\sup_{f\in\mathcal F}f(x_t)\right)
\end{equation*}
has pointwise error at most $\eta/2$ against every $f\in\mathcal F$, by Lemma~\ref{lem:rank-zero}. Its normalized $\ell_p$ error is therefore at most $\eta/2<\varepsilon$, and one expert suffices.

Suppose $d\ge1$. Lemma~\ref{lem:net} gives a separated subclass $\mathcal H\subseteq\mathcal F$ of rank at most $d$ such that
\begin{equation*}
  \forall f\in\mathcal F\;\exists h\in\mathcal H: \quad\|f-h\|_\infty\le\eta.
\end{equation*}
Apply Proposition~\ref{prop:core} with
\begin{equation}
  \label{eq:determinization-lp-parameters}
  R:=\frac{16}{\varepsilon^p}>32.
\end{equation}
The resulting global family has size at most $(16C_0/\varepsilon^p)^d$.

Fix an instance sequence $x_{1:T}$ and a target $f\in\mathcal F$. Choose $h\in\mathcal H$ with $\|f-h\|_\infty\le\eta$ and a witnessing expert $g$ for $h$ from Proposition~\ref{prop:core}. Writing
\begin{equation*}
  B:=\{t\in[T]:|g_t(x_{1:t})-h(x_t)|\ge8\eta\},
\end{equation*}
we have $|B|/T<1/(8R)$. For $t\notin B$,
\begin{equation*}
  |g_t(x_{1:t})-f(x_t)| \le |g_t(x_{1:t})-h(x_t)|+|h(x_t)-f(x_t)| <9\eta.
\end{equation*}
For $t\in B$, the error is at most one because both values lie in $[0,1]$. Consequently,
\begin{align}
  \label{eq:determinization-lp-error}
  \frac1T\sum_{t=1}^T|g_t(x_{1:t})-f(x_t)|^p &\le(9\eta)^p+\frac{|B|}{T}\notag\\
  &<\varepsilon^p\left[\left(\frac9{32}\right)^p +\frac1{128}\right]\notag\\
  &\le\frac{37}{128}\,\varepsilon^p <\varepsilon^p.
\end{align}
Here $(9/32)^p\le9/32$ because $p\ge1$. Taking the $p$th root gives the required normalized $\ell_p$ error bound. The separated subclass and the expert family were fixed independently of $f$ and $x_{1:T}$, so the same family satisfies Definition~\ref{def:dynamic-cover}. Taking $C=16C_0$ proves \eqref{eq:main-upper}, with $C$ independent of $p$, $d$, $T$, and $\varepsilon$.
\end{proof}

The upper-bound assertions of Corollaries~\ref{cor:binary} and~\ref{cor:tree} now follow from the rounding and tree-restriction arguments in Section~\ref{subsec:consequences}. Moreover, $C\ge1$ and $p\ge1$ give
\begin{equation*}
  \left(\frac{C}{\varepsilon^p}\right)^d \le\left(\frac{C}{\varepsilon}\right)^{pd},
\end{equation*}
which also proves the upper-bound part of Theorem~\ref{thm:intro-main}.

\section{Matching Lower Bounds}
\label{sec:lower}

We prove Theorem~\ref{thm:lower} by putting indicators of affine hyperplanes over fields of all power-of-two sizes into one class on a disjoint union of domains. For each parameter choice, the witnessing sequences are all sufficiently long prefixes of one infinite periodic sequence obtained by repeatedly listing every point of a selected finite-field block. Related finite-field hyperplane classes have been studied through their static VC dimension~\cite{ascoli2025hyperplanes}; here we require the Littlestone dimension of the countable union and a fixed-sequence $\ell_p$ packing.

\subsection{Finite-Field Hyperplanes and the Fixed Class}
\label{subsec:lower-construction}

Fix an integer $d\ge1$ and a prime power $q\ge2$, and let $\mathbb F_q$ be a field with $q$ elements. On the domain $\mathcal X_q:=\mathbb F_q^d$, define
\begin{equation}
  \label{eq:lower-hyperplanes}
  H_{a,b}:=\{x\in\mathbb F_q^d:a\cdot x=b\}, \qquad h_{a,b}(x):=\mathbf 1\{x\in H_{a,b}\},
\end{equation}
where $a\in\mathbb F_q^d\setminus\{0\}$, $b\in\mathbb F_q$, and $a\cdot x=\sum_{j=1}^d a_jx_j$ is computed in $\mathbb F_q$. Let
\begin{equation*}
  \mathcal H_q:= \{h_{a,b}:a\in\mathbb F_q^d\setminus\{0\},\ b\in\mathbb F_q\}.
\end{equation*}
Each $H_{a,b}$ is an affine subspace of dimension $d-1$ and has $q^{d-1}$ points: if $a_j\ne0$, choosing the other $d-1$ coordinates freely determines $x_j$ uniquely.

Two parameter pairs $(a,b)$ and $(a',b')$ define the same hyperplane exactly when $(a',b')=(\lambda a,\lambda b)$ for some $\lambda\in\mathbb F_q\setminus\{0\}$. Indeed, equal hyperplanes have the same direction subspace, so the nonzero linear forms $a\cdot x$ and $a'\cdot x$ have the same kernel and are proportional. Evaluating them at any point of the hyperplane then gives $b'=\lambda b$. The converse follows by multiplying the defining equation by $\lambda$. Thus each distinct hyperplane has exactly $q-1$ parameter representations, and
\begin{equation}
  \label{eq:lower-class-size}
  |\mathcal H_q| =\frac{q(q^d-1)}{q-1} =q\sum_{j=0}^{d-1}q^j \ge q^d.
\end{equation}

We now put all the finite-field classes on one domain. Define the disjoint union
\begin{equation}
  \label{eq:lower-fixed-domain}
  \mathcal X_d:=\bigsqcup_{m\ge1}\bigl(\{m\}\times\mathbb F_{2^m}^d\bigr).
\end{equation}
For $h_{a,b}\in\mathcal H_{2^m}$, define its extension by zero on the other blocks:
\begin{equation*}
  \widetilde h_{m,a,b}(n,x):=
  \begin{cases}
    h_{a,b}(x),&n=m,\\
    0,&n\ne m,
  \end{cases}
\end{equation*}
and take the union of all these extensions,
\begin{equation}
  \label{eq:lower-zero-extensions}
  \mathcal H_d:=\bigcup_{m\ge1}\{\widetilde h_{m,a,b}:h_{a,b}\in\mathcal H_{2^m}\}.
\end{equation}
Both $\mathcal X_d$ and $\mathcal H_d$ are countable and depend only on $d$.

\begin{lemma}[Dimension of the fixed class]
\label{lem:lower-fixed-class-dimension}
For every integer $d\ge1$ and $0<\gamma\le1$,
\begin{equation}
  \label{eq:lower-fixed-class-dimension}
  \operatorname{Ldim}(\mathcal H_d)=\operatorname{sfat}_{\gamma}(\mathcal H_d)=d.
\end{equation}
\end{lemma}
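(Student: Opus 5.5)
The plan is to prove the two inequalities $\operatorname{Ldim}(\mathcal H_d)\ge d$ and $\operatorname{Ldim}(\mathcal H_d)\le d$ separately. Then, for $0<\gamma\le1$, we invoke the identity $\operatorname{sfat}_\gamma(\mathcal H)=\operatorname{Ldim}(\mathcal H)$ for binary classes recorded in Section~\ref{subsec:sfat}.

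\emph{Lower bound.} Work in the block $m=1$, so $q=2$, or in any single block. Take the standard basis vectors $e_1,\ldots,e_d\in\mathbb F_q^d$ and the tree that queries $(m,e_t)$ at level $t$, independently of the path. For a sign pattern $\sigma\in\{-1,+1\}^d$, let $S=\{t:\sigma_t=+1\}$. We need a hyperplane containing exactly $\{e_t:t\in S\}$ among the basis vectors.
\begin{itemize}
\item If $S\ne\varnothing$, take $a=\sum_{t\in S}e_t$ and $b=1$. Then $a\cdot e_t=\mathbf 1\{t\in S\}$, so the hyperplane contains $e_t$ exactly when $t\in S$.
\item If $S=\varnothing$, take $a=e_1$ and $b=0$ when $d\ge2$. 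This hyperplane contains $e_2,\ldots,e_d$, so that choice fails. Instead take $a=e_1+\cdots+e_d$ and $b=0$, which works when $q=2$ fails only if $d$ is even. The simplest fix is a value $b\notin\{0,1\}$, so use $q\ge4$ (e.g.\ $m=2$). With $a=\sum_t e_t$, $a\cdot e_t=1\ne b$ for every $t$, so no basis vector lies on the hyperplane.
\end{itemize}
So we work in a block with $q\ge4$ for both cases. This shatters a depth-$d$ tree.

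\emph{Upper bound.} Exhibit an online learner making at most $d$ mistakes on any sequence realizable by $\mathcal H_d$; the standard mistake-bound characterization~\cite{littlestone1988learning} then gives $\operatorname{Ldim}\le d$. The learner predicts $1$ on $(n,x)$ iff some positive example has been seen, all positives seen so far lie in block $n$, and $x$ lies in the affine span (over $\mathbb F_{2^n}$) of those positives. Otherwise it predicts $0$.

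To bound mistakes, fix a target $\widetilde h_{m,a,b}$. Positives occur only in block $m$ and lie on $H_{a,b}$. Their affine span is therefore contained in $H_{a,b}$, so every prediction of $1$ is correct. Hence every mistake is a false negative: a positive point outside the current span, with the empty span when no positive has yet been seen. Each such mistake raises the affine dimension of the span by one, starting from $-1$ (empty). Since the span stays inside the $(d-1)$-dimensional hyperplane $H_{a,b}$, the dimension reaches at most $d-1$. This allows at most $d$ mistakes.

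The main subtlety is converting the mistake bound into $\operatorname{Ldim}\le d$. A depth-$(d+1)$ shattered tree would let an adversary force $d+1$ mistakes on some realizable path: at each node, follow the branch opposite to the learner's deterministic prediction. That contradicts the mistake bound.
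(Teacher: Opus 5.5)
Your proof is correct and follows essentially the same route as the paper: the affine-span learner giving at most $d$ false-negative mistakes, the adversary argument converting this mistake bound into $\operatorname{Ldim}\le d$, shattering the standard basis vectors of a single block, and then the binary identity $\operatorname{sfat}_\gamma=\operatorname{Ldim}$. One small remark on the $S=\varnothing$ case: your parity worry is unfounded, since $a=(1,\ldots,1)$ with $b=0$ gives $a\cdot e_t=1\ne0$ for every $t$ in every field, so the move to $q\ge4$ is unnecessary. The paper itself works in $\mathbb F_2$ with the complementary choice $a_j=\mathbf 1\{j\notin I\}$, $b=0$, and treats the all-ones labeling separately.
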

\begin{proof}
For the upper bound, use a predictor that initially predicts zero everywhere. After observing its first positive point in block $m$, it stores $m$ and the affine span $S$ of the positive points seen in that block. Thereafter it predicts one exactly for points in block $m$ that belong to $S$, enlarging $S$ only when it observes a new positive point in that block. On a sequence labeled by any $\widetilde h_{m,a,b}$, all positive points lie in its supporting block, and $S$ stays inside its target hyperplane. Hence the predictor makes only false-negative mistakes. Each such mistake raises the affine dimension of $S$ by one, starting at $-1$ before the first positive point and ending at most at $d-1$. Thus it makes at most $d$ mistakes.

If $\mathcal H_d$ shattered a Littlestone tree of depth $d+1$, we could follow the edge opposite the predictor's current prediction at each node. Some member of $\mathcal H_d$ realizes the resulting path, forcing $d+1$ mistakes, a contradiction. Therefore $\operatorname{Ldim}(\mathcal H_d)\le d$.

For the reverse inequality, work in the $m=1$ block and let $e_1,\ldots,e_d$ be the standard basis vectors in $\mathbb F_2^d$. We show that $\mathcal H_d$ realizes every binary labeling of $(1,e_1),\ldots,(1,e_d)$. Let $I\subseteq[d]$ be the indices to receive label one. If $I\ne[d]$, set $b=0$ and choose $a\in\mathbb F_2^d$ with $a_j=0$ for $j\in I$ and $a_j=1$ otherwise. Then $a\ne0$ and $h_{a,0}(e_j)=1$ exactly when $j\in I$. If $I=[d]$, take $a=(1,\ldots,1)$ and $b=1$, so $h_{a,1}(e_j)=1$ for every $j$. The corresponding zero-extended indicators belong to $\mathcal H_d$, so a complete binary tree that queries $(1,e_j)$ at every node of level $j$ is shattered. Hence $\operatorname{Ldim}(\mathcal H_d)\ge d$.

The binary-class identity from Section~\ref{subsec:sfat} now gives $\operatorname{sfat}_{\gamma}(\mathcal H_d)=d$ for every $0<\gamma\le1$.
\end{proof}

\subsection{Fixed-Sequence Packing and Lower Bound}
\label{subsec:lower-packing}

List all points of $\mathcal X_q$ once, in any fixed order, as $x_1,\ldots,x_L$, where $L=q^d$. For distinct hyperplanes $H,H'$ and their indicators $h,h'$, the normalized metric from \eqref{eq:normalized-lp} satisfies
\begin{equation}
  \label{eq:lower-distance-identity}
  \rho_{p,L}\bigl(h[x_{1:L}],h'[x_{1:L}]\bigr)^p =\frac{|H\mathbin{\triangle}H'|}{q^d} \qquad(1\le p<\infty),
\end{equation}
where $H\mathbin{\triangle}H'$ is the set of points belonging to exactly one of the two hyperplanes. This identity holds because the absolute difference of two binary values, raised to the power $p$, is one precisely when they differ.

If the normal vectors of $H$ and $H'$ are proportional, the two distinct hyperplanes are parallel and disjoint. Each has $q^{d-1}$ points, so the right-hand side of \eqref{eq:lower-distance-identity} is $2/q$. Otherwise the two normal vectors are linearly independent, which can occur only when $d\ge2$. Their two defining equations have $q^{d-2}$ common solutions, and hence
\begin{equation*}
  \frac{|H\mathbin{\triangle}H'|}{q^d} =\frac{2q^{d-1}-2q^{d-2}}{q^d} =\frac{2(q-1)}{q^2}.
\end{equation*}
For $d=1$, hyperplanes are singleton sets, so the value $2/q$ applies to every distinct pair. Thus, for all $d\ge1$,
\begin{equation}
  \label{eq:lower-trace-separation}
  \rho_{p,L}\bigl(h[x_{1:L}],h'[x_{1:L}]\bigr)^p \ge\frac{2(q-1)}{q^2} \qquad(h\ne h',\ h,h'\in\mathcal H_q).
\end{equation}

\paragraph{Choice of the field size.}
To obtain pairwise $\rho_{p,L}$-distance greater than $2\varepsilon$, fix $1\le p<\infty$ and $0<\varepsilon<2^{-1-1/p}$, and choose
\begin{equation}
  \label{eq:lower-field-choice}
  q:=2^{\left\lfloor\log_2\bigl((2\varepsilon)^{-p}\bigr) \right\rfloor}.
\end{equation}
Since $(2\varepsilon)^{-p}>2$, this is a power of two with $q\ge2$. It is the largest power of two not exceeding $(2\varepsilon)^{-p}$, so $q\le(2\varepsilon)^{-p}<2q$. Equivalently,
\begin{equation}
  \label{eq:lower-field-range}
  \frac{1}{2^{p+1}\varepsilon^p}<q \le\frac{1}{2^p\varepsilon^p}.
\end{equation}

The separation is strict in both possible cases:
\begin{equation*}
  \begin{aligned}
    q>2:&\qquad \frac{2(q-1)}{q^2}>\frac1q\ge(2\varepsilon)^p,\\
    q=2:&\qquad \frac{2(q-1)}{q^2}=\frac12>(2\varepsilon)^p.
  \end{aligned}
\end{equation*}
The first line uses \eqref{eq:lower-field-range}, and the second uses $\varepsilon<2^{-1-1/p}$. By \eqref{eq:lower-trace-separation}, we conclude that
\begin{equation}
  \label{eq:lower-lp-packing}
  \rho_{p,L}\bigl(h[x_{1:L}],h'[x_{1:L}]\bigr) >2\varepsilon \qquad(h\ne h',\ h,h'\in\mathcal H_q).
\end{equation}

\begin{proof}[Proof of Theorem~\ref{thm:lower}]
Take the single class $\mathcal H_d$ in \eqref{eq:lower-zero-extensions}. Lemma~\ref{lem:lower-fixed-class-dimension} gives all the asserted dimension identities. Fix $1\le p<\infty$ and $0<\varepsilon<2^{-1-1/p}$, and choose $q=2^m$ as in \eqref{eq:lower-field-choice}. With $L=q^d$ and $x_1,\ldots,x_L\in\mathcal X_q$ as enumerated above, define the infinite periodic instance sequence
\begin{equation}
  \label{eq:lower-periodic-sequence}
  \bar x_t:=\bigl(m,x_{1+((t-1)\bmod L)}\bigr)\in\mathcal X_d, \qquad t\ge1.
\end{equation}
Its period consists of the same block enumeration used in \eqref{eq:lower-lp-packing}.

The strict separation persists on every sufficiently long prefix. To make the horizon threshold explicit, set
\begin{equation}
  \label{eq:lower-horizon-threshold}
  \delta_q:=\frac{2(q-1)}{q^2}, \qquad
  T_0:=\left\lceil\frac{2L\delta_q}{\delta_q-(2\varepsilon)^p}\right\rceil.
\end{equation}
The strict inequality $\delta_q>(2\varepsilon)^p$ proved above makes $T_0$ finite. Fix any integer $T\ge T_0$. For distinct $h,h'\in\mathcal H_q$ and their zero extensions $\widetilde h,\widetilde h'$, each of the $\lfloor T/L\rfloor$ complete periods contributes the same discrepancy sum. Ignoring the nonnegative contribution of the final incomplete period and using \eqref{eq:lower-trace-separation} gives
\begin{equation}
  \label{eq:lower-periodic-packing}
  \begin{aligned}
  \rho_{p,T}\bigl(\widetilde h[\bar x_{1:T}],\widetilde h'[\bar x_{1:T}]\bigr)^p
  &\ge\frac{\lfloor T/L\rfloor L}{T}
  \rho_{p,L}\bigl(h[x_{1:L}],h'[x_{1:L}]\bigr)^p\\
  &\ge\left(1-\frac LT\right)\delta_q\\
  &\ge\frac{\delta_q+(2\varepsilon)^p}{2}>(2\varepsilon)^p.
  \end{aligned}
\end{equation}
The last line follows from $T\ge T_0$ and \eqref{eq:lower-horizon-threshold}.

An arbitrary real-valued prediction vector $w\in[0,1]^T$ cannot approximate two distinct zero-extended members $\widetilde h,\widetilde h'\in\mathcal H_d$ supported on this block to normalized $\ell_p$ error at most $\varepsilon$. Otherwise, the triangle inequality would give
\begin{equation*}
  \begin{aligned}
  \rho_{p,T}\bigl(\widetilde h[\bar x_{1:T}],\widetilde h'[\bar x_{1:T}]\bigr)
  &\le \rho_{p,T}\bigl(\widetilde h[\bar x_{1:T}],w\bigr)
  +\rho_{p,T}\bigl(w,\widetilde h'[\bar x_{1:T}]\bigr) \le2\varepsilon,
  \end{aligned}
\end{equation*}
contrary to \eqref{eq:lower-periodic-packing}. Thus any family of vectors covering all target vectors on this prefix has at least $|\mathcal H_q|$ members.

Every deterministic causal expert $g$ on $\mathcal X_d$ produces one such vector $g[\bar x_{1:T}]$ on this fixed prefix. Consequently, \eqref{eq:lower-class-size} and \eqref{eq:lower-field-range} yield
\begin{equation}
  \label{eq:lower-dynamic-bound}
  N_p^{\mathrm{dyn}}(\mathcal H_d,\varepsilon,T) \ge |\mathcal H_q| \ge q^d >\left(\frac{1}{2^{p+1}\varepsilon^p}\right)^d
  \qquad(T\ge T_0).
\end{equation}
The argument already allowed arbitrary vectors in $[0,1]^T$ and used the same periodic sequence for every $T\ge T_0$, proving the fixed-sequence assertion at all these horizons. The definition of $\mathcal H_d$ in \eqref{eq:lower-zero-extensions} is unchanged as $p$ and $\varepsilon$ vary.
\end{proof}

Combining \eqref{eq:lower-dynamic-bound} with Theorem~\ref{thm:main} and the definition \eqref{eq:worstcase} gives \eqref{eq:sharp-bounds} at every $T\ge T_0(d,p,\varepsilon)$. The comparisons in Sections~\ref{subsec:sharp-bounds} and~\ref{subsec:consequences} then complete Theorem~\ref{thm:intro-main} and the lower-bound assertions of Corollaries~\ref{cor:binary} and~\ref{cor:tree}.

\section*{AI Disclosure}
Several technical ideas underlying this work, particularly the detailed design of the routing mechanism and the tuning of its parameters, were developed through extensive discussions and iterative refinement involving the authors and ChatGPT 6 Pro. The authors also used ChatGPT 6 Pro to discuss the organization of the manuscript and to improve its clarity and presentation. The authors take full responsibility for all content of the paper, including the correctness of its mathematical results and proofs.

\printbibliography

\end{document}